\documentclass{article}

\usepackage{PRIMEarxiv}
\usepackage{authblk}
\usepackage[utf8]{inputenc} 
\usepackage[T1]{fontenc}    
\usepackage{hyperref}       
\usepackage{url}            
\usepackage{booktabs}       
\usepackage{amsfonts}       
\usepackage{nicefrac}       
\usepackage{microtype}      
\usepackage{lipsum}
\usepackage{fancyhdr}       
\usepackage{graphicx}       
\graphicspath{{media/}}     

\usepackage{xcolor}         
\usepackage{array}
\usepackage{subcaption}
\usepackage{listings}
\usepackage{float}
\newfloat{listing}{htbp}{lol}
\floatname{listing}{Listing}
\usepackage{caption}
\usepackage{algorithm}
\usepackage{algorithmic}
\usepackage[capitalize,noabbrev]{cleveref}
\usepackage{natbib}
\title{Hopformer: Homogeneity-Pursuit Transformer for Time Series Forecasting} 

\author[1]{Wan Zhang}
\author[2]{Qinjie Lin}
\author[3]{Chan Lee}
\author[2]{Weijian Li}
\author[2]{Han Liu}
\author[4]{Kai Zhang}

\affil[1]{The AMSS Center of Forecasting Science, Chinese Academy of Sciences, Beijing, China}
\affil[2]{Department of Computer Science, Northwestern University, Evanston, IL}
\affil[3]{Department of Statistics and Data Science, Northwestern University, Evanston, IL}
\affil[4]{Department of Statistics and Operations Research, University of North Carolina, Chapel Hill, NC}

\date{} 

\begin{document}
\maketitle

\begin{abstract}
 Forecasting multiple time-series with high-dimensional covariates presents a core challenge: unifying common temporal patterns while retaining meaningful series-specific information. We introduce Hopformer ({\bf Ho}mogeneity-{\bf P}ursuit Trans{\bf former}), a two-stage framework that addresses this challenge. In the first stage, we perform a Sparsity Pattern Aggregation (SPA) scheme extracting a common low-variance trend that incorporates the covariates. This acts as a homogenization layer. In the second stage, a LoRA-fine-tuned Transformer models the remaining complex dependencies in the residual. Our method is theoretically grounded. We prove that SPA achieves a near-optimal bias-variance trade-off via an oracle inequality. We also provide generalization bounds for the second stage under dependent time series data. Hopformer sets a new state of the art, improving MASE by an average of 6.56\% across synthetic and real-world forecasting benchmarks.
\end{abstract}


\section{Introduction}
\vspace{-0.03in}
Deep learning, particularly Transformer architectures, now sets the state of the art in time series forecasting, capturing complex nonlinear dynamics that challenge traditional models~\citep{zhou2021informer,li2019enhancing,zhou2022fedformer,zhang2023crossformer,wu2021autoformer}. Yet, these models often struggle when predictions depend on high-dimensional external covariates such as economic indicators or sensor arrays, as traditional attention or embedding layers in Transformers become computationally expensive or unstable. The newest generation of universal forecasting models only deepens this problem~\citep{gerald2024moirai,liu2024moirai,ansari2024chronoslearninglanguagetime,liu2024timer,das2024decoder,nie2022time}. While achieving impressive zero-shot generalization, they typically sidestep the covariate challenge due to architectural or scalability constraints. For instance, MOIRAI~\citep{gerald2024moirai} supports covariates via its any-variate attention mechanism but has limitations for high-dimensional covariates, while Chronos~\citep{ansari2024chronoslearninglanguagetime} discards them entirely. This leaves a critical gap: How can we harness high-dimensional covariates to improve forecasting, without compromising the scalability and power of modern pretrained backbones?

To bridge this gap, we propose Hopformer (\textbf{Ho}mogeneity-\textbf{P}ursuit Trans\textbf{former}), a novel two-stage forecasting framework designed to unify common temporal dynamics across high-dimensional, multi-source time series. Our central insight is to decompose the forecasting task into two complementary modules—(1) deterministic trend extraction that captures low-frequency, covariate-driven structure, and (2) residual modeling that handles nonlinear and long-range temporal dependencies. This separation enables Hopformer to integrate high-dimensional covariates while remaining modular and scalable, and to improve foundation models with minimal re-training. The two-stage design is supported by theoretical guarantees for both components. In the first stage, Hopformer extracts shared trend signals from covariates using a pool of cross-sectional regression experts, including linear, tree-based, and neural models. These experts are adaptively aggregated using \textit{Sparsity Pattern Aggregation} (SPA)~\citep{rigollet2011spa}, a convex model combination scheme that balances empirical risk with model complexity. This stage serves as a homogeneity-pursuit interface: it aligns heterogeneous series by projecting high-dimensional covariates into a lower-dimensional shared trend space. We provide an oracle inequality for this estimator, showing that SPA yields near-optimal bias-variance trade-off under mild sparsity assumptions. In the second stage, Hopformer models residuals using a pre-trained Transformer fine-tuned via \textit{Low-Rank Adaptation} (LoRA)~\citep{hu2022lora}, a parameter-efficient fine tuning (PEFT) method. {We adopt LoRA not only for its empirical efficiency, but also because it enables generalization guarantees under time series dependence: building on information-theoretic arguments, our theory shows that PEFT reduces the mutual information between model and data.} Our experiments show that it outperforms both zero-shot and full fine-tuning. The second stage is modular and can be replaced with other models depending on application needs.

This work addresses a fundamental challenge in time series forecasting: how to model high-dimensional covariates while capturing both structural trends and dynamic residuals. We propose a unified two-stage framework that first extracts trends by aligning covariates and outcomes into a common representation space via expert-based regression, and then models residual using a fine-tuned Transformer. Our method introduces a sparsity pattern aggregation strategy for trend learning and provides theoretical guarantees at both stages: an oracle inequality showing SPA achieves near-optimal predictive error relative to the best expert subset, and generalization bounds for LoRA-based residual modeling under dependent data, offering insight into general PEFT methods for foundation models.

\vspace{-0.08in}
\section{Related Work}
\vspace{-0.05in}
\paragraph{Universal Time Series Forecasting}
Traditional forecasting methods often adopt a one-model-per-dataset paradigm, limiting scalability across heterogeneous time series~\citep{wu2023timesnettemporal2dvariationmodeling, nie2023multiv, liu2024itransformerinvertedtransformerseffective}. To address this, recent advances leverage generative pretraining and prompt-based Transformers~\citep{cao2024tempopromptbasedgenerativepretrained, xue2023promptcastnewpromptbasedlearning, ekambaram2024tinytimemixersttms, jin2024timellmtimeseriesforecasting}, but often require custom modules for each task. In contrast, universal forecasting frameworks aim for broad generalization across tasks. SimpleTS~\citep{yao2023simplets} selects optimal models via time series type classification, while MOIRAI~\citep{gerald2024moirai} and UniTS~\citep{gao2024units} integrate predictive and generative capabilities via enhanced Transformer encoders. FlexTSF~\citep{xiao2024flextsf} further extends universality to irregular time series. These methods underscore a growing interest in general-purpose time series forecasting across diverse real-world domains.

\vspace{-0.08in}
\paragraph{Expert Aggregation and Sparse Ensembles}
Combining diverse regressors is a time-tested strategy to improve robustness in forecasting. Classical approaches include stacking~\citep{Godahewa_2023} and mixture-of-experts (MoE) models~\citep{jordan1991moe}, which dynamically assign weights to base learners. N-BEATS-MOE~\citep{matos2025nbeatsmoe} extends this to temporal settings, improving adaptation to heterogeneous dynamics. On the theoretical side, sparsity pattern aggregation (SPA)~\citep{rigollet2011spa} offers a principled way to combine experts via exponential weighting, balancing empirical risk and model complexity. Follow-up work explores PAC-Bayesian guarantees~\citep{dalalyan2008aggregation} and affine estimator ensembles~\citep{dai2014aggregation}. Our work is the first to embed SPA into a two-stage modeling pipeline, serving as a trend extractor before residual modeling.

\vspace{-0.08in}
\paragraph{Parameter-Efficient Fine-Tuning for Time Series}
Parameter-Efficient Fine-Tuning (PEFT) methods have gained popularity for adapting large pretrained models with minimal computational overhead. Techniques such as Low-Rank Adaptation (LoRA)~\citep{hu2022lora}, adapters~\citep{houlsby2019parameter}, and prompt tuning~\citep{lester2021power} enable efficient fine-tuning by updating a small subset of parameters. These approaches have been successfully applied in time series domains~\citep{gupta2024beyond, gupta2024lowrankadaptationtimeseries, rasul2024lagllamafoundationmodelsprobabilistic, ansari2024chronoslearninglanguagetime}, demonstrating generalization across modalities and domains. In this work, we adopt LoRA as a representative PEFT technique due to its simplicity, efficiency and compatibility. Our theoretical results provide generalization guarantees for this residual modeling stage, which can be extended to broader PEFT methods and forecasting architectures.

\vspace{-0.08in}
\section{Method}
\vspace{-0.03in}
\paragraph{Problem Formulation} We consider a dataset of $N$ time series $\mathcal{D} = \{ \Xb^{(i)} \}_{i=1}^{N}$, where each time series $\Xb^{(i)} = (X_1^{(i)}, X_2^{(i)}, \dots, X_{T_i}^{(i)}) \in \mathbb{R}^{T_i \times D}$ consists of $T_i$ time steps. Each time series shares a common set of covariates $\Zb^{(i)} = (Z_1^{(i)}, Z_2^{(i)}, \dots, Z_{T_i}^{(i)}) \in \RR^{T_i \times d}$, where each column represents the same set of features including lagged values, seasonal terms, or trend indicators across all series at time $t$. While the target values $\Xb^{(i)}$ vary across time series, the associated predictors are constructed using a shared feature design, enabling consistent modeling and aggregation across heterogeneous time series. The objective is to develop a forecaster $F$ such that $\Xb_{t:t+h}^{(i)} = F(\Zb^{(i)}_{t-l:t+h})$ for each $i$ and $t$, aiming to generalize across both short-term and long-term patterns. To address this, we adopt a two-stage framework designed to separate deterministic trends from residual dynamics. In the first stage, we use multiple expert models $\{g_j\}_{j=1}^{M}$ to extract trend components via cross-sectional regression: {$X_{t}^{(i), \text{trend}} = \sum_{j=1}^{M} \omega_{j}g_{j}(\Zb_t^{(i)}) + \epsilon_t^{(i)}$}, where $\omega_j$ are aggregation weights learned through a sparsity pattern aggregation (SPA) scheme. This shared aggregation mechanism enables homogeneity pursuit across diverse time series while retaining adaptivity. In the second stage, we use a LoRA-fine-tuned Transformer model to further approximate the residuals $\epsilon_t^{(i)}$ with $\hat{R}_{t:t+h}^{(i)} = f_{\mathbf{W}}^{\text{LoRA}}(R_{t-l:t}^{(i)})$. While LoRA is not strictly necessary for the second stage, it aligns with our theoretical analysis on generalization bounds under dependent time series data and can be easily extended to other PEFT methods. We finally aggregate two stages of the prediction as $\hat{X}_{t:t+h}^{(i)} = \hat{X}_{t:t+h}^{(i),\text{trend}} + \hat{R}_{t:t+h}^{(i)}$.

Universal time series forecasting remains challenging due to significant heterogeneity in temporal patterns, covariate structures, and data quality across datasets. Although some methods explicitly incorporate covariates to improve generalization, such features are often sparse or inconsistently available, limiting their practical benefit. Instead of relying on manual feature alignment or domain-specific preprocessing, we propose a unified trend extraction strategy based on expert aggregation. Specifically, we construct a diverse pool of regression models—each capturing different structural aspects such as trend, seasonality, or autoregression—and combine them through a sparsity pattern aggregation (SPA) scheme. This approach acts as a homogeneity pursuit mechanism: by distilling shared deterministic signals from heterogeneous series, we project them into a common residual space. {We emphasize that Hopformer is trained sequentially: Stage 1 estimates covariate-driven components and produces residuals, after which Stage 2 is trained on these residuals. This design prevents interference between covariate modeling and temporal pattern learning.} This decomposition confers several benefits: (1) it enables flexible trend modeling without assuming a fixed parametric form; (2) it reduces the need for hand-engineered covariates or alignment heuristics; and (3) it provides a interface for residual learning, improving the robustness of downstream models.

\vspace{-0.09in}
\begin{figure}[ht]
    \centering
    \includegraphics[width=0.95\linewidth]{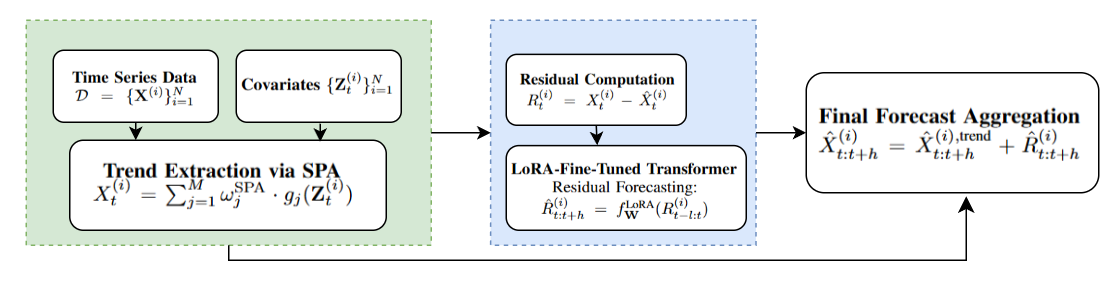}
    \vspace{-0.05in}
    \caption{
        Overview of the Hopformer workflow. In Stage I, SPA extracts a low-variance common trend from covariates. 
        In Stage II, a LoRA-fine-tuned Transformer models the residuals signal.
    }
    \label{fig:hop_workflow}
\end{figure}

\vspace{-0.16in}
\subsection{Cross-Sectional SPA Regression Model}\label{sent:lse}
\vspace{-0.03in}
To extract the trend, we construct an ensemble of cross-sectional regression models ${g_j}_{j=1}^M$, each trained to capture different structural aspects of the time series using a shared covariate design. The aggregated trend estimate is given by a weighted combination:
$
    {X}_t^{(i)} = \sum_{j=1}^{M} \omega^{\mathrm{SPA}}_j \cdot g_j(\Zb_t^{(i)}),
$
where $\Zb_t^{(i)} \in \RR^d$ denotes the covariates for time series 
$i$ at time $t$. Common features include lagged values, seasonal indicators, linear trend terms, and external covariates. For example, for a retail sales series, $\Zb_t$ might include: a linear trend term ($t$); weekly seasonality terms; a binary promotion indicator $\text{Promo}t \in \{0,1\}$; and a lagged value $X_{t-7}$ to account for weekly autocorrelation. To assign weights $\omega^{\mathrm{SPA}} = (\omega_1^{\mathrm{SPA}}, \dots, \omega_M^{\mathrm{SPA}})$, we employ \emph{sparsity pattern aggregate} (SPA), which adaptively combines models based on empirical performance. While the optimal weights technically depend on both $t$ and $i$, we omit these indices for notational simplicity in what follows across this paper.

We define a sparsity pattern as a binary vector \( \bp \in \mathcal{P} := \{0,1\}^M \), where \( p_j = 1 \) indicates inclusion of model $g_j$ and \( p_j = 0 \) its exclusion. This pattern determines a restricted subspace:
$
\mathbb{R}^{\bp} = \{ \omega \cdot \bp : \omega \in \mathbb{R}^M \} \subseteq \mathbb{R}^M,
$
where \( \cdot \) denotes the Hadamard product. The dimensionality of this subspace is denoted by \( |\bp| \), the number of nonzero components in the pattern \( \bp \).
To formalize the SPA construction, we consider the sample version of the workflow in Figure~\ref{fig:hop_workflow}. For each time series \( i \) and time point \( t \), we assume access to an i.i.d. sample of size \( n \) drawn from the underlying conditional distribution. Specifically, let \( \{(X_{t,k}^{(i)}, \Zb_{t,k}^{(i)})\}_{k=1}^n \) denote the observed realizations of the response and corresponding predictor vectors, where \( \Zb_{t,k}^{(i)} \in \mathbb{R}^d \) is the feature vector and \( X_{t,k}^{(i)} \in \mathbb{R} \) is the target time series. Denote the sample version of target time series and the matrix of features as $\Xb$ and $\Zb$. For any \( \bp \in \mathcal{P} \), let \( \hat{\omega}_{\bp} \) be the least squares estimator restricted to \( \mathbb{R}^{\bp} \), defined by $\hat{\omega}_{\bp} \in \arg\min_{\pi \in \mathbb{R}^{\bp}} \left\| \Xb - \mathbf{Z} \pi \right\|_2^2$. Let \( \pi = (\pi_{\bp})_{\bp \in \mathcal{P}} \) be a probability measure over the collection of sparsity patterns \( \mathcal{P} \), which referred as a prior over expert selections. The SPA weights are defined as:
\vspace{-0.04in}
\begin{equation}\label{eq:spa}
\begin{aligned}
\omega^{\mathrm{SPA}}
&:= \frac{\sum_{\bp\in\mathcal P} \hat\omega_{\bp}\, w_{\bp}\,\pi_{\bp}}
          {\sum_{\bp\in\mathcal P} w_{\bp}\,\pi_{\bp}},\\
w_{\bp}
&:= \exp\!\left(
-\frac{1}{4\sigma^2}\sum_{k=1}^{n}\!\Big(X_{t,k}^{(i)}-g_{\hat\omega_{\bp}}(\Zb_{t,k}^{(i)})\Big)^2
-\frac{|\bp|}{2}
\right).
\end{aligned}
\end{equation}

\vspace{-0.06in}

To instantiate the SPA weights, we adopt the following prior over the space of patterns \( \bp \in \mathcal{P} \) as in \citet{rigollet2011spa}: $\pi_{\bp} := \frac{1}{H} \left( \frac{|\bp|}{2eM} \right)^{|\bp|}$ if $|\bp|<R$; $\frac{1}{2}$ if $|\bp|=M$; and $0$ otherwise. Here, \( R = \mathrm{rk}(\mathbf{Z}) \) is the rank of the design matrix, and \( H = 2 \sum_{m=0}^{R} \binom{M}{m} \left( \frac{m}{2eM} \right)^m \) is a normalization constant. This prior favors sparse patterns while ensuring that the full model retains sufficient mass for statistical guarantees and approximation stability. Following \citet{rigollet2011spa}, the SPA estimator under this prior can be efficiently implemented using a Metropolis approximation. This formulation allows Hopformer to adaptively emphasize simpler, performant trend models, serving as a learnable interface for residual modeling in the second stage. The residual sequence is then
$R_t^{(i)} = X_t^{(i)} - \hat{X}_t^{(i)},$
contain long-range variation to be modeled in the second stage.

\vspace{-0.08in}
\subsection{Fine-Tuning Transformer for Residual Forecasting}
\vspace{-0.03in}
The residual sequence $R_t^{(i)}$ obtained from the trend extraction stage often contains complex, nonlinear, and long-range temporal dependencies that can be hardly captured by the cross-sectional regression ensemble. To model these patterns, we employ a Transformer architecture, which has demonstrated flexibility in modeling time series data due to its self-attention mechanism.
For a given residual input $R_{t-l:t}^{(i)}$, the Transformer models temporal dependencies by applying multi-head self-attention over learned representations. Specifically, the attention mechanism operates via query, key, and value projections:
$\text{Attn}(\mathbf{xW}_q, \mathbf{CW}_k, \mathbf{CW}_v) = \text{softmax} \left( \frac{\mathbf{xW}_q (\mathbf{W}_k)^\top \mathbf{C}^\top}{\sqrt{d_k}} \right) \mathbf{CW}_v$, 
where $\mathbf{W}_q^i,\mathbf{W}_k^i, \mathbf{W}_v^i \in \mathbb{R}^{d_{in}\times d_{out}}$ are learnable parameters and $\mathbf{C}$ is context matrix.

To efficiently fine-tune the Transformer for each dataset while maintaining generalization, we adopt LoRA \citep{hu2022lora}, which introduces trainable low-rank updates into the attention weights:
    $\Wb + \Delta \Wb = \Wb + \Bb\Ab.$
Here, $\Wb$ is the original pre-trained weight matrix (kept frozen), and $\Bb \in \mathbb{R}^{d_{\text{in}} \times r}$, $\Ab \in \mathbb{R}^{r \times d_{\text{out}}}$ are trainable low-rank matrices with $r \ll \min(d_{\text{in}}, d_{\text{out}})$. The final residual update is applied as
$\hb \leftarrow \hb + s \Delta \hb, \, \Delta h := \Bb\Ab\xb,$
where $s \geq 1$ is a tunable {scaling hyperparameter} controlling the adaptation strength. This formulation allows fast adaptation with minimal memory footprint while preserving the expressive power. The residual forecasting function is thus expressed as 
    $\hat{R}_{t:t+h}^{(i)} = f_{\mathbf{W}}^{\text{LoRA}} (R_{t-l:t}^{(i)}),$ 
where $f_{\mathbf{W}}^{\text{LoRA}}$ is the LoRA fine-tuned foundation model. Then the final forecast combines both trend and residual predictions in an additive decomposition:
\vspace{-0.05in}
\begin{equation}
    \hat{X}_{t:t+h}^{(i)} = \hat{X}_{t:t+h}^{(i),\text{trend}} + \hat{R}_{t:t+h}^{(i)}
\end{equation}

\vspace{-0.15in}
\section{Theoretical Guarantees}\label{sec:thm}
\vspace{-0.1in}
We establish theoretical results for both stages. The first part analyzes the SPA optimality via an oracle inequality, while the second provides generalization bounds for the Transformer fine-tuning.
\vspace{-0.13in}
\subsection{Oracle Inequality for Trend Aggregation}
\vspace{-0.05in}
We analyze the performance of the trend aggregation procedure in the first stage of Hopformer, where the goal is to estimate an unknown regression function \( \eta: \RR^d \to \mathbb{R}^D \) based on observed data \( \{(\Xb^{(i)}_t, \Zb^{(i)}_t)\}_{i=1}^N \). We assume the data is generated as
$\Xb^{(i)}_t = \eta(\Zb^{(i)}_t) + \xi_i, \quad i = 1, \dots, N,$
where \( \{\Zb^{(i)}_t\}_{i=1}^N \subset \RR^d \) are the covariates at time $t$ and \( \xi_{i} \overset{\text{i.i.d.}}{\sim} \mathcal{N}(0, \sigma^2 I_D) \), where $I_D$ is identity matrix, are independent Gaussian noise variables. Let \( \hat{g}_{\omega} \) be the aggregated predictor using any weights \( \omega \), and define the SPA estimator as
$\hat{g}_{\mathrm{SPA}}(x) = \sum_{j=1}^M {\omega}^{\mathrm{SPA}}_j \cdot g_j(x),$
where \( g_j \) is the \( j \)-th expert model. Then the following theorem demonstrates the expected risk of the SPA estimator is near-optimal:
\begin{theorem}\label{thm:spaoracle}
Under the model defined above, the SPA estimator \( \hat{g}_{\mathrm{SPA}} \) satisfies the following inequality:
\begin{equation}\label{eq:spa_gen}
\begin{aligned}
\mathbb{E}\bigl\|\hat g_{\mathrm{SPA}}-\eta\bigr\|^2
&\le
\min_{\substack{\bp\in\mathcal P\\ \pi_{\bp}\neq 0}}
\Biggl\{
\mathbb{E}\bigl\|\hat g_{\hat\omega_{\bp}}-\eta\bigr\|^2
+ \frac{4\sigma^2}{n}\log\!\bigl(\pi_{\bp}^{-1}\bigr)
\Biggr\}.
\end{aligned}
\end{equation}
where \( \pi \) is the prior distribution over sparsity patterns, \( \hat{\omega}_{\bp} \) is the least-squares solution restricted to pattern \( \bp \), and \( \hat{g}_{\hat{\omega}_{\bp}}(x) = \sum_j (\hat{\omega}_{\bp})_j \cdot g_j(x) \)is the corresponding oracle predictor used in the SPA.
\end{theorem}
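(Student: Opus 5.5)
The plan is to follow the exponential-weighting argument of \citet{rigollet2011spa} (Leung--Barron type), which rests on Stein's unbiased risk estimate (SURE) for projection estimators and a Jensen step. We measure risk in the empirical norm $\|g\|^2 = \frac1n\sum_{k=1}^n g(\Zb_{t,k})^2$ at the design points, i.e.\ fixed design, with Gaussian noise of variance $\sigma^2$ and $\sigma$ known, matching the temperature $4\sigma^2$ in \eqref{eq:spa}. The key observation is that for each pattern $\bp$ the restricted least-squares fit $\hat g_{\hat\omega_{\bp}}$ is the orthogonal projection $A_{\bp}\Xb$ onto the span of the selected expert columns. This span has rank at most $|\bp|$, and equals $|\bp|$ generically; when the prior is supported on $|\bp|\le R$ or on the full model, we use $\mathrm{tr}(A_{\bp})=\mathrm{rk}$ in place of $|\bp|$ if needed. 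Consequently
\[
\hat r_{\bp}:=\frac1n\|\Xb-A_{\bp}\Xb\|_2^2+\frac{2\sigma^2|\bp|}{n}-\sigma^2
\]
is an unbiased estimate of $\mathbb{E}\|\hat g_{\hat\omega_{\bp}}-\eta\|^2$. The SPA weights in \eqref{eq:spa} are then, up to a $\bp$-independent factor, $\pi_{\bp}\exp(-n\hat r_{\bp}/(4\sigma^2))$: the penalty $|\bp|/2$ is exactly $\frac{n}{4\sigma^2}\cdot\frac{2\sigma^2|\bp|}{n}$. So SPA is an exponentially weighted average of projection estimators, with weights driven by their SURE values at temperature $\beta=4\sigma^2/n$.

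The key steps, in order, are these.

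\emph{(i)} Write $\hat g_{\mathrm{SPA}}=\sum_{\bp}\theta_{\bp}(\Xb)A_{\bp}\Xb$ with $\theta_{\bp}\propto\pi_{\bp}e^{-\hat r_{\bp}/\beta}$. Apply Stein's lemma to the weakly differentiable map $\Xb\mapsto\hat g_{\mathrm{SPA}}$. This gives an unbiased risk estimate for the aggregate:
\[
\hat r_{\mathrm{SPA}}=\sum_{\bp}\theta_{\bp}\hat r_{\bp}-\sum_{\bp}\theta_{\bp}\frac1n\|A_{\bp}\Xb-\hat g_{\mathrm{SPA}}\|^2+\frac{2\sigma^2}{n}\sum_{\bp}\nabla\theta_{\bp}\cdot(A_{\bp}\Xb-\hat g_{\mathrm{SPA}}).
\]
\emph{(ii)} Compute $\nabla\theta_{\bp}=-\theta_{\bp}\beta^{-1}\bigl(\nabla\hat r_{\bp}-\sum_{\bq}\theta_{\bq}\nabla\hat r_{\bq}\bigr)$ with $\nabla\hat r_{\bp}=\frac2n(I-A_{\bp})\Xb$. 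Substituting, the divergence term becomes $-\frac{4\sigma^2}{n\beta}\sum_{\bp}\theta_{\bp}\frac1n\|A_{\bp}\Xb-\hat g_{\mathrm{SPA}}\|^2$ plus cross terms that cancel by centering. \emph{(iii)} With $\beta=4\sigma^2/n$ the variance-like terms combine with a nonpositive coefficient, so they can be dropped, leaving $\hat r_{\mathrm{SPA}}\le\sum_{\bp}\theta_{\bp}\hat r_{\bp}$. \emph{(iv)} Use the Gibbs variational identity: $\theta$ minimizes $\sum_{\bp}\theta_{\bp}\hat r_{\bp}+\beta\,\mathrm{KL}(\theta\|\pi)$ over probability vectors. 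Since $\mathrm{KL}\ge0$, and evaluating the objective at the point mass $\delta_{\bp}$, we get $\sum_{\bp}\theta_{\bp}\hat r_{\bp}\le\hat r_{\bp}+\beta\log(\pi_{\bp}^{-1})$ for every $\bp$ with $\pi_{\bp}\ne0$. \emph{(v)} Take expectations, use unbiasedness of $\hat r_{\mathrm{SPA}}$ and $\hat r_{\bp}$, and minimize over $\bp$. This yields \eqref{eq:spa_gen} with the constant $\beta=4\sigma^2/n$.

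The main obstacle is step (ii)--(iii): the exact algebra showing that the Stein divergence term of the data-dependent weights is dominated, and that the temperature $4\sigma^2$ is large enough for this. Here I would follow Leung and Barron's computation verbatim, exploiting that the $A_{\bp}$ are symmetric idempotent. A secondary point is justifying weak differentiability, which is clear since $\hat g_{\mathrm{SPA}}$ is smooth in $\Xb$. The vector-valued case $\mathbb{R}^D$ follows by applying the argument coordinatewise with i.i.d.\ $\mathcal N(0,\sigma^2 I_D)$ noise, or by treating the stacked response, since the projections act identically on each coordinate.
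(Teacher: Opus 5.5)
Your proposal is correct and follows the same Leung--Barron route as the paper. The paper's written proof is only your step (iv). Lemma~\ref{lemma:exp_screening} is the KL/Gibbs identity, applied with an affinely rescaled version of your $\hat r_{\bp}$. The paper then simply asserts, in the statement of that lemma, that the unbiased risk estimate of the aggregate equals $\sum_{\bp}\theta_{\bp}\hat r_{\bp}$. That assertion is exactly what your Stein computation in (i)--(iii) establishes, so your version is the more complete one.

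There is one sign error in (ii). Since $\nabla\theta_{\bp}=\frac{2}{n\beta}\theta_{\bp}(A_{\bp}\Xb-\hat g_{\mathrm{SPA}})$, the divergence term is $+\frac{4\sigma^2}{n\beta}\sum_{\bp}\theta_{\bp}\frac1n\|A_{\bp}\Xb-\hat g_{\mathrm{SPA}}\|^2$, not minus. The net coefficient is therefore $-(1-\frac{4\sigma^2}{n\beta})$, which vanishes exactly at $\beta=4\sigma^2/n$ and gives $\hat r_{\mathrm{SPA}}=\sum_{\bp}\theta_{\bp}\hat r_{\bp}$. This is why $4\sigma^2$ is the threshold temperature; with your minus sign the temperature would play no role at all.

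Two smaller points. First, replacing $|\bp|$ by $\mathrm{tr}(A_{\bp})$ changes the estimator; instead, keep $|\bp|$ in the weights. The discrepancy is then a data-independent shift, which leaves $\nabla\theta_{\bp}$ unchanged. The argument goes through with an extra $\frac{2\sigma^2}{n}(|\bp|-\mathrm{tr}(A_{\bp}))\ge 0$ in the bound, and this term is nonzero for the full pattern when $M>R$. Second, your $D>1$ remark is too quick. A single shared weight vector on the stacked response needs the penalty $D|\bp|/2$ rather than $|\bp|/2$. Applying the argument coordinatewise would instead need separate weights per coordinate.
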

\vspace{-0.03in}
The proof of this result is in Appendix~\ref{appendix:spa-proof}. This result ensures that the SPA estimator performs nearly as well—in expectation—as the best sparse combination of experts, with an additional complexity term depending on the prior mass \( \pi_{\bp} \) and the sample size \( n \). The bound reflects a classic bias–variance trade-off: (1) The first term, \( \mathbb{E} \left\| \hat{g}_{\hat{\omega}_{\bp}} - \eta \right\|^2 \), captures the approximation error from using the best subset of experts under the pattern \( \bp \). (2) The second term, \( \frac{4 \sigma^2 \log \left( \pi_{\bp}^{-1} \right)}{n} \), acts as a regularization penalty that grows with the complexity of the model (as encoded by the prior) and shrinks with more data. Intuitively, this oracle inequality guaranties that when the prior assigns reasonable mass to good sparse patterns, the SPA can adaptively discover and aggregate the most relevant components.

In the context of Hopformer, this result justifies our homogeneity pursuit strategy: by projecting diverse time series through an adaptively aggregated trend model, we obtain a compressed representation that retains meaningful structure across domains. This residualized representation becomes a more generalizable and statistically stable input for the second-stage residual modeling.

\vspace{-0.03in}
\subsection{Generalization Bound for LoRA Fine-Tuning}
\vspace{-0.03in}
LoRA fine-tuning enables efficient adaptation of Transformer models to new time series data while keeping the number of trainable parameters low. To assess the reliability of this fine-tuning step in our residual forecasting pipeline, we establish a generalization error bound via information theory that holds under mild dependence assumptions typical in theoretical analysis under time series settings.

Let $\mathcal{Z} $ denote the space of residuals obtained from the first-stage regression. Suppose we observe a sequence $\bR_T = (R_1, \dots, R_T) \in \cZ^T$ generated from a stationary process $\mathcal{P}$, and let $\mathcal{W}$ be the hypothesis space for model parameters. A learning algorithm produces a randomized predictor $\Wb \in \mathcal{W}$ based on $\bR_T$, drawn from a conditional distribution $P_{\Wb|\bR_T}$. Define the population risk and empirical risk as key metrics for evaluating model generalization:
$$
L_{\cP}(w) = \mathbb{E}_{\cP}[\ell(w, R)], \, L_{\bR_T}(w) = \frac{1}{T-d} \sum_{i=d+1}^{N} \ell(w, R_i),
$$ 
where $\ell: \mathcal{W} \times \mathcal{Z} \to \mathbb{R}^+$ is a loss function, and $d \geq 0$ is a burn-in offset. For example, in an AR(2) model, predictions depend on the past two time steps, meaning that at least two observations are required before a valid loss computed. Then the following theorem states a generalization bound for empirical risk using the information-theoretic framework.
\begin{theorem}\label{thm:gen}
    Assuming the loss function $\ell(w, R)$ is $\sigma$-subgaussian. If the target residual time series is stationary and $\beta$-mixing, then there exists a constant $a > 0$ and an integer $m$ such that $2am \le T$ ensuring
    \begin{equation}\label{eq:genInfo}
    \mathbb{E} [L_{\cP}(\Wb) - L_{\bR_T}(\Wb)] \leq \sqrt{2\sigma^2m^{-1} I(\bR_T; \Wb)},
\end{equation} 
where $I(\bR_T; \Wb)= D_{KL}(P_{\Wb,\bR_T}||P_{\Wb} \otimes P_{\bR_T})$ is the mutual information and $D_{KL}$ is the KL divergence.
\end{theorem}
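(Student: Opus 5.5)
The plan is to combine the Xu--Raginsky information-theoretic generalization bound with a blocking argument for $\beta$-mixing sequences. The bound's $m^{-1}$ factor suggests $m$ plays the role of an effective number of independent samples. So I would split the burn-in-adjusted sequence $(R_{d+1},\dots,R_T)$ into $2m$ consecutive blocks of length $a$ (whence $2am\le T$). Taking every other block gives two interleaved families, $\{B_1,B_3,\dots\}$ and $\{B_2,B_4,\dots\}$, each of $m$ blocks separated by gaps of length $a$.

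First, I would reduce to a blocked empirical risk. I would rewrite $L_{\bR_T}(\Wb)$, up to the leftover terms which are absorbed into the choice of $a,m$, as the average of the two half-empirical risks $L^{\mathrm{odd}}(\Wb)$ and $L^{\mathrm{even}}(\Wb)$. Each half-risk is an average over $m$ block-level losses $\bar\ell(w,B_j)=a^{-1}\sum_{R\in B_j}\ell(w,R)$. By stationarity, $\mathbb{E}[\bar\ell(w,B_j)]=L_{\cP}(w)$ for every fixed $w$. Because $\bar\ell$ is an average of $\sigma$-subgaussian terms, it is itself $\sigma$-subgaussian: by Hölder's inequality on the moment generating functions, the subgaussian constant does not grow under averaging.

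Second, I would invoke Berbee's coupling lemma. This gives independent blocks $\tilde B_j$ with the same marginals as the $B_j$, such that $\mathbb{P}(\tilde B_j\neq B_j)\le\beta(a)$ for each $j$. On the coupled family, the block losses are i.i.d. under the product law. The Donsker--Varadhan variational formula, applied with $\lambda$ optimized, then yields
\[
\bigl|\mathbb{E}[L_{\cP}(\Wb)-\tilde L^{\mathrm{odd}}(\Wb)]\bigr|\le\sqrt{2\sigma^2 m^{-1} I(\tilde B^{\mathrm{odd}};\Wb)}.
\]
Here the $m^{-1}$ comes from the chain rule $I(\tilde B_1,\dots,\tilde B_m;\Wb)\ge\sum_j I(\tilde B_j;\Wb)$ for independent blocks, combined with Jensen's inequality over the block index. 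Finally, data processing gives $I(B^{\mathrm{odd}};\Wb)\le I(\bR_T;\Wb)$, since the sub-collection of blocks is a function of $\bR_T$. The same holds for the even half, and averaging the two halves via concavity of the square root gives the stated bound.

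The main obstacle is the coupling error. Replacing $B_j$ by $\tilde B_j$ changes both the expectation and the mutual information, and the statement contains no explicit $\beta(a)$ remainder. I would handle this by using the freedom in the existential quantifier "there exist $a>0$ and an integer $m$". Concretely, I would take the mixing rate $\beta(a)$ small enough, and read the theorem in the regime where the coupling remainder (of order $m\beta(a)$ times a subgaussian tail bound on $\ell$) is absorbed, effectively treating the blocks as independent. A second, milder subtlety is that $\Wb$ depends on the true $\bR_T$ rather than on the coupled blocks. I would deal with this by defining $\Wb$ through the Markov chain $\tilde B\to B\to\bR_T\to\Wb$ on the coupling space, and then applying data processing there.
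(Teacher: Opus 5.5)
Your blocking, the H\"older argument that block averages stay $\sigma$-subgaussian, the Berbee coupling, the per-block Donsker--Varadhan step with superadditivity $I(\tilde B_1,\dots,\tilde B_m;\mathbf{W})\ge\sum_j I(\tilde B_j;\mathbf{W})$ plus Jensen, and the data-processing chain $\tilde B\to B\to\mathbf{W}$ are all sound. They follow the paper's skeleton. The paper instead shows that the blocked empirical risk is $\sigma/\sqrt{m}$-subgaussian and then applies the Xu--Raginsky lemma once to $(\mathbf{R}_T,\mathbf{W})$.

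The genuine gap is your last step. The term $\mathbb{E}[\tilde L^{\mathrm{odd}}(\mathbf{W})-L^{\mathrm{odd}}(\mathbf{W})]$ is a real additive error. For every finite $a$ you have $\beta(a)>0$ unless the process is $a$-dependent, so no choice of $a,m$ makes this error vanish. ``Treating the blocks as independent'' is exactly what has to be proved. Bounding the error also needs control of $\ell(\mathbf{W},\cdot)$ at the data-dependent $\mathbf{W}$ on the miscoupling event $\{\tilde B_j\neq B_j\}$. Per-$w$ subgaussianity does not give that directly.

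The existential quantifier only rescues the statement at $m=1$, where mixing is irrelevant. By stationarity and H\"older, $L_{\mathbf{R}_T}(w)$ is $\sigma$-subgaussian with mean $L_{\mathcal{P}}(w)$ for every $w$, whatever the dependence. The decoupling lemma then gives $\sqrt{2\sigma^2 I(\mathbf{R}_T;\mathbf{W})}$ directly. For a meaningful $m\approx T/(2a)$ you must keep a remainder. For instance, take a loss with values in $[0,B]$ (so $\sigma=B/2$) and $2am=T-d$. Your construction then yields
\[
\mathbb{E}\bigl[L_{\mathcal{P}}(\mathbf{W})-L_{\mathbf{R}_T}(\mathbf{W})\bigr]\le\sqrt{2\sigma^2m^{-1}I(\mathbf{R}_T;\mathbf{W})}+B\,\beta(a).
\]
This holds because you average over $j$ and each block is miscoupled with probability at most $\beta(a)$. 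So the error is $B\beta(a)$, not of order $m\beta(a)$ as you estimated.

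The paper's proof has the same hole. Its ``$\lesssim$'' step uses Yu's comparison $|\mathbb{E}h(U)-\mathbb{E}h(U')|\le(m-1)\beta_a\|h\|_\infty$, which holds only for bounded $h$. It applies this to an unbounded moment generating function and then drops the additive error to claim exact $\sigma/\sqrt{m}$-subgaussianity. Your per-block coupling is the cleaner route to a rigorous version, but only if you state the remainder explicitly.
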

\vspace{-0.05in}
This generalizes the information-theoretic generalization bound from i.i.d. settings \cite{NIPS2017_info} to time series data by assuming the residual sequence is stationary and $\beta$-mixing. We emphasize that this assumption is mild and widely used in the time series literature \citep{kreuzer2025unpacking, dudek2022std}—residuals are often approximately stationary after removing trends and seasonal components. Furthermore, $\beta$-mixing encompasses a broad class of dependent processes. While strict stationarity may not always hold, our method remains effective under mild violations (See Corollary \ref{cor:localsta}). Formal definitions for assumptions and proof sketches are given in Appendix~\ref{appendix:info-proof}. The mutual information can be bounded in terms of LoRA's architecture and effective trainable parameterization:

\begin{corollary}[]\label{cor:lora}
    With the same assumptions and integer $m$ in Theorem \ref{thm:gen}, the following inequality holds
	    {\small
	    \begin{equation}
	     \mathbb{E} [L_{\cP}(\Wb) - L_{\bR_T}(\Wb)] \leq \sqrt{6\sigma^2 m^{-1} qr \sum_{i\in \cI}(d_{in}+d_{out})},
	    \end{equation}
	    }
where LoRA is applied to $\mathbf{W}_q^i,\mathbf{W}_k^i, \mathbf{W}_v^i \in \mathbb{R}^{d_{in}\times d_{out}}$, with total rank $r$ and quantization level $q$ bits.
\end{corollary}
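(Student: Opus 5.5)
The plan is to start from Theorem~\ref{thm:gen} and bound the mutual information $I(\bR_T;\Wb)$ by the entropy of the trained parameters, which quantization makes finite. The frozen pre-trained weights $\Wb$ are fixed and independent of $\bR_T$. The fine-tuned model is therefore a deterministic function of the frozen weights together with the LoRA factors $\{\Bb_i,\Ab_i\}_{i\in\cI}$. By the data-processing inequality, $I(\bR_T;\Wb)\le I(\bR_T;\{\Bb_i,\Ab_i\}_{i\in\cI})$. Since these factors are discrete random variables, this is at most $H(\{\Bb_i,\Ab_i\}_{i\in\cI})$, using $I(X;Y)=H(Y)-H(Y\mid X)\le H(Y)$.

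Next I would count the trainable degrees of freedom. For each adapted matrix $i\in\cI$ (a query, key, or value projection), the factors $\Bb_i\in\RR^{d_{in}\times r}$ and $\Ab_i\in\RR^{r\times d_{out}}$ contain $r(d_{in}+d_{out})$ scalar entries. If each entry is stored with $q$ bits, the pair takes at most $2^{qr(d_{in}+d_{out})}$ values. Entropy is bounded by the log of the support size, which gives
\begin{equation*}
H(\{\Bb_i,\Ab_i\}_{i\in\cI}) \le qr\sum_{i\in\cI}(d_{in}+d_{out})\log 2 .
\end{equation*}
The statement writes $r$ as the total rank. The cleanest reading is that summing over $\cI$ with rank $r$ per adapted matrix counts all LoRA parameters; I would make this accounting explicit.

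The factor $6$, in place of $2$, is what I expect to be the main obstacle. The theorem applies LoRA to $\mathbf{W}_q^i,\mathbf{W}_k^i,\mathbf{W}_v^i$, so if $\cI$ indexes attention heads or layers, each index carries three adapted matrices. This would give $H\le 3qr\sum_{i\in\cI}(d_{in}+d_{out})\log 2$. Substituting into \eqref{eq:genInfo} then yields $\sqrt{2\sigma^2 m^{-1}\cdot 3qr\sum_{i\in\cI}(d_{in}+d_{out})\log 2}$, and the statement follows because $\log 2\le 1$.

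Two points need care in the write-up. First, I must fix the convention that $\cI$ indexes heads, each with three projection matrices, so that the constant $3$ is justified. Second, the randomness of the training algorithm only affects $H(\cdot\mid\bR_T)$, which is nonnegative, so dropping that term is harmless and the upper bound remains valid.
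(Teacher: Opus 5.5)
Your proposal is correct and follows essentially the same route as the paper. You reduce $I(\mathbf{R}_T;\mathbf{W})$ to the information carried by the LoRA update via data processing (the paper also uses the chain rule with the frozen weights independent of the data), bound mutual information by the entropy of the quantized trainable parameters, bound that entropy by the log of the support size with three adapted projections per index $i\in\mathcal{I}$, and substitute into \eqref{eq:genInfo} to obtain the factor $2\cdot 3=6$.

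Your version is slightly tidier in two places. First, you bound the entropy of the factors $\mathbf{B}_i,\mathbf{A}_i$ directly; the paper writes the entropy of $\mathbf{W}_q^i,\mathbf{W}_k^i,\mathbf{W}_v^i$ but counts only $r(d_{in}+d_{out})$ entries each. Second, you track the bits-to-nats factor through $\log 2\le 1$, which the paper leaves implicit. Just keep the frozen pre-trained weights notationally distinct from the learned $\mathbf{W}$ of Theorem~\ref{thm:gen}.
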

\vspace{-0.05in}
This bound demonstrates that the expected generalization error is controlled by the number of fine-tuned parameters, which is substantially smaller in LoRA compared to full-model tuning. By imposing a low-rank structure, LoRA effectively reduces the model's degrees of freedom, thereby implicitly lowering the mutual information between the training data and the learned parameters. From an information-theoretic perspective, reduced mutual information corresponds to improved stability and less overfitting, as the model becomes less sensitive to individual training sequences. While stronger guarantees (e.g., high-probability bounds) can be obtained under additional assumptions, the subgaussian stability bound already offers a compelling theoretical explanation for the generalization benefits. Importantly, this analysis can be naturally extended to other PEFT methods, since these techniques reduce mutual information by constraining the effective capacity of the model.

\vspace{-0.1in}
\section{Experiments}\label{sec:exp}
\vspace{-0.05in}

We evaluate the performance of Hopformer on diverse time series forecasting benchmarks. Specifically, we address the following questions: (i) Accuracy: How does Hopformer compare to state-of-the-art methods across a range of forecasting tasks? (ii) Ablation: What is the individual contribution of each key component of Hopformer to its overall performance? (iii) Robustness: How does Hopformer perform when varying context lengths and forecast horizons?

\vspace{-0.05in}
\textbf{Dataset and Baselines}: We evaluate Hopformer's performance on 6 datasets, including the Illness, EPF, and M5 benchmarks, along with three synthetic datasets (Sales1, Sales2, and Electricity). We restrict our evaluation to datasets with covariates, since our main contribution is to improve forecasting in covariate-driven settings; whereas in datasets without covariates Hopformer reduces to the backbone foundational models, and additional comparisons would not yield new insights. Dataset statistics are summarized in Table \ref{tab:dataset-stats}, where the total number of time steps is expressed as 'time steps per series × number of series'. The motivation for using the synthetic datasets (details are in Appendix Sec\ref{sec:synthetic}) is to illustrate the impact of covariates on time series forecasting while minimizing data leakage when comparing Hopformer with pre-trained foundational models for time series prediction. We benchmark Hopformer against Chronos-bolt, PatchTST, TemporalFusionTransformer, and two traditional statistical models, ARIMA and ETS. Please refer to the appendix for more details.


\begin{table*}[t]
\centering
\caption{Statistics of datasets used in experiments.}
\setlength{\tabcolsep}{6pt}
\begin{tabular}{lcccccc}
\toprule
Dataset     & Illness & EPF & M5 & Sale1 & Sale2 & Electricity \\
\midrule
Covariates  & 7 & 2 & 12 & 4 & 8 & 7 \\
Timesteps   & 966 & $52{,}416 \times 5$ & $414 \times 30{,}490$ & $730 \times 200$ & $730 \times 200$ & $184{,}800 \times 5$ \\
\bottomrule
\end{tabular}
\label{tab:dataset-stats}
\end{table*}

\textbf{Implementation and Evaluation Metrics:}
We implement Hopformer using the AutoGluon library \cite{shchur2023autogluon} with usage provided in Listing~\ref{lst:hopformer} (Appendix). We refer implementation details \footnote{Data and code are available at
\href{https://www.dropbox.com/scl/fo/q4t08x79w1jxq2tnkz15d/ACXuc5n06yS17cH696oaP0g?rlkey=e9ogypv287u8232l06dzn0u28&dl=0}{Dropbox link}.} to Appendix~\ref{sec:pseudocode}. In the cross-sectional stage, the expert pool includes eight regression models such as XGBoost, LightGBM, Linear Regression, Random Forest, and CatBoost for processing future covariates, while a SimpleFeedForward network is used for past covariates. We use default hyperparameter for these regressors and use ARIMA and ETS to capture lag and seasonal patterns, with max\_ts\_steps = 1000. We select 8 experts as a balanced pool: while adding more (up to 16) could increase diversity, the additional regressors showed weak performance under default settings and would likely require extensive hyperparameter tuning. We did not pursue such tuning, and our experiments indicate that 8 well-performing regressors already provide strong and efficient coverage.

For aggregation, we apply SPA (Equation \ref{eq:spa}) to Hopformer, and implement Equal Weighting, Single Best, and Linear Regression ( $\hat{\omega}_{\bp} \in \arg\min_{\pi \in \mathbb{R}^{\bp}} \left\| \Xb - \mathbf{Z} \pi \right\|_2^2$) for our ablation study. In the second stage, we use the Chronos-bolt-small (Chronos) to implement three models: a zero-shot model without fine-tuning, a fully fine-tuned model, and a model fine-tuned using LoRA. For LoRA, we employ a low-rank adaptation with rank = 8, a scaling factor of $\alpha=16$, and a dropout rate of 5\%, applied to the query, key, value, and output projections. We fine-tune models for 100 gradient steps. We run all reported experiments on an Ubuntu server with 4 × 1080Ti GPUs and 96 CPUs.

We evaluate Hopformer using 20 rolling windows, each with a context length of 512 and a forecast horizon of 24, and report result in Table~\ref{tab:main_results_tight}. We also report results under varying context lengths (32, 64, 128, 156, 512) with a fixed prediction length of 24, and varying prediction lengths (24, 71, 120) with a fixed context length of 256 in Tables \ref{tab:epf_robustness_context} and \ref{tab:epf_robustness_pred} in Appendix \ref{sec:vis_of_infer}. Specifically, each test set consists of 20 consecutive prediction periods, with one prediction period reserved for validation, and the remaining data used for training. The last 1,000 time steps serve to train the regressors and compute the SPA weights. Model performance is assessed by computing the Mean Absolute Scaled Error (MASE) and Mean Absolute Percentage Error (MAPE), averaged over all rolling windows. Note that MASE values in the illness row are divided by 10 for readability in all reported tables.
\vspace{-0.1in}
\begin{table*}[ht]
\centering
\vspace{-0.05in}
\caption{ Forecasting performance of Hopformer, Cross-sectional regression module, Chornos-bolt-small (Chronos), and PatchTST (PTST), TemporalFusionTranformer (TFT), ARIMA, and ETS. Each cell reports MASE, and MAPE (lower is better). The bold numbers indicate the two lowest metric values in each row. 
}
\label{tab:main_results_tight}
\resizebox{\linewidth}{!}{%
\scriptsize
\setlength{\tabcolsep}{1.2pt}
\begin{tabular}{
  >{\centering\arraybackslash}p{0.25cm}  
  >{\centering\arraybackslash}p{0.70cm}  
  *{3}{>{\centering\arraybackslash}p{0.60cm}@{\hspace{2pt}}}  
  *{4}{>{\centering\arraybackslash}p{0.60cm}@{\hspace{2pt}}}  
  *{3}{>{\centering\arraybackslash}p{0.60cm}@{\hspace{2pt}}}  
  *{4}{>{\centering\arraybackslash}p{0.60cm}}                 
}

\toprule
\multicolumn{2}{c}{Models} &
\multicolumn{3}{c}{\textbf{Hopformer}} &
\multicolumn{4}{c}{\textbf{Cross‑Sectional}} &
\multicolumn{3}{c}{\textbf{Chronos}} &
\multicolumn{4}{c}{\textbf{DL and Stats}} \\ 
\cmidrule(lr){3-5}\cmidrule(lr){6-9}\cmidrule(lr){10-12}\cmidrule(lr){13-16}
\multicolumn{2}{c}{Variants} &
0‑shot & Full & LoRA &
SPA & Lasso & Best & Equal &
0‑shot & Full & LoRA &
PTST & TFT & Arima & Ets \\
\midrule
\multirow{2}{*}{\rot{\tiny{Sale1}}}
 & MASE & 0.946 & \textbf{0.761} & \textbf{0.819} & 1.592 & 1.598 & 1.646 & 1.610 & 1.095 & 0.927 & 0.971 & 0.911 & 0.883 & 1.191 & 1.136 \\
 & MAPE & 0.915 & \textbf{0.631} & 0.686 & 1.482 & 1.483 & 1.510 & 1.523 & 0.951 & \textbf{0.679} & 0.707 & 0.881 & 0.813 & 1.029 & 1.380 \\
\midrule
\multirow{2}{*}{\rot{\tiny{Sale2}}}
 & MASE & 0.340 & \textbf{0.270} & \textbf{0.264} & 0.538 & 0.539 & 0.539 & 0.561 & 0.542 & 0.307 & 0.301 & 0.447 & 0.428 & 0.552 & 0.849 \\
 & MAPE & 0.423 & \textbf{0.306} & \textbf{0.301} & 0.729 & 0.740 & 0.736 & 0.839 & 0.518 & 0.315 & 0.309 & 0.486 & 0.485 & 635 & 0.912 \\
\midrule
\multirow{2}{*}{\rot{\tiny{Elec.}}}
 & MASE & 0.765 & \textbf{0.737} & \textbf{0.730} & 2.612 & 2.648 & 2.633 & 2.652 & 0.817 & 0.770 & 0.756 & 1.205 & 1.391 & 0.940 & 1.449 \\
 & MAPE & 0.078 & \textbf{0.075} & \textbf{0.075} & 0.263 & 0.266 & 0.257 & 0.259 & 0.083 & 0.079 & 0.078 & 0.143 & 0.164 & 0.106 & 0.171 \\
\midrule
\multirow{2}{*}{\rot{\tiny{Illness}}}
& MASE & 0.381 & \textbf{0.330} & \textbf{0.329} & 0.369 & 0.379 & 0.494 & 0.454 & 0.357 & 0.398 & 0.399 & 0.423 & 0.494 & 0.515 & 0.509 \\
 & MAPE & 0.133 & \textbf{0.115} & \textbf{0.115} & 0.125 & 0.130 & 0.167 & 0.159 & 0.123 & 0.135 & 0.134 & 0.146 & 0.167 & 0.190 & 0.188 \\
\midrule
\multirow{2}{*}{\rot{\tiny{EPF}}}
 & MASE & 0.654 & \textbf{0.650} & \textbf{0.642} & 1.279 & 1.760 & 0.813 & 17.17 & 0.662 & 0.674 & 0.720 & 1.466 & 0.862 & 0.895 & 1.091 \\
 & MAPE & 1.114 & \textbf{1.112} & \textbf{1.109} & 2.554 & 3.027 & 1.804 & 11.67 & 1.180 & 1.252 & 1.265 & 3.252 & 1.689 & 1.383 & 1.124 \\
\midrule
\multirow{2}{*}{\rot{\tiny{M5}}}
 & MASE & 1.006 & \textbf{0.984} & 0.989 & 1.189 & 1.188 & 2.189 & 2.194 & 1.006 & 0.987 & 0.987 & 1.022 & \textbf{0.980} & 1.185 & 1.238 \\
 & MAPE & 0.703 & 0.670 & \textbf{0.589} & 0.591 & 0.596 & 1.438 & 1.883 & 0.704 & 0.683 & 0.679 & 0.667 & 0.671 & 0.594 & \textbf{0.592} \\
\bottomrule

\end{tabular}}
\end{table*}

\textbf{Overall Performance}:
Table \ref{tab:main_results_tight} summarizes the performance of the forecasting models across six datasets. We highlight three key findings: (i) The best of the Hopformer variants outperforms baseline models, with an average relative improvement of 6.56\% in MASE across all datasets. Performance gains are particularly notable on synthetic datasets with future covariates, achieving a 4.45\% improvement in MAPE compared to the best baseline. (ii) In the zero-shot scenario, Hopformer attains comparable or superior results to Chronos in five out of six datasets, illustrating the efficacy of the SPA. Specifically, SPA achieves a 8.73\% average reduction in MASE, highlighting its general compatibility with foundational models (visualized in Figure \ref{fig:forecast_vis}). (iii) Fine-tuning the residual module with LoRA achieves nearly identical performance to full fine-tuning (within 1\% difference in MAPE), underscoring LoRA's efficiency by updating only targeted attention parameters in practice.


\begin{figure*}[ht]
  \centering
  \includegraphics[width=0.9\textwidth]{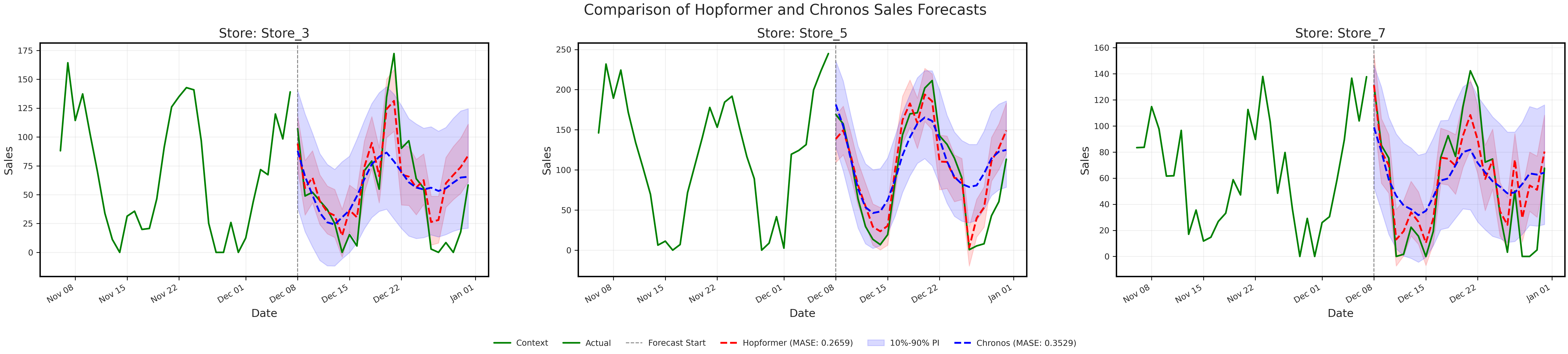}
  \caption{Zero-shot forecast comparison between Hopformer and Chronos on four representative stores from the \textsc{Sales1} dataset. Blue lines show the ground-truth sales, while yellow and red lines depict model predictions.}
  \label{fig:forecast_vis}
\end{figure*}
\vspace{-0.05in}
\textbf{Ablation Study1}: 
We isolate and quantify the effect of Hopformer’s aggregation strategies through an ablation study. We report the results in the cross-sectional columns of Table~\ref{tab:main_results_tight} and Tables~\ref{tab:epf_robustness_context}, \ref{tab:epf_robustness_pred} in Appendix, where SPA consistently achieves the lowest MASE and MAPE across all datasets, reducing MASE by an average of 7.08\% and MAPE by 6.99\% compared to the next-best method. We further stress-test the strategies by varying the expert-pool size from 4 to 20 regressors (Figure~\ref{fig:cs_aggregation_comparison}), where SPA’s advantage widens as the pool grows. This suggests that sparsity-aware aggregation becomes increasingly important when the expert pool contains more heterogeneous predictors.

\textbf{Ablation Study2}: 
Table~\ref{tab:ablation_hopfomer} investigates the effectiveness and generality of Hopformer's two-stage forecasting framework, assessing the gain from its cross-sectional stage across time series foundation models—Chronos-bolt-small(Chronos), Moirai-small, Moirai-MoE-small, and Lag-Llama. 

\vspace{-0.05in}
Three observations are particularly notable.
\textbf{(i)} The inclusion of Hopformer's cross-sectional aggregation consistently improves forecasting performance across all four foundational models and datasets, highlighting its generalizability. For instance, on the EPF dataset, integrating the cross-sectional stage (Hop) reduces the Chronos MASE from 0.785 to 0.732 (5.3\% improvement), Moirai from 0.952 (multi) to 0.821 (13.1\%), Moirai-MoE from 0.886 (multi) to 0.768 (11.8\%), and Lag-Llama from 1.168 to 0.773 (39.5\%).
\textbf{(ii)} These improvements underscore the robustness and broad compatibility of the Hopformer framework: regardless of differences in architectural complexity (e.g., MoE vs. standard transformer models), adding a dedicated cross-sectional aggregation stage effectively isolates and leverages covariate information. Such results demonstrate that the Hopformer paradigm—extracting trends and modeling residuals after explicit covariate adjustment—significantly boosts predictive power.
\textbf{(iii)} In particular, Hopformer-enhanced models often outperform the native multivariate versions of Moirai and Moirai-MoE. For instance, on the Sale2 dataset, Hopformer applied to Chronos achieves a MASE of 0.326, compared to 0.707 and 0.705 from multivariate Moirai and Moirai-MoE respectively. This suggests that Hopformer offers a more effective and interpretable way to incorporate exogenous information than direct multivariate modeling. This suggests that Hopformer offers a more effective and interpretable way to incorporate exogenous information than direct multivariate modeling. We visualize the forecast results in the Appendix \ref{sec:more_vis}. 
\vspace{-0.05in}
\begin{table*}[ht]
\centering
\vspace{-0.15in}
\caption{\small Forecasting performance (MASE and MAPE; lower is better) of four state-of-the-art foundation models—Chronos, Moirai, Moirai-MoE, and Lag-Llama—before and after integration with Hopformer's cross-sectional stage (denoted as ``Hop''). ``Uni'' and `Multi'' denotes univariate and multivariate time series prediction. Experiments use 0-shot setting, a context length of 256, a forecast horizon of 24, and metric is averaged over 20 rolling windows. Moirais and Lag-Llama use 100 samples per forecast. Boldface highlights the best best-performing methods for each set of model variants. Highlighting Hop only for clearer view.}
\label{tab:ablation_hopfomer}
\begin{tabular}{
  >{\centering\arraybackslash}p{0.15cm}  
  >{\centering\arraybackslash}p{0.90cm}  
  | *{2}{>{\centering\arraybackslash}p{0.70cm}}  
  *{3}{>{\centering\arraybackslash}p{0.70cm}}  
  *{3}{>{\centering\arraybackslash}p{0.70cm}}  
  *{2}{>{\centering\arraybackslash}p{0.70cm}}  
}

\toprule
\multicolumn{2}{c}{\textbf{Models}} &
\multicolumn{2}{c}{\textbf{Chronos}} &
\multicolumn{3}{c}{\textbf{Moirai}} &
\multicolumn{3}{c}{\textbf{Moirai-moe}} &
\multicolumn{2}{c}{\textbf{Lag-Llama}} \\
\cmidrule(lr){3-4} \cmidrule(lr){5-7} \cmidrule(lr){8-10} \cmidrule(lr){11-12}
\multicolumn{2}{c}{\textbf{Variants}} &
Uni & Hop &
Uni &  Mutli & Hop &
Uni &  Mutli & Hop &
Uni & Hop   \\
\midrule

\multirow{2}{*}{\rot{{\textbf{\small{EPF}}}}}
 & MASE & 0.785 & \textbf{0.732} & 0.952 & 0.915 & \textbf{0.821} & 0.886 & 0.845 & \textbf{0.768} & 1.168 & \textbf{0.773} \\
 & MAPE & 1.414 & \textbf{1.355} & 1.245 & 1.427 & 1.550 & 1.363 & 1.312 & 1.580 & 2.387 & \textbf{1.626} \\
\midrule

\multirow{2}{*}{\rot{{\textbf{\small{Elec.}}}}}
 & MASE & 0.706 & \textbf{0.668} & 1.185 & 1.199 & \textbf{1.125} & 0.994 & 1.029 & \textbf{0.939} & 1.421 & \textbf{1.395} \\
 & MAPE & 0.074 & \textbf{0.070} & 0.128 & 0.125 & \textbf{0.123 }& 0.109 & 0.116 & \textbf{0.103} & 0.172 & \textbf{0.167} \\
\midrule

\multirow{2}{*}{\rot{\textbf{\small{Sale1}}}}
 & MASE & 0.577 & \textbf{0.576} & 1.274 & 1.187 & 1.337 & 2.721 & 1.923 & \textbf{1.878} & 1.217 & \textbf{1.183} \\
 & MAPE & 0.562 & 0.562 & 1.217 & 1.225 & \textbf{1.073} & 0.738 & 0.798 & \textbf{0.714} & 1.013 & \textbf{0.885} \\
\midrule

\multirow{2}{*}{\rot{\textbf{\small{Sale2}}}}
 & MASE & 0.561 & \textbf{0.326} & 0.702 & 0.707 & \textbf{0.482} & 0.652 & 0.705 & \textbf{0.455} & 0.702 & \textbf{0.482} \\
 & MAPE & 0.555 & \textbf{0.399} & 0.875 & 0.956 & \textbf{0.636} & 0.703 & 0.781 & \textbf{0.580} & 0.881 & \textbf{0.598} \\
  
\bottomrule
\end{tabular}
\end{table*}

  
  



\begin{figure*}[t]
  \centering

  \includegraphics[width=0.9\textwidth]{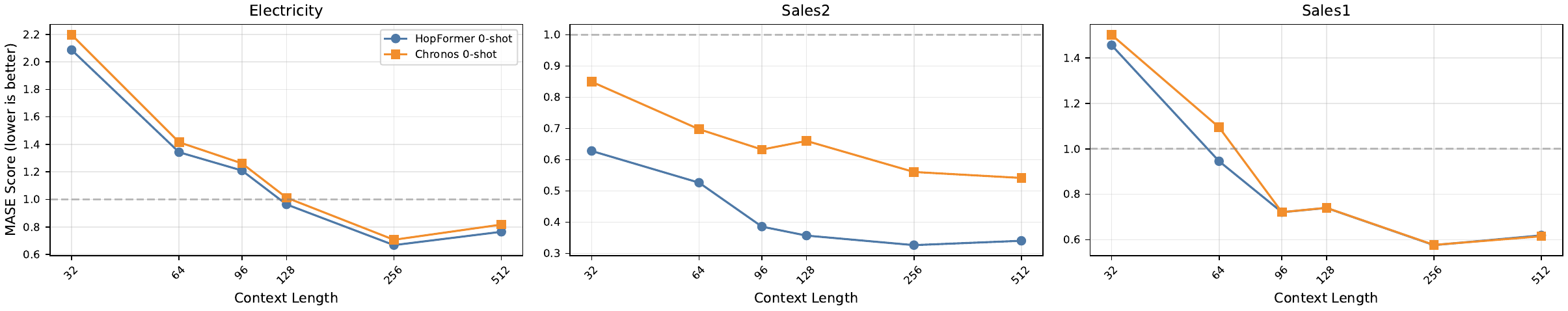}

  \vspace{0.8em}

  \includegraphics[width=0.9\textwidth]{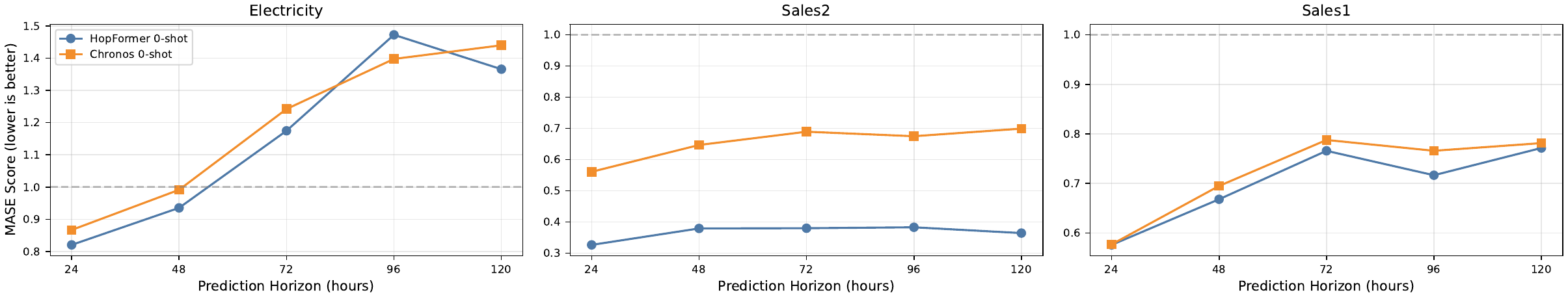}

  \caption{
    (Top) Effect of context length on model performance across datasets (prediction length = 24).
    (Bottom) Effect of prediction horizon on model performance across datasets (context length = 256).
  }
  \label{fig:context_and_horizon}
\end{figure*}

\vspace{-0.05in}
\textbf{Robustness Analysis1}:
Figure \ref{fig:context_and_horizon} probes Hopformer’s robustness on zero-shot setting by sweeping (top) the context window and (bottom) the forecast horizon on three synthetic datasets. The result yields two main take‑aways. (i) As the context window contracts, Hopformer retains most of its SPA‑based advantage over Chronos. In Sales2, shrinking the context from 512 to 32 steps raises the MASE of Chronos by 0.50, but Hopformer’s by only 0.30. (ii) Both models deteriorate as the horizon grows, yet Hopformer remains ahead across all ranges because it starts from a lower error. In Sales2, the jump from 24 to 120-step forecasts adds 0.12 MASE to Hopformer and 0.05 to Chronos, but the absolute error of Hopformer is still lower at every horizon. The SPA aggregation helps Hopformer preserve its advantage under both limited‑context and long‑range‑forecast settings.

\vspace{-0.05in}
\textbf{Robustness Analysis2}:
We evaluate the robustness of Hopformer on the real-world EPF dataset \citep{wang2024timexer} by varying the context length and prediction length. The results show that integrating Hopformer with Chronos consistently improves the performance of vanilla Chronos across all settings—zero-shot, full fine-tuning, and LoRA fine-tuning. Note that the performance of Hopformer variants differs slightly from Table~\ref{tab:main_results_tight}, as we conducted additional hyperparameter tuning for the GBDT-based covariate regressors (e.g., XGBoost, LightGBM) to ensure a fairer comparison.

Table \ref{tab:epf_robustness_context} summarizes the Mean Absolute Scaled Error (MASE) of Hopformer (built on top of Chronos) and vanilla Chronos when the available context ranges from 32 to 512 time steps. Table \ref{tab:epf_robustness_pred} summarizes the MASE of the models across varying prediction length. Four findings stand out. (i)\textbf{Zero‑shot performance}: Hopformer consistently beats Chronos at every context length, with the largest gain ($\sim {37.1}\%$) at the shortest window of 32 steps, as visualized in Figure \ref{fig:epf_robustness_context_horizon}. This suggests that the covariate‑driven expert pool provides valuable signal when historical information is scarce. (ii) \textbf{Fine‑tuned performance}: After full‑parameter or LoRA fine‑tuning, Hopformer still yields lower error than Chronos. Removing covariate effects in the first stage appears to simplify the residual dynamics, making the subsequent transformer easier to adapt. (iii) \textbf{Cross‑sectional aggregation}: In this two‑covariate setting, SPA and Lasso deliver comparable accuracy, indicating that with a very small covariate set the sparsity prior in SPA offers little advantage over a $\ell_{1}$‑penalized regression. (iv) \textbf{Long-horizon prediction}: Hopformer also surpasses Chronos at every horizon, as visualized in Figure \ref{fig:epf_robustness_context_horizon}. The largest gain ($\sim {25.8}\%$) is located on the longest horizon of 120 steps, again showing the value of the pool of experts driven by covariates ‐ as the prediction window expands.

\vspace{-0.05in}
\textbf{Visualization of Forecasting}:
Figure~\ref{fig:vis_sale1} in Appendix illustrates Hopformer process on the Sales1 dataset, providing an intuitive example of how the model leverages covariates. To complement this, Figures~\ref{fig:more_forecast_sales1} and \ref{fig:more_forecast_epf} visualize the ablation results from Table~\ref{tab:ablation_hopfomer}, highlighting Hopformer’s advantages in the Sales1 and EPF datasets, where covariates play a key role in improving prediction accuracy.


\vspace{-0.1in}
\section{Discussion and Conclusion}\label{sec:conclusion}
\vspace{-0.05in}

We proposed Hopformer, a novel two-stage forecasting framework designed for time series with high-dimensional covariates. The first stage performs trend extraction via sparse pattern aggregation, which adaptively combines a pool of regressors. This stage operates as a homogeneity pursuit mechanism, projecting multiple time series into a residual space with reduced structural variation. Theoretically, we establish an oracle inequality showing that SPA achieves near-optimal prediction relative to the best sparse expert combination, with a provable complexity-penalized bound. The second stage learns residual dependencies via Transformer fine-tuning. By using LoRA, we minimize the number of trainable parameters while retaining model flexibility. Under mild assumptions on temporal dependence and loss stability, we derive a generalization bound based on information-theoretic complexity, showing that PEFT methods such as LoRA reduce the mutual information between training data and model weights. Empirically, these two components enable Hopformer to deliver accurate, robust, and efficient forecasting across a wide range of time series tasks. Several directions remain open for exploration. First, exploring alternative parameter-efficient fine-tuning approaches beyond LoRA both theoretically and empirically could enhance flexibility in various scenarios. Second, incorporating task-aware pretraining strategies into the residual modeling may further improve cross-task generalization and long-horizon forecasting stability.

\appendix

\section{Appendix / supplemental material}
\label{sec:appendix}
\subsection{Proof of Theorem \ref{thm:spaoracle}}
\label{appendix:spa-proof}

\begin{lemma}[\citep{leung2006informationtheory}]\label{lemma:exp_screening}
Let \( M \) be a finite set of candidate models with \( |M| \) denoting its cardinality. For each model \( m \in M \), define \(\hat{r}_m = \mathbb{E} \left\| \hat{g}_m - \eta \right\|^2 \) as the empirical risk estimate of model \( m \).

Define the aggregation weights \( w_m \) are given by:
\begin{equation}
w_m = \frac{\pi_m \exp(-\hat{r}_m / 4)}{\sum_{m' \in M} \pi_{m'} \exp(-\hat{r}_{m'} / 4)},
\end{equation}
where  \(\pi_m = \exp(-C_m)\) satisfying \(\sum_{m \in M} \pi_m \leq 1\) with \(C_m\) denote complexity penalty term associated with model \( m \).

Then, the unbiased risk estimate for \(\hat{\mu}\) satisfies:
\begin{equation}
\hat{r} = \sum_{m \in M} w_m \hat{r}_m < \min_{m \in M} \left\{ \hat{r}_m + 4C_m \right\}
\end{equation}
\end{lemma}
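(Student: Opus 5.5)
The plan is to read the aggregate as a Gibbs (exponentially weighted) average, apply the Gibbs variational identity to it, and then bound the log-partition function by a single summand. Set
\[
Z:=\sum_{m\in M}\pi_m e^{-\hat r_m/4},
\]
so that $w_m=\pi_m e^{-\hat r_m/4}/Z$. Since $\log(w_m/\pi_m)=-\hat r_m/4-\log Z$, multiplying by $4w_m$ and summing over $m$ gives the identity
\[
\sum_{m\in M} w_m\hat r_m+4\sum_{m\in M} w_m\log\frac{w_m}{\pi_m}=-4\log Z .
\]
This is the Donsker--Varadhan / Gibbs variational identity specialised to the minimiser $w$.

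The second step is to show that the ``KL term'' $\sum_m w_m\log(w_m/\pi_m)$ is nonnegative, even though $\pi$ may be a sub-probability vector. By Jensen's inequality for the concave logarithm, taken under $w$,
\[
-\sum_m w_m\log\frac{\pi_m}{w_m}\ \ge\ -\log\sum_m \pi_m\ \ge\ 0,
\]
where the last inequality uses the hypothesis $\sum_m\pi_m\le 1$. Combined with the identity above, this yields $\hat r=\sum_m w_m\hat r_m\le -4\log Z$.

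The third step bounds $-4\log Z$ by each individual model. Every summand of $Z$ is nonnegative, so $Z\ge \pi_m e^{-\hat r_m/4}$ for each fixed $m$. Hence
\[
-4\log Z\le \hat r_m-4\log\pi_m=\hat r_m+4C_m .
\]
Taking the minimum over $m$ gives $\hat r\le\min_m\{\hat r_m+4C_m\}$.

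The main obstacle is the \emph{strict} inequality as stated. Strictness can come from either of two places:
\begin{itemize}
\item Step three is strict as soon as at least two models carry positive prior mass, since then $Z$ strictly exceeds any single summand.
\item Step two is strict whenever $\sum_m\pi_m<1$, or whenever $w\neq\pi$ with $\pi$ normalised.
\end{itemize}
I would state the lemma with ``$\le$'' in general, and add the remark that it becomes ``$<$'' under the nondegeneracy condition that at least two $\pi_m$ are positive. That condition holds for the SPA prior, since it gives mass $\tfrac12$ to the full pattern and also positive mass to sparse patterns.

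Finally, to obtain Theorem~\ref{thm:spaoracle}, I would take $C_{\bp}=\log(\pi_{\bp}^{-1})$. I would also rescale the risks by $n/\sigma^2$, which matches the $1/(4\sigma^2)$ temperature in \eqref{eq:spa}; this is what produces the factor $4\sigma^2/n$ in the bound. Jensen's inequality for the convex squared loss then passes the bound from the weighted risk to the risk of the averaged estimator $\hat g_{\mathrm{SPA}}$.
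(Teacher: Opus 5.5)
Your proof is correct and is essentially the paper's argument: the paper's identity $\hat r=\hat r_{\hat m}+4\bigl[C_{\hat m}-D(w\|\pi)+\log w_{\hat m}\bigr]$ is your Gibbs identity $\hat r+4D(w\|\pi)=-4\log Z$ combined with $-4\log Z=\hat r_{\hat m}+4C_{\hat m}+4\log w_{\hat m}$, so your bound $Z\ge\pi_m e^{-\hat r_m/4}$ is exactly the paper's $\log w_{\hat m}\le 0$. You are also more careful than the paper in two places: you prove $D(w\|\pi)\ge 0$ for a sub-probability $\pi$, and you note that strictness needs a nondegeneracy condition (it fails for $|M|=1$, $\pi=1$). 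One caution outside the lemma: your closing Jensen step toward Theorem~\ref{thm:spaoracle} does not work with the data-dependent SPA weights, because Jensen discards exactly the term $-\sum_{\mathbf{p}} w_{\mathbf{p}}\|\hat g_{\hat\omega_{\mathbf{p}}}-\hat g_{\mathrm{SPA}}\|^2$ that, via Stein's identity at temperature $4\sigma^2$, cancels the extra divergence created by the weights' dependence on the noise.
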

\begin{proof}
Let \(\hat{m} \in \arg\min_{m \in M} \left\{ \hat{r}_m + 4C_m \right\}\). Then, we can write:
$$\hat{r}_m = 4 \left[ \log \frac{\pi_m}{w_m} - \log \sum_{m'} \pi_{m'} \exp(-\hat{r}_{m'}/4) \right] 
= \hat{r}_{\hat{m}} + 4 \left[ C_{\hat{m}} - \log \frac{w_m}{\pi_m} + \log w_{\hat{m}} \right].$$
Therefore:
\begin{equation*}
\hat{r} = \sum_{m \in \mathcal{M}} w_m \hat{r}_m 
= \hat{r}_{\hat{m}} + 4 \left[ C_{\hat{m}} - D(w \| \pi) + \log w_{\hat{m}} \right]
< \min_{m \in \mathcal{M}} \left\{ \hat{r}_m + 4C_m \right\}.
\end{equation*}
\end{proof}
The proof of Theorem \ref{thm:spaoracle} follows from letting \[
\hat{r}_p = \frac{1}{\sigma^2} \sum_{k=1}^n \left(X_{t,k}^{(i)} - g_{\hat{\omega}_{\bp}}(\Zb_{t,k}^{(i)})\right)^2 + 2|\mathbf{p}|
\]

\subsection{Proof of Theorem \ref{thm:gen}} \label{appendix:info-proof}
\begin{lemma}[\cite{NIPS2017_info}]\label{lem:info}
Consider a pair of random variables $X$ and $Y$ with joint distribution $P_{X,Y}$, Let $\bar{X}$ be an independent copy of $X$, $\bar{Y}$ be an independent copy of $Y$, such that $P_{\bar{X}, \bar{Y}} = P_X \otimes P_Y$.
Let $f$ be $\sigma$-subgaussian under $P_{\bar{X}, \bar{Y}} = P_X \otimes P_Y$, then 
$$
\left| \mathbb{E}[f(X, Y)] - \mathbb{E}[f(\bar{X}, \bar{Y})] \right| \leq \sqrt{2\sigma^2 I(X; Y)},
$$ 
\end{lemma}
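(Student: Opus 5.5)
The plan is to prove Lemma~\ref{lem:info} from the Donsker--Varadhan variational representation of the KL divergence, applied to $P=P_{X,Y}$ and $Q=P_X\otimes P_Y=P_{\bar X,\bar Y}$. That representation states that for any measurable $g$ with $\mathbb{E}_Q[e^{g}]<\infty$,
\begin{equation*}
D_{KL}(P\|Q)\;\ge\;\mathbb{E}_P[g]-\log \mathbb{E}_Q\bigl[e^{g}\bigr].
\end{equation*}
By definition, $D_{KL}(P_{X,Y}\|P_X\otimes P_Y)=I(X;Y)$. If $I(X;Y)=\infty$ there is nothing to prove, so I will assume it is finite.

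Fix $\lambda\in\mathbb{R}$ and take $g=\lambda\bigl(f-\mathbb{E}[f(\bar X,\bar Y)]\bigr)$. The $\sigma$-subgaussianity of $f$ under $P_{\bar X,\bar Y}$ means exactly that
\begin{equation*}
\log\mathbb{E}\Bigl[e^{\lambda(f(\bar X,\bar Y)-\mathbb{E}f(\bar X,\bar Y))}\Bigr]\le \frac{\lambda^2\sigma^2}{2}.
\end{equation*}
In particular $\mathbb{E}_Q[e^{g}]<\infty$, so the representation applies. Writing $\Delta:=\mathbb{E}[f(X,Y)]-\mathbb{E}[f(\bar X,\bar Y)]$, it gives
\begin{equation*}
I(X;Y)\;\ge\;\lambda\Delta-\frac{\lambda^2\sigma^2}{2}\qquad\text{for all }\lambda\in\mathbb{R}.
\end{equation*}
The right-hand side is a concave quadratic in $\lambda$, maximized at $\lambda=\Delta/\sigma^2$ with maximum value $\Delta^2/(2\sigma^2)$. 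Hence $\Delta^2\le 2\sigma^2 I(X;Y)$, which is the claim $|\Delta|\le\sqrt{2\sigma^2 I(X;Y)}$. Allowing both signs of $\lambda$ is what yields the absolute value.

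The main obstacle is technical rather than conceptual: making sure $\mathbb{E}_P[g]$ is well defined when $f$ is unbounded, since subgaussianity is assumed only under the product measure and not under $P_{X,Y}$. I would handle this by first applying the inequality to truncations $f_c=\max(\min(f,c),-c)$. For these, $\mathbb{E}_P[f_c]$ is finite, and the subgaussian moment bound is inherited up to a vanishing centering error. I would then let $c\to\infty$ using monotone convergence separately on the positive and negative parts, noting that finiteness of $I(X;Y)$ together with the bound itself forces $\mathbb{E}_P|f|<\infty$.

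Alternatively, I could simply invoke the standard form of Donsker--Varadhan, which holds for all $g$ with $\mathbb{E}_Q e^{g}<\infty$ under the convention that $\mathbb{E}_P[g]$ is defined in $(-\infty,\infty]$, following \cite{NIPS2017_info}.
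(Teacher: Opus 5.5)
Your proof is correct and is essentially the standard argument behind this lemma; the paper does not prove it but imports it by citation from \cite{NIPS2017_info}, whose proof is the same: the Donsker--Varadhan representation of $I(X;Y)=D_{KL}(P_{X,Y}\|P_X\otimes P_Y)$ tested against $g=\lambda\bigl(f-\mathbb{E}f(\bar X,\bar Y)\bigr)$, the subgaussian log-MGF bound, and optimization of the resulting quadratic over $\lambda\in\mathbb{R}$. One small slip in your final remark: when $I(X;Y)<\infty$ and $\mathbb{E}_Q e^{g}<\infty$, it is $g^{+}$ that is forced to be $P$-integrable, so $\mathbb{E}_P[g]$ is well defined in $[-\infty,\infty)$ rather than $(-\infty,\infty]$ (and applying this with both signs of $\lambda$ is exactly what gives $\mathbb{E}_P|f|<\infty$).
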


Let $X = \bR_T$, $Y = W$ and $f(s, w) = \frac{1}{T-d} \sum_{i=d+1}^{T} \ell(w, x_i)
$, then we can express the empirical risk as $L_{\bR_T}(w) = f(\bR_T, w)$, and the population risk $L_{\cP}(w) = \mathbb{E}_{\cP}[f(\bR_T, w)]$.
 
Thus, the generalization error reformulates as 
$$
\mathbb{E} [L_{\cP}(\Wb) - L_{\bR_T}(\Wb)] = \mathbb{E}[f(\bar{\bR_T}, \bar{\Wb})] - \mathbb{E}[f(\bR_T, \Wb)].
$$ 

\begin{definition}[Stationarity]
    A random sequence \( S_\infty \) is stationary when all its finite-dimensional distributions are time-invariant: for all \( t \) and all non-negative integers \( i \) and \( j \), the random vectors \( S_{t:t+i} \) and \( S_{t+j:t+i+j} \) have the same distribution.
\end{definition}

\begin{definition}[$\beta$-Mixing]
    Consider a stationary random sequence \( S_\infty \) defined on a probability space \( (\Omega, \Sigma, P_\infty) \). Denote $S_{i:j} := (S_i, S_{i+1}, \ldots, S_j), S_\infty := S_{-\infty:\infty}$ an infinite dimensional sequence. Denote $\mathbb{P}_{i:j} \text{ and } \mathbb{P}_\infty$ as the associated joint distributions, and $\sigma_{i:j} = \sigma(S_{i:j})  \text{ and } \sigma_\infty = \sigma(S_\infty)$ as the $\sigma$-fields. Let \( P_0 \) be the restriction of \( P_\infty \) to \( \sigma_{-\infty:0} \), \( P_a \) be the restriction of \( P_\infty \) to \( \sigma_{a:\infty} \), and \( P_{0 \otimes a} \) be the restriction of \( P_\infty \) to \( \sigma(S_{\infty:0}, S_{a:\infty}) \). The coefficient of absolute regularity, or $\beta$-mixing coefficient, \( \beta_a \), is given by
\[
\beta_a := \left\| P_0 \times P_a - P_{0 \otimes a} \right\|_{TV},
\]
where \( \|\cdot\|_{TV} \) is the total variation norm. A stochastic process is absolutely regular, or $\beta$-mixing, if \( \beta_a \to 0 \) as \( a \to \infty \).
\end{definition}

\begin{lemma}\label{lem:subgau}
    Assuming the loss function $\ell(w, R)$ is $\sigma$-subgaussian, if the target time series is stationary and $\beta$-mixing, then there exists a constant $a > 0$ and an integer $m$ such that $2am \le T$ ensuring that the empirical loss $f(\bR_T,w)$ is $\frac{\sigma}{\sqrt{m}}$-subgaussian.
\end{lemma}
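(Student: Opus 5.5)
We will use the standard blocking technique for $\beta$-mixing sequences (Yu, 1994) and reduce the question to a subgaussian bound for independent blocks. Fix $w$ and set $Y_i := \ell(w,R_i) - L_{\cP}(w)$, which is centered by stationarity. Choose a block length $a$ and a number of block pairs $m$ with $2am \le T-d$. Discard the remainder; it changes $f$ by a deterministic-order amount that can be absorbed by adjusting constants or by redefining $f$ on the blocked index set. Partition the indices $d+1,\dots,d+2am$ into $2m$ consecutive blocks of length $a$, alternating ``odd'' blocks $B_1,B_3,\dots$ and ``even'' blocks $B_2,B_4,\dots$. Write the centered empirical loss as
\[
f(\bR_T,w)-L_{\cP}(w) = \tfrac12\Big(\tfrac1m\textstyle\sum_{j \text{ odd}} \bar Y_{B_j}\Big) + \tfrac12\Big(\tfrac1m\textstyle\sum_{j \text{ even}} \bar Y_{B_j}\Big),
\]
where $\bar Y_{B}$ is the average of $Y_i$ over block $B$.

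First, we control a single block average. Each $\bar Y_{B_j}$ is an average of $a$ $\sigma$-subgaussian variables. By convexity of $x\mapsto e^{\lambda x}$ (Jensen over the block), $\mathbb{E}e^{\lambda \bar Y_B}\le \frac1a\sum_{i\in B}\mathbb{E}e^{\lambda Y_i}\le e^{\lambda^2\sigma^2/2}$. So each block average is $\sigma$-subgaussian regardless of the dependence inside the block.

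Second, we decouple. Blocks of the same parity are separated by gaps of length $a$. By Yu's lemma, $\big|\mathbb{E}h(B_1,B_3,\dots)-\mathbb{E}_{\otimes}h\big|\le (m-1)\beta_a\|h\|_\infty$, where $\mathbb{E}_{\otimes}$ denotes expectation under the product of the block marginals. Under the product measure the $m$ odd block averages are independent and $\sigma$-subgaussian, so their mean is $\sigma/\sqrt m$-subgaussian. Applying this to $h=e^{\lambda(\cdot)}$ requires a bounded test function. We would therefore truncate, or more simply interpret ``subgaussian'' through the MGF and use that $\beta$-mixing lets us choose $a$ large enough that $m\beta_a$ is negligible. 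This is where the existence quantifier ``there exists $a$ and $m$'' in the statement is used: the $\beta$-mixing assumption $\beta_a\to 0$ is what lets $m\beta_a$ be made small. The same holds for the even blocks.

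Third, we recombine. By Hölder (or Cauchy–Schwarz) on the MGF of the two halves, $\mathbb{E}e^{\lambda(\frac12 U+\frac12 V)}\le(\mathbb{E}e^{\lambda U}\,\mathbb{E}e^{\lambda V})^{1/2}$. Since $U$ and $V$ are each $\sigma/\sqrt m$-subgaussian, the combination is $\sigma/\sqrt m$-subgaussian, as claimed. The same MGF bound holds under $P_{\bR_T}\otimes P_{\Wb}$ because $w$ is fixed. This is exactly the form needed to feed Lemma~\ref{lem:info} and obtain Theorem~\ref{thm:gen}.

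The main obstacle is the decoupling step. Yu's inequality gives an additive error $m\beta_a\|h\|_\infty$ rather than a multiplicative MGF bound, and the exponential is unbounded. We would handle this by truncating $Y_i$ at level $\sigma\sqrt{2\log(\cdot)}$, controlling the tails by subgaussianity, and choosing $a$ so that $m\beta_a e^{\lambda c}$ is dominated. If this cannot be made clean for every $\lambda$, we would settle for subgaussianity with a constant-factor inflation, or for the range of $\lambda$ that the Donsker–Varadhan argument behind Lemma~\ref{lem:info} actually requires.
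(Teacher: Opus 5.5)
Your plan is the paper's plan: split the window into $2m$ alternating blocks of length $a$, bound a single block average, couple the odd blocks to independent copies via Yu (1994), then recombine the odd and even halves. Your within-block Jensen step and the Cauchy--Schwarz recombination are correct, and better justified than the paper's appeal to ``linearity of subgaussianity''.

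The gap is the decoupling step you flag yourself, and it cannot be closed. The paper crosses it with an unjustified ``$\lesssim$''. But Yu's lemma only gives an additive error $(m-1)\beta_a\|h\|_\infty$ for bounded $h$, and the multiplicative MGF bound you need is false for every $m\ge 2$ under the stated hypotheses.

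Here is a counterexample. Take the stationary Markov chain with $R_{t+1}=R_t$ with probability $1-\epsilon$, and $R_{t+1}$ a fresh $\mathcal{N}(0,\sigma^2)$ draw otherwise. Take $\ell(w,R)=R$, or $|R|$ if you want a nonnegative loss; $|R|$ is still $\sigma$-subgaussian by Gaussian Lipschitz concentration, and the bound below only picks up a factor $e^{-\lambda\mathbb{E}|R|}$. The marginal is $\mathcal{N}(0,\sigma^2)$, and $\beta_a$ decays like $(1-\epsilon)^a$. Let $A$ be the event that no refresh occurs on the window $d+1,\dots,T$. It has probability $(1-\epsilon)^{T-d-1}$, it is independent of $R_{d+1}$, and on $A$ the empirical loss equals $R_{d+1}$. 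Hence
\[
\mathbb{E}\, e^{\lambda\left(f(\mathbf{R}_T,w)-\mathbb{E} f(\mathbf{R}_T,w)\right)} \;\ge\; \Pr(A)\,\mathbb{E}\, e^{\lambda R_{d+1}} \;=\; (1-\epsilon)^{T-d-1}\, e^{\lambda^2\sigma^2/2},
\]
which exceeds $e^{\lambda^2 c^2\sigma^2/(2m)}$ for all sufficiently large $\lambda>0$ whenever $c^2<m$. All odd-block averages coincide on $A$, so the same computation refutes the odd-block inequality directly.

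So neither the exact $\sigma/\sqrt{m}$ claim nor your constant-inflation fallback can hold. Truncation cannot rescue it either. What you discard when truncating is the joint upper tail of the block averages, and under dependence that tail can be as heavy as a single observation's, however small $m\beta_a$ is.

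What survives is limited:
\begin{itemize}
\item As literally worded, the lemma only asks for some integer $m$. Then $m=1$ follows from your Jensen step applied to the whole window, but that gives no rate in Theorem~\ref{thm:gen}.
\item The blocking argument is exact when the process is $a$-dependent (the mixing coefficient vanishes at the block gap), because then the odd blocks really are independent.
\item For a genuinely $\beta$-mixing process, the correct replacement is not a subgaussianity statement but an extra term in Theorem~\ref{thm:gen}. For losses with range $B$: couple the odd blocks to independent blocks $\mathbf{U}^*$ (Berbee's coupling), built from the data plus independent randomness so that data processing gives $I(\mathbf{U}^*;\mathbf{W})\le I(\mathbf{R}_T;\mathbf{W})$. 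Then apply the independent-block information bound and pay an additive $B(m-1)\beta_a$.
\item Your last fallback, restricting $\lambda$, likewise changes what Lemma~\ref{lem:info} delivers rather than proving the lemma as intended.
\end{itemize}
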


\begin{proof}
    We divide the sequence $S_{d+1:T}$ into $2m$ blocks of length $a$, such that $2ma + d= T$. Identify the blocks as follows: 
    \[
U_j = \{R_i : 2(j - 1)a + d + 1 \leq i \leq (2j - 1)a + d\},
\]
\[
V_j = \{R_i : (2j - 1)a + d + 1 \leq i \leq 2ja + d\}.
\]
Let \(\mathbf{U}\) be the sequence of odd blocks, and let \(\mathbf{V}\) be the sequence of even blocks. With the mixing conditions (need $\beta$-mixing assumption), we choose $a$ large enough so that the odd blocks are almost independent, but at the same time small enough so that the odd blocks behave similarly to the original mixing sequence. Let \( \mathbf{U'} \) be a sequence of identically distributed independent blocks such that each block has the same distribution as a block from the original sequence:
\[
\mathcal{L}(U_j') = \mathcal{L}(U_j) = \mathcal{L}(U_1).
\]

By the linearity of subgaussianity, the block-wise empirical average is given by:
\[
\hat{f}_{U_j}(w) = \frac{1}{a} \sum_{i \in U_j} \ell(w, R_i)
\]
remains $\sigma$-subgaussian. By \cite{Yu1994mixing}, for the odd blocks sequence ${f}_{\Ub}(w) = \frac{1}{m} \sum_{j=1}^{m} \hat{f}_{U_j}(w)$, we have
$$
\log \mathbb{E}\left[e^{\lambda ({f}_{\Ub}(w) - \mathbb{E}({f}_{\Ub}(w)))}\right] \lesssim \log \mathbb{E}\left[e^{\lambda ({f}_{\Ub'}(w) - \mathbb{E}({f}_{\Ub'}(w)))}\right] \le \frac{\lambda^2\sigma^2}{2m},
$$
which implies that ${f}_{\Ub}(w)$ is $\frac{\sigma}{\sqrt{m}}$-subgaussian.

Applying the same construction to the sequence of even blocks, the total empirical loss $f(\bR_T,w)$ can be viewed as the sum of two $\frac{\sigma}{\sqrt{m}}$-subgaussian variables. By the linearity property of subgaussianity, it follows that $f(\bR_T,w)$ is also $\frac{\sigma}{\sqrt{m}}$-sub-Gaussian.

\end{proof}

{\begin{remark}
While Lemma~\ref{lem:subgau} assumes strict stationarity and $\beta$-mixing to control the concentration of empirical risk, similar results can be extended to locally stationary time series with bounded local mixing coefficients. In such cases, the subgaussian property holds approximately within short windows, potentially with additional approximation error terms.
\end{remark}}

  Applying Lemma \ref{lem:info} and Lemma \ref{lem:subgau},
we obtain the generalization bound as in \eqref{eq:genInfo}
\begin{equation*}
    \mathbb{E} [L_{\cP}(\Wb) - L_{\bR_T}(\Wb)] \leq \sqrt{\frac{2\sigma^2}{m} I(\bR_T; \Wb)}.
\end{equation*} 

We follow the blocking technique from the proof above to divide $S_{d+1:N}$ into $2m$ blocks of length $a$, ensuring that sufficiently spaced blocks are approximately independent under the assumption of mixing. 

{\begin{corollary}[Generalization Bound Under Local Stationarity]\label{cor:localsta}
Let the residual sequence $\{R_t\}_{t=1}^T$ be \textbf{locally stationary} in the sense of \cite{dahlhaus1997fitting}: there exists a family of stationary processes $\{\bR_t^{(u)}\}_{u \in [0,1]}$ such that
\[
    \sup_t \mathbb{E} \left| \bR_t - \bR_t^{(t/T)} \right| \leq \delta_T,
\]
where $\delta_T \to 0$. Assume each $\{\bR_t^{(u)}\}$ is $\beta$-mixing with mixing coefficients uniformly bounded by $\beta(m)$.

Then there exist constants $a > 0$ and $m$ with $2am \le T$ such that
\[
    \mathbb{E}\left[L_{\mathcal{P}}(\Wb) - L_{ \bR_T}(\Wb)\right]
    \le \sqrt{2\sigma^2 m^{-1} I( \bR_T; \Wb)} + C \delta_T,
\]
for a constant $C$ independent of $T$.
\end{corollary}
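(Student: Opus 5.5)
The idea is to compare the locally stationary sequence with a stationary surrogate, apply Theorem~\ref{thm:gen} to the surrogate, and absorb the difference into an error term of order $\delta_T$.

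\textbf{Step 1: split the generalization gap.} Let $\tilde{\bR}_T=(\bR_1^{(1/T)},\dots,\bR_T^{(T/T)})$ denote the surrogate sequence, coupled with $\bR_T$ so that the bound $\sup_t\mathbb{E}|\bR_t-\bR_t^{(t/T)}|\le\delta_T$ holds. We write
\[
\mathbb{E}[L_{\mathcal P}(\Wb)-L_{\bR_T}(\Wb)]
=\mathbb{E}[L_{\mathcal P}(\Wb)-L_{\tilde\bR_T}(\Wb)]
+\mathbb{E}[L_{\tilde\bR_T}(\Wb)-L_{\bR_T}(\Wb)].
\]

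\textbf{Step 2: control the second term.} We add the assumption that $\ell(w,\cdot)$ is $L$-Lipschitz uniformly in $w$; the corollary implicitly needs some such regularity for $C$ to exist. Then
\[
\bigl|L_{\tilde\bR_T}(w)-L_{\bR_T}(w)\bigr|\le \frac{L}{T-d}\sum_{t=d+1}^{T}\bigl|\bR_t^{(t/T)}-\bR_t\bigr|.
\]
This holds pointwise in $w$, so it also holds with $w$ replaced by the data-dependent $\Wb$. Taking expectations gives a bound of $L\delta_T$.

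\textbf{Step 3: control the first term.} We repeat the argument of Theorem~\ref{thm:gen} with $\tilde\bR_T$ in place of $\bR_T$. We use the same odd/even blocking with $2m$ blocks of length $a$, $2am\le T$, as in Lemma~\ref{lem:subgau}. Within each block the surrogate is a stationary $\beta$-mixing process, because the parameter $u=t/T$ varies by at most $a/T$ across a block. Since the mixing coefficients are uniformly bounded by $\beta(\cdot)$, the coupling with independent blocks from \cite{Yu1994mixing} applies with the same $a$ for every block. This yields $\sigma/\sqrt m$-subgaussianity of $f(\tilde\bR_T,w)$.

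Lemma~\ref{lem:info} then bounds the first term by $\sqrt{2\sigma^2 m^{-1} I(\tilde\bR_T;\Wb)}$. Here $L_{\mathcal P}$ is interpreted as the expectation under the independent copy. A mismatch between the population risk of the surrogate and $L_{\mathcal P}$ is again at most $L\delta_T$ by the Lipschitz argument of Step 2.

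\textbf{Step 4: return from the surrogate to the data.} To get $I(\bR_T;\Wb)$ rather than $I(\tilde\bR_T;\Wb)$, we construct $\tilde\bR_T$ as a coupling such that $\Wb$ depends on the data only through $\bR_T$. That is, $\tilde\bR_T\to\bR_T\to\Wb$ is a Markov chain. The data processing inequality then gives $I(\tilde\bR_T;\Wb)\le I(\bR_T;\Wb)$. Collecting terms yields the stated bound with $C=2L$ (or $3L$), which is independent of $T$.

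\textbf{Main obstacle.} The hardest step is Step 3, for two reasons. First, the blocks are no longer identically distributed, so Yu's blocking lemma must be applied in its non-identical form; the independent replacement blocks $U'_j$ are distributed as $\mathcal L(U_j)$ rather than $\mathcal L(U_1)$. Second, one must check that the subgaussian constant does not deteriorate. My plan is to use only that each $\hat f_{U_j'}$ is $\sigma$-subgaussian and that the $U_j'$ are independent, which requires no identical distribution. The additive mixing penalty $m\beta(a)$ is then absorbed, as in Lemma~\ref{lem:subgau}, into the ``$\lesssim$'' by taking $a$ large enough.
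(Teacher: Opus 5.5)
Your Steps 1, 2 and 4 are fine. The Lipschitz assumption you add is genuinely needed for a term $C\delta_T$ to make sense. Since $\Wb$ is drawn from $P_{\Wb|\bR_T}$ with randomness independent of everything else, $\tilde{\bR}_T\to\bR_T\to\Wb$ is a Markov chain, so the data-processing step is valid.

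The gap is in Step 3. Yu's blocking coupling \cite{Yu1994mixing}, in any form, requires control of the $\beta$-mixing coefficient at lag $a$ of the sequence you actually cut into blocks. Here that sequence is the diagonal $\tilde{\bR}_T=(\bR_t^{(t/T)})_{t\le T}$. Its coefficient measures the dependence between $\bR_t^{(t/T)}$ and $\bR_s^{(s/T)}$ with $|t-s|\ge a$, which are coordinates of two \emph{different} frozen processes. The hypothesis only makes each $\{\bR_t^{(u)}\}_t$, for fixed $u$, stationary and $\beta$-mixing. It says nothing about the joint law of $\bR^{(u)}$ and $\bR^{(v)}$ for $u\neq v$. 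Also, with no continuity in $u$ assumed, the surrogate is not even approximately stationary within a block.

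The diagonal can in fact be completely dependent. Fix $T$ and let $\eta\sim\mathcal N(0,1)$. For each $t$, let $\{\bR_s^{(t/T)}\}_{s\in\mathbb Z}$ consist of fresh independent $\mathcal N(0,1)$ variables, except that its $t$-th coordinate is $\eta$. Every frozen process is i.i.d., so $\beta\equiv 0$, yet $\bR_t^{(t/T)}=\eta$ for all $t$. With $R_t:=\eta$ you even get $\delta_T=0$. Then $f(\tilde{\bR}_T,w)=\ell(w,\eta)$, whose variance does not shrink with $m$. So it is not $\sigma/\sqrt m$-subgaussian for $m>1$ in general; take for example $\ell(w,r)=\mathbf 1\{r>0\}$.

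Your ``main obstacle'' paragraph correctly handles the non-identical distribution of the blocks, but that is not where the difficulty lies. As written, your argument only delivers $m=1$, i.e.\ the trivial bound.

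To close the gap you need an additional \emph{joint} dependence assumption. Two natural options:
\begin{itemize}
\item The observed array $\{R_t\}_{t\le T}$ is itself $\beta$-mixing (in the non-stationary sense), with coefficients bounded by $\beta(\cdot)$ uniformly in $T$.
\item The family has a common-innovation representation $\bR_t^{(u)}=G(u,\varepsilon_t,\varepsilon_{t-1},\dots)$, from which the diagonal inherits dependence decay.
\end{itemize}
You also need $\ell(w,\bR_t^{(u)})$ to be $\sigma$-subgaussian for every $u$. Under the first option you can block $\bR_T$ directly and Step 4 becomes unnecessary. The surrogate together with the Lipschitz condition is then used only to define $L_{\mathcal P}$ and to produce the $C\delta_T$ term.

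Separately, the coupling error $(m-1)\beta(a)$ enters the moment generating function additively and does not vanish as $\lambda\to 0$. It therefore cannot be ``absorbed'' into a $\sigma/\sqrt m$ subgaussian constant by taking $a$ large: with $2am\le T$, $\beta(a)$ is small but not zero. Handled correctly, it leaves an additive term of order $m\beta(a)\sup|\ell|$ (for bounded losses) in the final bound.

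This last defect is inherited from Lemma~\ref{lem:subgau}. For this corollary the paper itself offers only the remark that subgaussianity holds approximately on short windows up to an approximation error, so neither issue is addressed there either.
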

}

By the data processing inequality, we have 
$$
I(\mathbf{W} + \Delta \mathbf{W}; \bR_T \mid P_{\Wb|\bR_T}) \leq I(\Delta \mathbf{W}; \bR_T \mid P_{\Wb|\bR_T}, \mathbf{W}).
$$ 
Using the chain rule and the assumption that $\mathbf{W}$ is independent of $\bR_T$, the mutual information can be decomposed as 
$$
I(\mathbf{W}; \bR_T \mid P_{\Wb|\bR_T}) + I(\Delta \mathbf{W}; \bR_T \mid P_{\Wb|\bR_T}, \mathbf{W}) = I(\Delta \mathbf{W}; \bR_T \mid P_{\Wb|\bR_T}, \mathbf{W}).
$$

Combining these results with \ref{eq:genInfo}, and apply the standard inequality that mutual information is always upper bounded by entropy ($I\left(\left\{\mathbf{W}_q^i,\mathbf{W}_k^i, \mathbf{W}_v^i \right\}_{i \in \mathcal{I}}; \bR_T\right) 
\leq H\left(\left\{\mathbf{W}_q^i,\mathbf{W}_k^i, \mathbf{W}_v^i \right\}_{i \in \mathcal{I}}\right)$), we obtain the generalization bound 
\begin{align*}
\mathbb{E} [L_{\cP}(\Wb) - L_{\bR_T}(\Wb)] &\leq \sqrt{\frac{2\sigma^2}{m} I(\Delta \mathbf{W}; \bR_T \mid P_{\Wb|\bR_T}, \mathbf{W})}    \\
&=\sqrt{\frac{2\sigma^2}{m} I\left(\left\{\mathbf{W}_q^i,\mathbf{W}_k^i, \mathbf{W}_v^i \right\}_{i \in \mathcal{I}}; \bR_T \mid P_{\Wb_{\{q,k,v\}}|\bR_T}, \mathbf{W}\right)} \\
&\leq \sqrt{\frac{2\sigma^2}{m} H\left(\left\{\mathbf{W}_q^i,\mathbf{W}_k^i, \mathbf{W}_v^i \right\}_{i \in \mathcal{I}}\right)} \\
&\leq \sqrt{\frac{6\sigma^2}{m} qr \sum_{i\in \cI}(d_{in}+d_{out})},
\end{align*}
where $\mathbf{W}_q^i,\mathbf{W}_k^i, \mathbf{W}_v^i \in \mathbb{R}^{d_{in}\times d_{out}}$. The last inequality holds because entropy is upper bounded by the uniform distribution over its possible support set.

\paragraph{Remark.} For a more general setting that each attention head \( i \in \mathcal{I} \) with:
\begin{itemize}
    \item Query matrix: \( \mathbf{W}_q^i \in \mathbb{R}^{d_{\text{in}} \times d_k} \),
    \item Key matrix: \( \mathbf{W}_k^i \in \mathbb{R}^{d_{\text{in}} \times d_k} \),
    \item Value matrix: \( \mathbf{W}_v^i \in \mathbb{R}^{d_{\text{in}} \times d_v} \).
\end{itemize}

Thus, the total entropy bound is:

\[
H\left(\left\{\mathbf{W}_q^i,\mathbf{W}_k^i, \mathbf{W}_v^i \right\}_{i \in \mathcal{I}}\right) 
\leq qr \sum_{i \in \mathcal{I}} \left[2(d_{\text{in}} + d_k) + (d_{\text{in}} + d_v)\right].
\]

Applying the mutual information bound, we conclude:

\[
I\left(\left\{\mathbf{W}_q^i,\mathbf{W}_k^i, \mathbf{W}_v^i \right\}_{i \in \mathcal{I}}; \bR_T \right)
\leq qr \sum_{i \in \mathcal{I}} \left[2(d_{\text{in}} + d_k) + (d_{\text{in}} + d_v)\right].
\]

\paragraph{Remark.} Consider a pre-trained model with weights $\Wb_0$ (independent of $\bR_T$) and LoRA’s learned low-rank parameters $\Delta \Wb=\Bb\Ab$. Since $\Wb = \Wb_0 + \Delta \Wb$ and $\Wb_0$ are fixed, $I(\bR_T;\Wb) = I(\bR_T;\Delta \Wb)$. If $\Delta \Wb$ contains $d$ parameters in total (with a suitably discretized or bounded range), an upper bound on its entropy is $H(\Delta \Wb)\approx d \log|\text{Range}|$. For low-rank $\Ab\in \mathbb{R}^{m\times r}, \Bb\in \mathbb{R}^{r\times n}$, $d = r(m+n)$ which is much smaller than the full model’s parameter count $mn$. Thus one can derive $I(\bR_T;\Wb)\le H(\Delta \Wb) = \mathcal{O}(r(m+n))$ (in bits). As $r, m, n$ are modest for LoRA, the information bound is dramatically smaller. This back-of-the-envelope derivation aligns with the idea that LoRA’s limited parameter count yields a provably smaller $I(\bR_T; \Wb)$, reinforcing why it generalizes well under the stability condition.



\section{Synthetic Data}
\label{sec:synthetic}


\subsection*{Synthetic Sale data - Sale1}

We generate synthetic store sales data across 200 stores, each parameterized by independent and identically distributed (i.i.d.) parameters drawn from normal distributions. Each store $i$ has unique parameters including amplitude $A_i$, frequency $f_i$, baseline sales $B_i$, and covariate sensitivities such as promotion effect $P_i$, temperature effect $T_i$, price effect $C_i$, promotion probability $p_i$, temperature noise $\sigma_{T_i}$, and sales noise $\sigma_{S_i}$. The mean and standard deviation of each of the parameters are in Table \ref{tab:synthetic_params_compact}. The sales data for each store at time $t$ are generated as follows:
\begin{equation*}
\text{Sales}_{i,t} = B_i + A_i \cdot \sin\left(\frac{2\pi t}{f_i}\right) 
+ P_i \cdot \text{promotion}_{i,t} 
+ T_i \cdot \text{temperature}_{i,t} 
+ C_i \cdot \text{price}_{i,t} 
+ \epsilon_{i,t}, \quad \epsilon_{i,t} \sim \mathcal{N}(0, \sigma_{S_i}^2)
\end{equation*}
Here, the promotion occurrence is sampled using the store-specific promotion probability $p_i$. The temperature follows a seasonal pattern with added Gaussian noise parameterized by $\sigma_{T_i}$. The term $\varepsilon_{i,t}$ represents Gaussian noise with standard deviation $\sigma_{S_i}$, capturing real-world randomness in sales. The covariates (promotion, temperature, and price) are known in advance and used as inputs for forecasting.
\begin{table}[ht]
\centering
\small
\caption{Mean and standard deviation of store-specific parameters}
\setlength{\tabcolsep}{5pt}
\begin{tabular}{lcccccccc}
\toprule
& \( A \) & \( f \) & \( B \) & \( P \) & \( T \) & \( C \) & \( p \) & \( \sigma_T \), \( \sigma_S \) \\
\midrule
Mean     & 80 & \{7,14,30,90\} & 200 & 30  & 0.3  & -12  & 0.20 & 2.0, 10.0 \\
Std. Dev.& 30 & --             & 80  & 10  & 0.1  & 5    & 0.05 & 0.5, 3.0  \\
\bottomrule
\end{tabular}
\label{tab:synthetic_params_compact}
\end{table}

Figure~\ref{fig:sales1_generation} illustrates key aspects of the synthetic \textsc{Sale1} dataset. The left panel shows the distribution of store-specific parameters such as promotion effects, baseline sales, and trend coefficients, highlighting the diversity across stores. The right panel presents sample sales trajectories from randomly selected stores, demonstrating realistic seasonality and promotion-driven variability embedded in the generated data.
\begin{figure}[ht]
  \centering
  \begin{subfigure}[b]{0.48\linewidth}
    \centering
    \includegraphics[width=\linewidth]{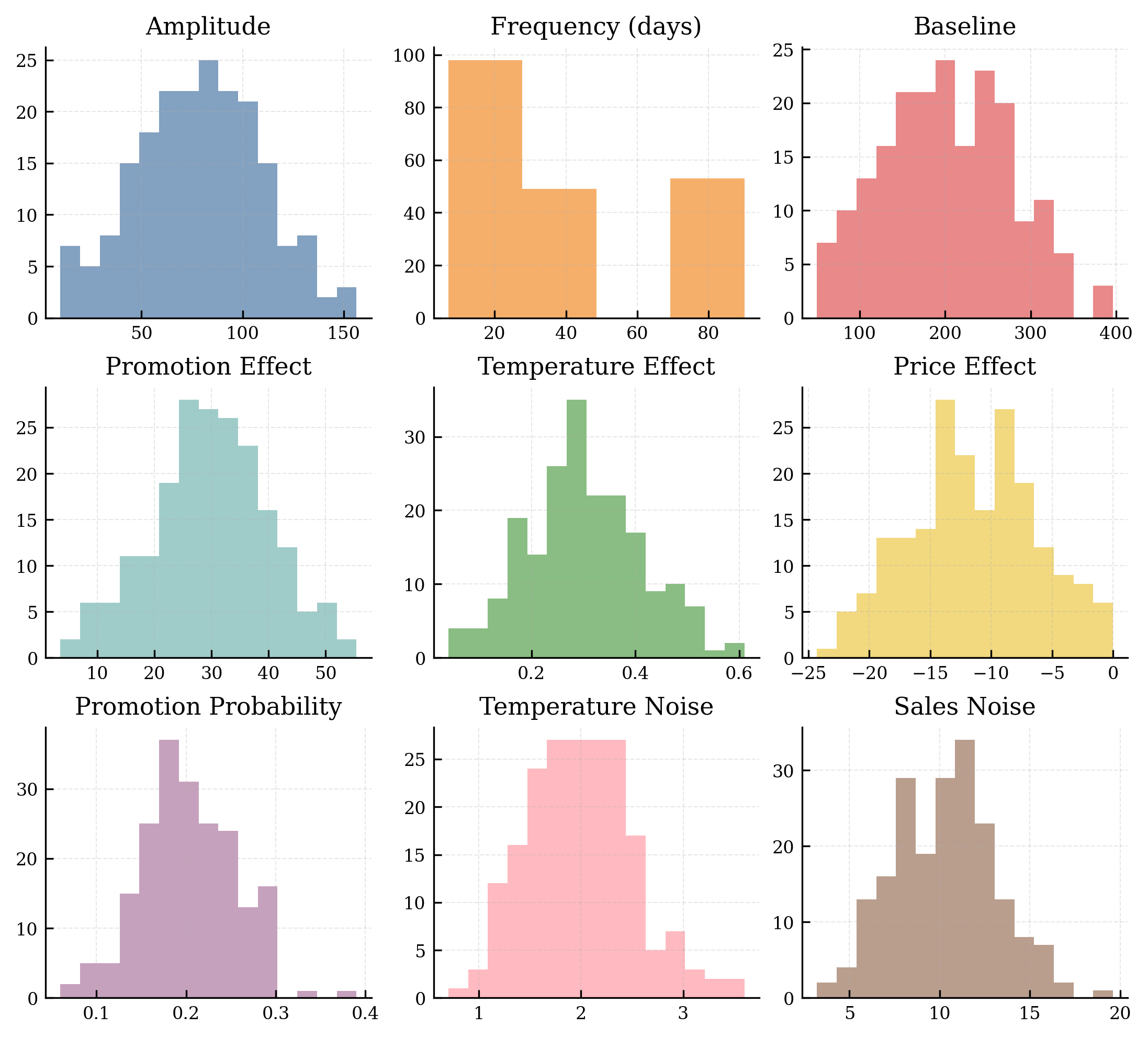}
    \caption{Distribution of store-specific parameters used for synthetic data (\textsc{Sale1}) generation (e.g., promotion effect, baseline sales, trend).}
    \label{fig:sale1_store_param_dist}
  \end{subfigure}
  \hfill
  \begin{subfigure}[b]{0.48\linewidth}
    \centering
    \includegraphics[width=\linewidth]{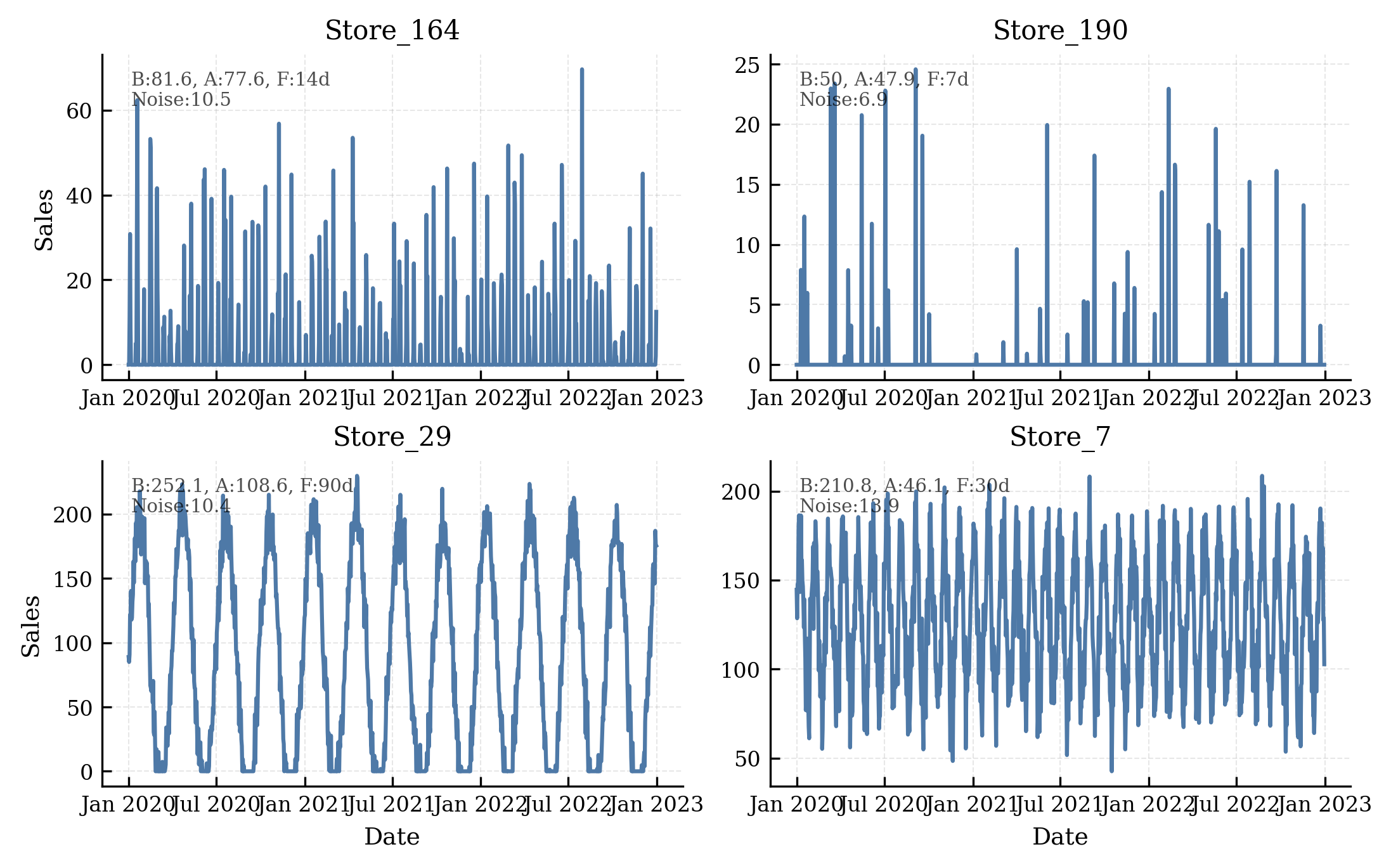}
    \caption{Sample sales trajectories from randomly selected stores in \textsc{Sale1}. Seasonal and promotional effects are clearly visible.}
    \label{fig:sale1_store_sales_sample}
  \end{subfigure}
  \caption{Illustration of synthetic dataset (\textsc{Sale1}) generation. Left: parameter distribution across stores. Right: sample time series showing temporal dynamics.}
  \label{fig:sales1_generation}
\end{figure}

\subsection*{Synthetic Sale data - Sale2}

We simulate realistic sales patterns across 200 stores by parameterizing each store with a high-dimensional vector of behavioral and structural attributes sampled independently. Each is parameterized by independent and identically distributed (i.i.d) parameters drawn from normal distributions, where the mean and standard deviation are shown in Table \ref{tab:complex_params}. For each store $i$, we define the daily sales $y_{i,t}$ as a combination of trends, seasonality, covariate interactions, and stochastic noise:
\begin{equation*}
y_{i,t} = \underbrace{B_i + \alpha_i t + A_i \sin\left(\frac{2\pi t}{f_i}\right) + S_i \sin\left(\frac{2\pi t}{365}\right)}_{\text{trend + intra-cycle + annual seasonality}} + E^{\text{cov}}_{i,t} + \varepsilon_{i,t}
\end{equation*}
Here, $B_i$ is the baseline sales, $\alpha_i$ is the trend coefficient, $A_i$ is the amplitude of intra-cycle seasonality with frequency $f_i$, and $S_i$ controls the strength of annual seasonality. The term $\varepsilon_{i,t} \sim \mathcal{N}(0, \sigma_{S_i}^2)$ is Gaussian noise. The covariate effect $E^{\text{cov}}_{i,t}$ incorporates multiple nonlinear and interactive effects:
\begin{align*}
E^{\text{cov}}_{i,t} = \ & \beta_{\text{promo}} \cdot \text{promo}_{i,t} 
+ \beta_{\text{holiday}} \cdot \text{holiday}_t 
+ \beta_{\text{weekend}} \cdot \text{weekend}_t 
+ \beta_{\text{competitor}} \cdot \text{comp\_promo}_{i,t} \nonumber \\
& + \beta_{\text{inventory}} \cdot \text{inventory}_{i,t}
+ m_{i,t}
+ \beta_{\text{temp}} \cdot \left(1 - |T_{i,t} - T_i^{*}|\right) 
+ \beta_{\text{price}} \cdot \text{price}_{i,t}^{\eta_i}
\end{align*}
\noindent where:
 $\text{promo}_{i,t}$, $\text{holiday}_t$, and $\text{weekend}_t$ are binary indicators for promotion, holiday, and weekend;
 $\text{comp\_promo}_{i,t}$ indicates competitor promotions;
 $\text{inventory}_{i,t}$ and $m_{i,t}$ represent inventory effects and exponentially decayed marketing memory, respectively;
 $T_{i,t}$ is the daily temperature and $T_i^*$ is the store-specific optimal temperature;
 $\text{price}_{i,t}^{\eta_i}$ models nonlinear price elasticity with elasticity coefficient $\eta_i$. These represents known covariates in the dataset. 

\begin{table}[ht]
\centering
\small
\caption{Parameter distributions used for complex synthetic data generation.}
\setlength{\tabcolsep}{4pt}
\begin{tabular}{lccccccccccc}
\toprule
\textbf{Symbol} & $\mathcal{A}$ & $\mathcal{B}$ & $f$ & $\beta_{\text{promo}}$ & $\beta_{\text{weekend}}$ & $\beta_{\text{holiday}}$ & $\beta_{\text{temp}}$ & $T^*$ & $\beta_{\text{price}}$ & $\eta$ \\
\midrule
Mean            & 80 & 200 & \{7,14,30,90\} & 30 & 15 & 40 & 0.3 & 22 & -12 & 1.5 \\
Std             & 30 & 80 & --- & 10 & 5 & 15 & 0.1 & 3 & 5 & 0.3 \\
\midrule
\textbf{Symbol} & $\beta_{\text{comp}}$ & $\beta_{\text{mkt}}$ & $\lambda$ & $\beta_{\text{inv}}$ & $S$ & $\alpha$ & $p$ & $p_{\text{comp}}$ & $\sigma_T$ & $\sigma_S$ \\
\midrule
Mean            & -15 & 0.4 & 0.85 & 0.2 & 30 & 0.01 & 0.2 & 0.15 & 2.0 & 10.0 \\
Std             & 5 & 0.1 & 0.05 & 0.05 & 10 & 0.005 & 0.05 & 0.05 & 0.5 & 3.0 \\
\bottomrule
\end{tabular}
\label{tab:complex_params}
\end{table}

Table~\ref{fig:sales2_generation} illustrates key aspects of the synthetic \textsc{Sale2} dataset. The left panel shows the distribution of store-specific parameters such as promotion effects, baseline sales, and trend coefficients, highlighting the diversity across stores. The right panel presents sample sales trajectories from randomly selected stores, demonstrating realistic seasonality and promotion-driven variability embedded in the generated data.
\begin{figure}[ht]
  \centering
  \begin{subfigure}[b]{0.48\linewidth}
    \centering
    \includegraphics[width=\linewidth]{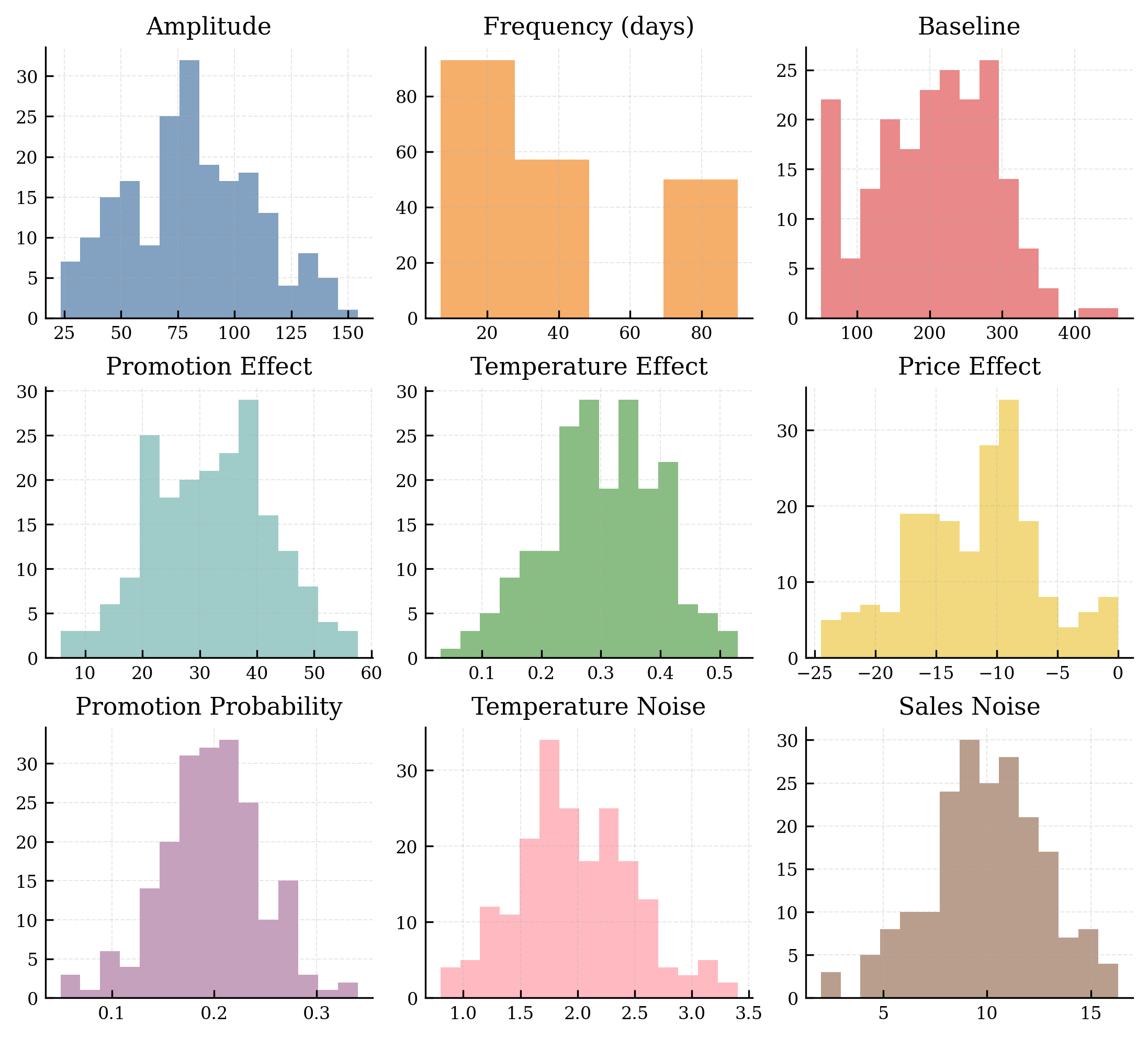}
    \caption{Distribution of store-specific parameters used for synthetic data (\textsc{Sale2}) generation (e.g., promotion effect, baseline sales, trend).}
    \label{fig:sale2_store_param_dist}
  \end{subfigure}
  \hfill
  \begin{subfigure}[b]{0.48\linewidth}
    \centering
    \includegraphics[width=\linewidth]{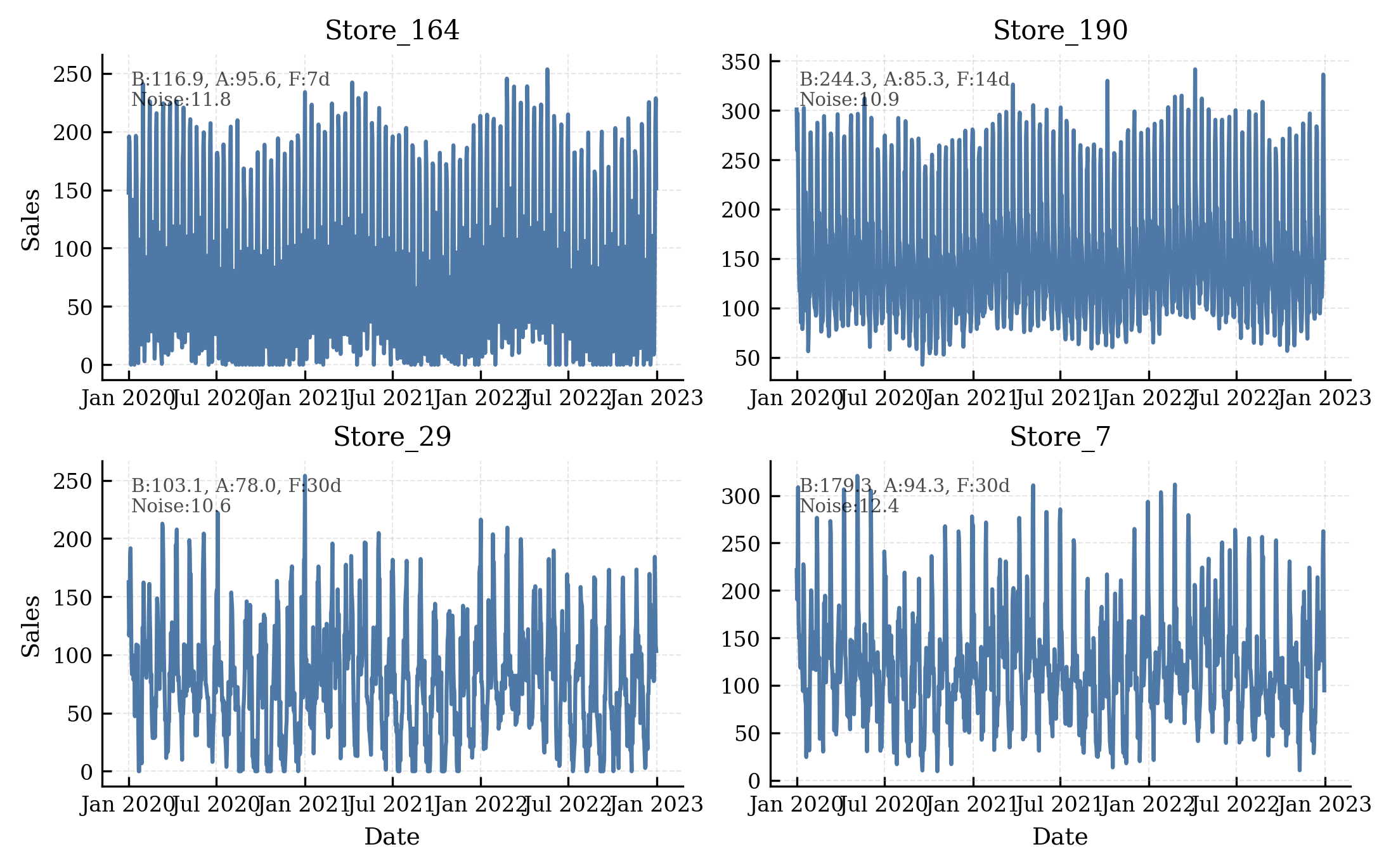}
    \caption{Sample sales trajectories from randomly selected stores in \textsc{Sale2}. Seasonal and promotional effects are clearly visible.}
    \label{fig:sale2_store_sales_sample}
  \end{subfigure}
  \caption{Illustration of synthetic dataset (\textsc{Sale2}) generation. Left: parameter distribution across stores. Right: sample time series showing temporal dynamics.}
  \label{fig:sales2_generation}
\end{figure}

\subsection*{Synthetic Electricity Load Generation}

We simulate regional electricity load patterns using a high-fidelity parametric model that incorporates weather, calendar, infrastructure, and market dynamics. For each region $r$ and time $t$, the electricity load $L_{r,t}$ is modeled as:
\begin{equation*}
L_{r,t} = \underbrace{B_r + \alpha_r t}_{\text{trend}} + \underbrace{D_r(t) + W_r(t) + Y_r(t)}_{\text{daily, weekly, yearly patterns}} + E^{\text{cov}}_{r,t} + \varepsilon_{r,t}
\end{equation*}
Here, $B_r$ is the base load, $\alpha_r$ is the regional trend (load growth), and $D_r(t)$, $W_r(t)$, $Y_r(t)$ are the daily, weekly, and yearly seasonal patterns. The covariate component $E^{\text{cov}}_{r,t}$ accounts for complex external influences:
\begin{align*}
E^{\text{cov}}_{r,t} = & \ \beta_{\text{temp}} \cdot f_T(T_{r,t}) 
+ \beta_{\text{humid}} \cdot H_{r,t} 
+ \beta_{\text{wind}} \cdot W_{r,t} 
+ \beta_{\text{solar}} \cdot S_{r,t} \nonumber \\
& + \beta_{\text{weekend}} \cdot \text{Weekend}_t 
+ \beta_{\text{holiday}} \cdot \text{Holiday}_t 
+ \beta_{\text{DST}} \cdot \text{DST}_t 
+ \beta_{\text{outage}} \cdot \text{PlannedOutage}_t
\end{align*}
The function $f_T(\cdot)$ captures the nonlinear sensitivity to temperature using region-specific parameters (e.g., piecewise or quadratic effects). Additional noise $\varepsilon_{r,t} \sim \mathcal{N}(0, \sigma^2_r)$ accounts for random fluctuations. Additionally, the nonlinear effect of temperature on load is modeled using asymmetric thresholds for heating and cooling:
\[
f_T(T) = \gamma_{\text{cool}} \cdot \max(0, T - T^C)^2 + \gamma_{\text{heat}} \cdot \max(0, T^H - T)^2,
\]
where: $T$ is the observed temperature; $T^C$ is the cooling threshold (e.g., 22°C); $T^H$ is the heating threshold (e.g., 15°C); $\gamma_{\text{cool}}$, $\gamma_{\text{heat}}$ are the sensitivity coefficients for cooling and heating loads. This formulation captures the fact that electricity demand rises nonlinearly when temperatures deviate from the comfort band.

For market simulation, we also define a dynamic electricity price:
\begin{equation*}
P_{r,t} = \beta_{\text{base}} + \beta_{\text{peak}} \cdot \mathbb{I}[L_{r,t} > \text{Cap}_r]^{\gamma_r} + \nu_{r,t}
\end{equation*}
Here, $\mathbb{I}[L_{r,t} > \text{Cap}_r]$ is an indicator for capacity exceedance, with exponent $\gamma_r$ modeling price spikes, and $\nu_{r,t} \sim \mathcal{N}(0, \sigma^2_P)$ denotes price noise. The framework supports renewable volatility and substitution through additional interaction terms.

\begin{table}[ht]
\centering
\small
\caption{Parameter ranges used to generate region-specific electricity load and price patterns.}
\setlength{\tabcolsep}{4pt}
\begin{tabular}{llcc}
\toprule
\textbf{Category} & \textbf{Symbol} & \textbf{Min} & \textbf{Max} \\
\midrule
\textbf{Base Load \& Seasonality} 
& $B_r$ (base load, MW) & 500 & 5000 \\
& $A_{\text{year}}$ (yearly amplitude) & 0.15 & 0.35 \\
& $\alpha_r$ (load trend) & -0.02 & 0.04 \\
\midrule
\textbf{Temperature Sensitivity} 
& $T^*$ (optimal temp) & 18.0 & 23.0 \\
& $T^C$ (cooling threshold) & 22.0 & 27.0 \\
& $T^H$ (heating threshold) & 12.0 & 18.0 \\
& $\gamma_{\text{cool}}$ (cooling slope) & 0.02 & 0.06 \\
& $\gamma_{\text{heat}}$ (heating slope) & 0.01 & 0.04 \\
\midrule
\textbf{Weather Sensitivity} 
& $\beta_{\text{humid}}$ & 0.001 & 0.005 \\
& $\beta_{\text{wind}}$ & -0.005 & 0.002 \\
& $\beta_{\text{solar}}$ & -0.02 & -0.005 \\
\midrule
\textbf{Calendar Effects} 
& $\beta_{\text{weekend}}$ & -0.25 & -0.05 \\
& $\beta_{\text{holiday}}$ & -0.30 & -0.10 \\
& $\beta_{\text{DST}}$ & -0.05 & 0.05 \\
\midrule
\textbf{Infrastructure Effects} 
& $\beta_{\text{outage}}$ & -0.20 & -0.05 \\
\midrule
\textbf{Renewables} 
& $\beta_{\text{renew}}$ (substitution) & 0.20 & 0.80 \\
& $V_{r,t}$ (volatility) & 0.01 & 0.05 \\
\midrule
\textbf{Price Model} 
& $\beta_{\text{base}}$ (price base) & 20 & 60 \\
& $\beta_{\text{peak}}$ (peak multiplier) & 1.5 & 4.0 \\
& $\sigma_P$ (price volatility) & 0.05 & 0.20 \\
\midrule
\textbf{Capacity Constraints} 
& $\text{Cap}_r$ & 0.80 & 0.95 \\
& $\gamma_r$ (price exponent) & 1.5 & 3.0 \\
\midrule
\textbf{Noise Components} 
& $\sigma_L$ (load noise) & 0.01 & 0.05 \\
& $\sigma_P$ (price noise) & 0.05 & 0.15 \\
\bottomrule
\end{tabular}
\label{tab:region_param_ranges}
\end{table}

Table~\ref{tab:region_param_ranges} summarizes the parameter ranges used to generate realistic electricity load and price patterns across regions. These ranges govern variability in demand behavior, weather sensitivity, calendar effects, and market responses, enabling the simulation of heterogeneous grid conditions reflective of residential, commercial, industrial, and mixed-use regions.

\begin{figure}[ht]
  \centering
  \begin{subfigure}[b]{0.24\linewidth}
    \includegraphics[width=\linewidth]{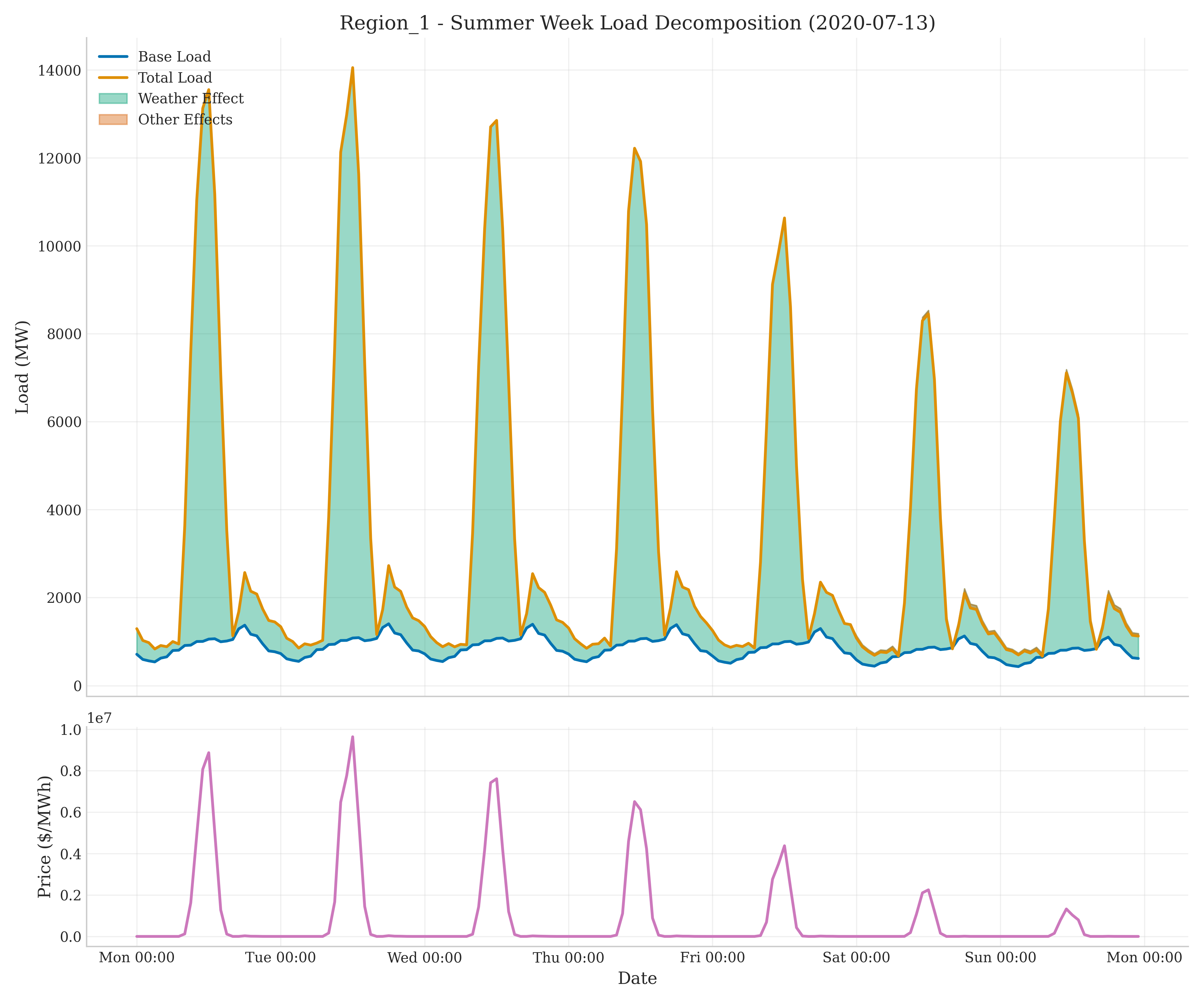}
    \caption{Region 1 — Summer}
    \label{fig:region1_summer}
  \end{subfigure}
  \hfill
  \begin{subfigure}[b]{0.24\linewidth}
    \includegraphics[width=\linewidth]{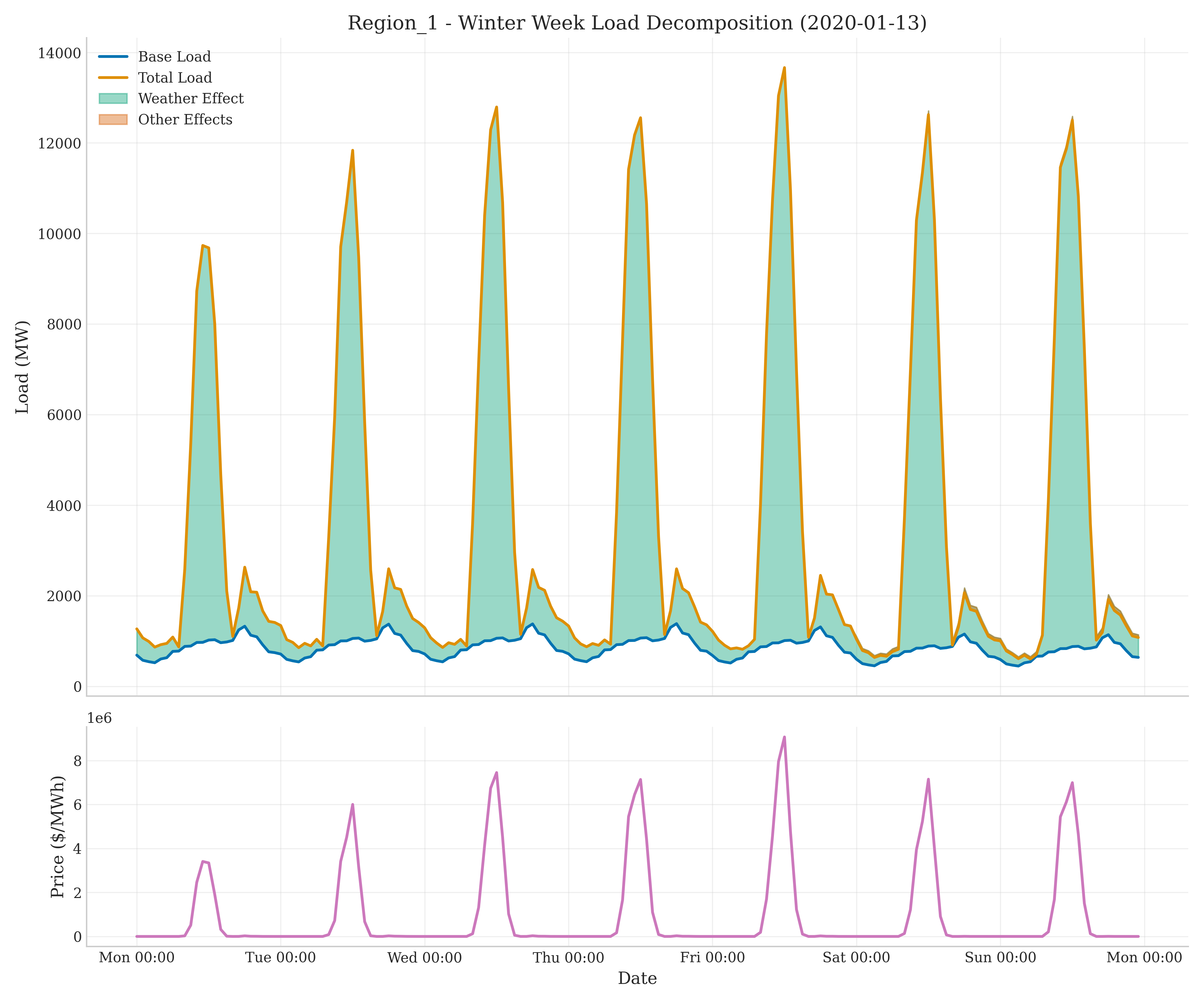}
    \caption{Region 1 — Winter}
    \label{fig:region1_winter}
  \end{subfigure}
  \hfill
  \begin{subfigure}[b]{0.24\linewidth}
    \includegraphics[width=\linewidth]{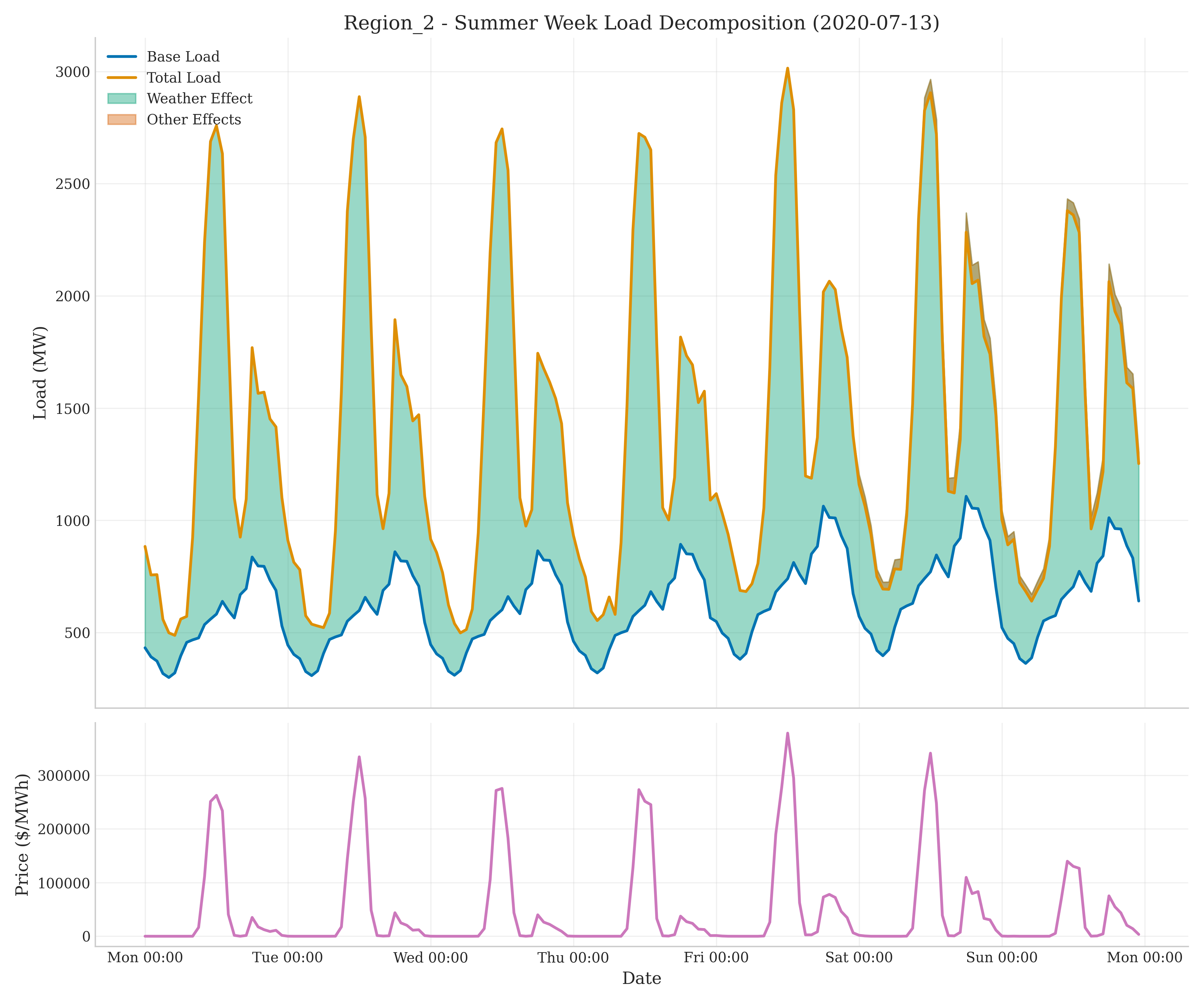}
    \caption{Region 2 — Summer}
    \label{fig:region2_summer}
  \end{subfigure}
  \hfill
  \begin{subfigure}[b]{0.24\linewidth}
    \includegraphics[width=\linewidth]{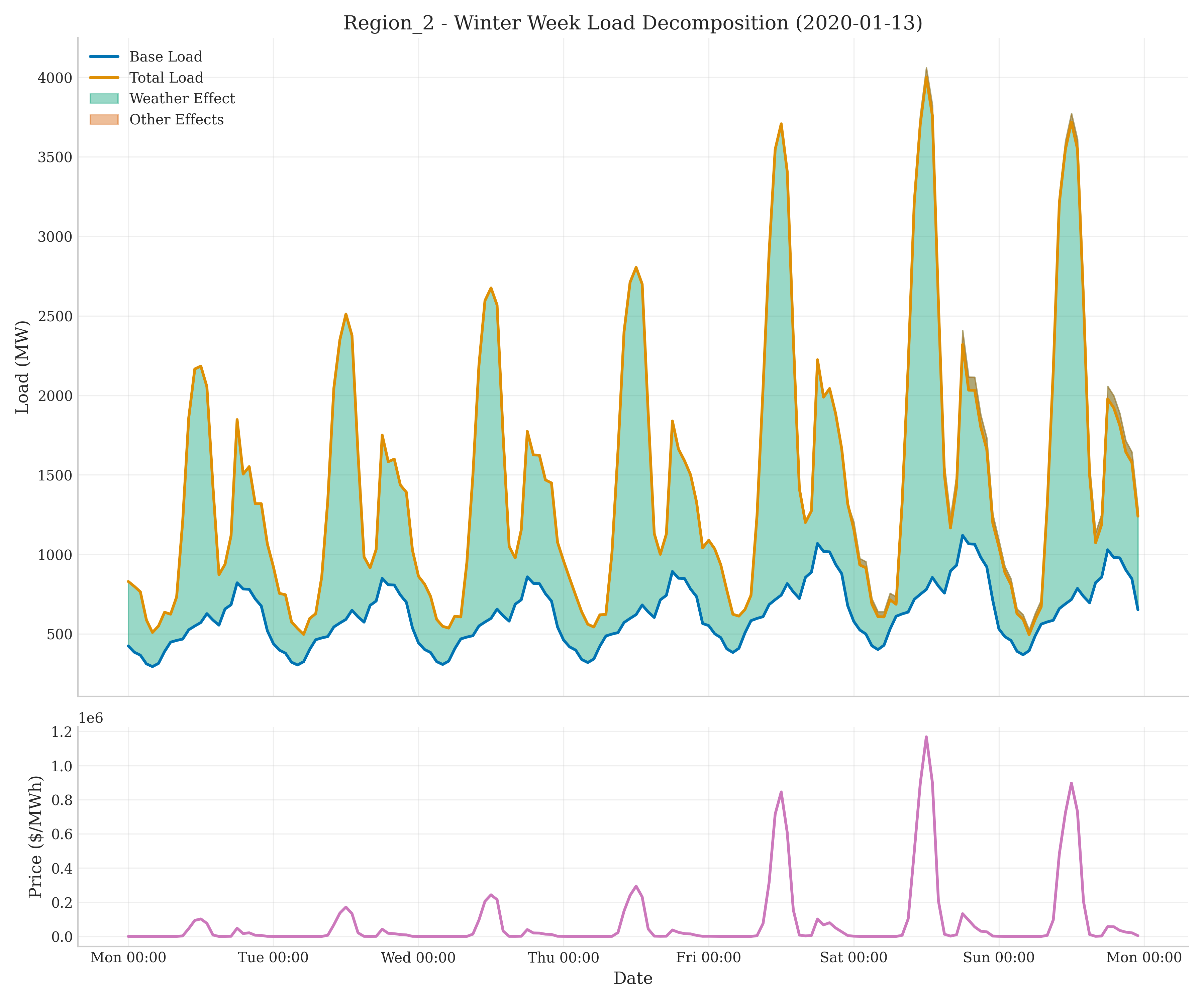}
    \caption{Region 2 — Winter}
    \label{fig:region2_winter}
  \end{subfigure}
  \caption{Load decomposition over one representative week for two regions across summer and winter. Each panel visualizes base demand, weather influence, calendar signals, and total load.}
  \label{fig:region_components}
\end{figure}

Figure~\ref{fig:region_components} visualizes the weekly decomposition of electricity load across two regions under contrasting seasonal conditions. The top row shows Region1’s load components during summer and winter weeks, while the bottom row depicts the same for Region2. These plots highlight the distinct roles of temperature, calendar effects, and renewable volatility in shaping regional demand, and emphasize how seasonal dynamics interact differently across geographic and structural profiles.

\section{Robustness on Real-world datasets}
\label{sec:realdata}

We evaluate the robustness of Hopformer on the real-world EPF dataset \citep{wang2024timexer} by varying the context length and prediction length. The results show that integrating Hopformer with Chronos consistently improves the performance of vanilla Chronos across all settings—zero-shot, full fine-tuning, and LoRA fine-tuning. Note that the performance of Hopformer variants differs slightly from the main experiment section, as we conducted additional hyperparameter tuning for the GBDT-based covariate regressors (e.g., XGBoost, LightGBM) to ensure a fairer comparison.

\begin{table*}[ht]
\centering
\caption{\small Forecasting performance of Hopformer, Chronos-bolt-small (Chronos), and their hybrid variants on the EPF dataset across varying \textbf{context length}. Each cell reports MASE. Bold values highlight the two best-performing models per row. The prediction length is fixed at 24 and number of rolling window is 20. The number of gradient steps for full fine tuning and LoRA fine tuning is 100.}
\label{tab:epf_robustness_context}
\begin{tabular}{
  >{\centering\arraybackslash}p{0.15cm}  
  >{\centering\arraybackslash}p{0.70cm}  
  | *{3}{>{\centering\arraybackslash}p{0.90cm}}  
  *{4}{>{\centering\arraybackslash}p{0.70cm}}  
  *{3}{>{\centering\arraybackslash}p{0.90cm}}  
}

\toprule
\multicolumn{2}{c}{\textbf{Models}} &
\multicolumn{3}{c}{\textbf{Hopformer}} &
\multicolumn{4}{c}{\textbf{Cross-Sectional}} &
\multicolumn{3}{c}{\textbf{Chronos}} \\
\cmidrule(lr){3-5} \cmidrule(lr){6-9} \cmidrule(lr){10-12}
\multicolumn{2}{c}{\textbf{Variants}} &
0-shot & Full & LoRA &
SPA & Lasso & Best & Equal &
0-shot & Full & LoRA \\
\midrule

\multirow{5}{*}{\rot{\textbf{EPF}}}
  & 32 & 1.433 & \textbf{1.423} & \textbf{1.429} & 2.033 & 2.017 & 2.203 & 2.133 & 1.804 & 1.804 & 1.804 \\
  & 64 & 0.908 & \textbf{0.901} & \textbf{0.904} & 1.343 & 1.327 & 1.513 & 1.444 & 1.100 & 1.091 & 1.096  \\
  & 128 & 0.802 & \textbf{0.794} & \textbf{0.795} & 1.175 & 1.161 & 1.323 & 1.287 & 0.918 & 0.914 & 0.915 \\
  & 256 & 0.732 & \textbf{0.725} & \textbf{0.727} & 1.135 & 1.116 & 1.218 & 1.174 & 0.785 & 0.787 & 0.788 \\
  & 512 & 0.662 & \textbf{0.655} & \textbf{0.658} & 1.141 & 1.118 & 1.033 & 0.986 & 0.662 & 0.672 & 0.671 \\
\bottomrule
\end{tabular}
\end{table*}

\begin{table*}[ht]
\centering
\vspace{-0.05in}
\caption{\small Forecasting performance of Hopformer, Chronos-bolt-small (Chronos), and their hybrid variants on the EPF dataset across varying \textbf{prediction length}. Each cell reports MASE. Bold values highlight the two best values per row. The context length is fixed at 256. The number of rolling windows is 10 for prediction length of 24 and 72, and is 4 for 120. The number of gradient steps for full fine tuning and LoRA fine tuning is 100.}
\label{tab:epf_robustness_pred}
\begin{tabular}{
  >{\centering\arraybackslash}p{0.15cm}  
  >{\centering\arraybackslash}p{0.70cm}  
  | *{3}{>{\centering\arraybackslash}p{0.90cm}}  
  *{4}{>{\centering\arraybackslash}p{0.70cm}}  
  *{3}{>{\centering\arraybackslash}p{0.90cm}}  
}

\toprule
\multicolumn{2}{c}{\textbf{Models}} &
\multicolumn{3}{c}{\textbf{Hopformer}} &
\multicolumn{4}{c}{\textbf{Cross-Sectional}} &
\multicolumn{3}{c}{\textbf{Chronos}} \\
\cmidrule(lr){3-5} \cmidrule(lr){6-9} \cmidrule(lr){10-12}
\multicolumn{2}{c}{\textbf{Variants}} &
0-shot & Full & LoRA &
SPA & Lasso & Best & Equal &
0-shot & Full & LoRA \\
\midrule

\multirow{3}{*}{\rot{\textbf{EPF}}}
 & 24 & 0.796 & \textbf{0.786} & \textbf{0.789} & 1.209 & 1.188 & 1.299 & 1.194 & 0.896 & 0.904 & 0.904\\
 & 72 & 0.875 & \textbf{0.864} & \textbf{0.866} & 1.625 & 1.601 & 1.317 & 1.293 & 1.043 & 1.022 & 1.036 \\
 & 120 & \textbf{0.896} & 0.919 & \textbf{0.916} & 1.856 & 1.822 & 1.354 & 1.363 & 1.154 & 1.165 & 1.151  \\

\bottomrule
\end{tabular}
\end{table*}

Table \ref{tab:epf_robustness_context} summarizes the Mean Absolute Scaled Error (MASE) of Hopformer (built on top of Chronos) and vanilla Chronos when the available context ranges from 32 to 512 time steps. Table \ref{tab:epf_robustness_pred} summarizes the MASE of the models across varying prediction length. Four findings stand out. (i)\textbf{Zero‑shot performance}: Hopformer consistently beats Chronos at every context length, with the largest gain ($\sim {37.1}\%$) at the shortest window of 32 steps, as visualized in Figure \ref{fig:epf_robustness_context_horizon}. This suggests that the covariate‑driven expert pool provides valuable signal when historical information is scarce. (ii) \textbf{Fine‑tuned performance}: After full‑parameter or LoRA fine‑tuning, Hopformer still yields lower error than Chronos. Removing covariate effects in the first stage appears to simplify the residual dynamics, making the subsequent transformer easier to adapt. (iii) \textbf{Cross‑sectional aggregation}: In this two‑covariate setting, SPA and Lasso deliver comparable accuracy, indicating that with a very small covariate set the sparsity prior in SPA offers little advantage over a standard $\ell_{1}$‑penalised regression. (iv) \textbf{Long-horizon prediction}: Hopformer also surpasses Chronos at every horizon, as visualized in Figure \ref{fig:epf_robustness_context_horizon}. The largest gain ($\sim {25.8}\%$) locates at the longest horizon of 120 steps, again demonstrating the value of the covariate‑driven expert pool as the prediction window expands.

\begin{figure}[ht]
  \vspace{-0.1in}
  \centering
  \begin{subfigure}[b]{0.45\linewidth}
    \includegraphics[width=\linewidth, trim={0 0 770 0}, clip]
    {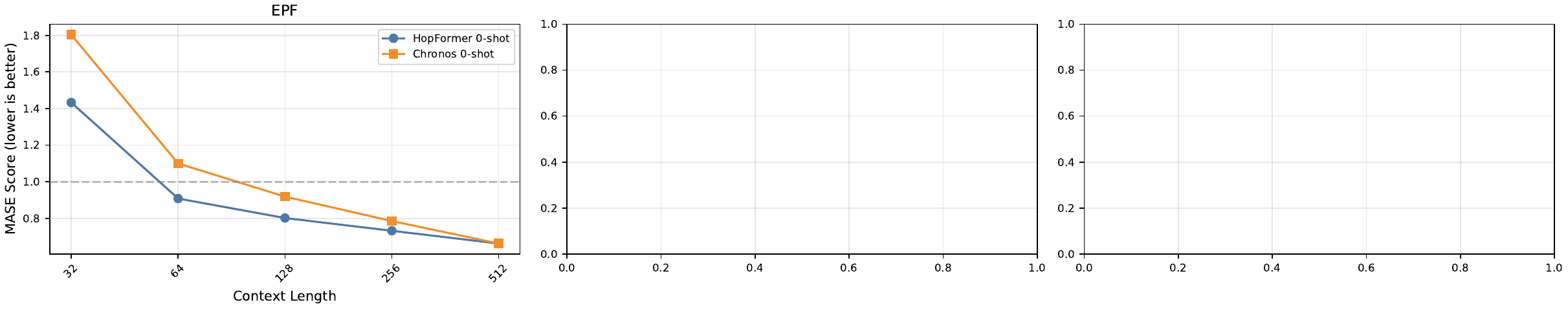}
    \caption{Effect of context length on forecasting.}
    \label{fig:context_length}
  \end{subfigure}
  \hfill
  \begin{subfigure}[b]{0.48\linewidth}
    \includegraphics[width=\linewidth, trim={0 0 770 0}, clip]
    {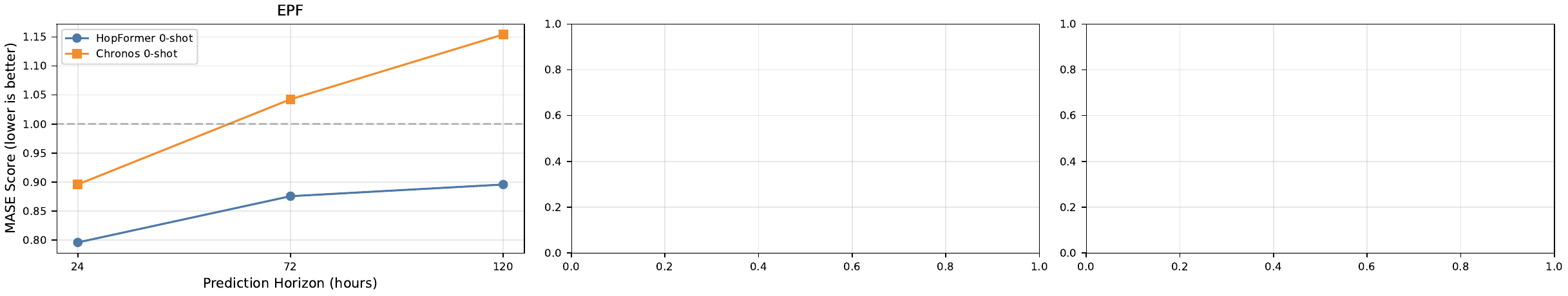}
    \caption{Effect of prediction horizon on forecasting.}
    \label{fig:horizon_length}
  \end{subfigure}
  \caption{Model robustness across varying context lengths and forecast horizons. Only the informative portion of each figure is shown for clarity.}
  \label{fig:epf_robustness_context_horizon}
\end{figure}

\subsection{Limitation and Discussion}

\colorbox{gray!20}{{Hopformer is designed to leverage future covariates; when these are absent its benefit naturally diminishes.}} Although Hopformer achieves strong results on most benchmarks, two practical constraints limit the breadth of our evaluation.

\textbf{Illness and M5}: On the Illness benchmark \citep{wang2024timexer} the series come with seven past covariates but no future ones, so the cross‑sectional stage receives little forward‑looking signal and Hopformer largely reduces to its residual transformer, yielding only marginal gains over Chronos. The situation is different for the M5 \citep{makridakis2022m5} competition data: rich item‑level covariates are available, yet the full dataset contains around 30K hierarchically linked series and several million training points. Training the required gradient‑boosted covariate regressors at that scale demands tens of gigabytes of system RAM and many hours of hyper‑parameter search, which exceeded the fixed computational budget for this submission. Handling such industry‑size datasets will require memory‑efficient regressors (e.g., online trees) or sharded training pipelines—an important direction for future work. 

\textbf{Datasets without covariates}: For univariate or multivariate series \citep{aksu2024gift, godahewa2021monash, makridakis2000m3, makridakis2018m4} that lack future covariates, the cross‑sectional stage collapses to a zero-prediction expert, making Hopformer equivalent to its residual transformer sub‑model (e.g., Chronos). Because this setting provides no opportunity to test the proposed aggregation mechanism, we omit those results from the main paper and treat pure‑series forecasting as an orthogonal problem. Extending Hopformer with automated feature extraction or self‑supervised pretraining may restore an advantage in covariate‑free domains, and we plan to investigate this in future work.

\section{Visualization of Hopformer Inference}
\label{sec:vis_of_infer}

\begin{figure}[ht]
  \vspace{-0.25in}
  \centering
  \includegraphics[width=\linewidth]{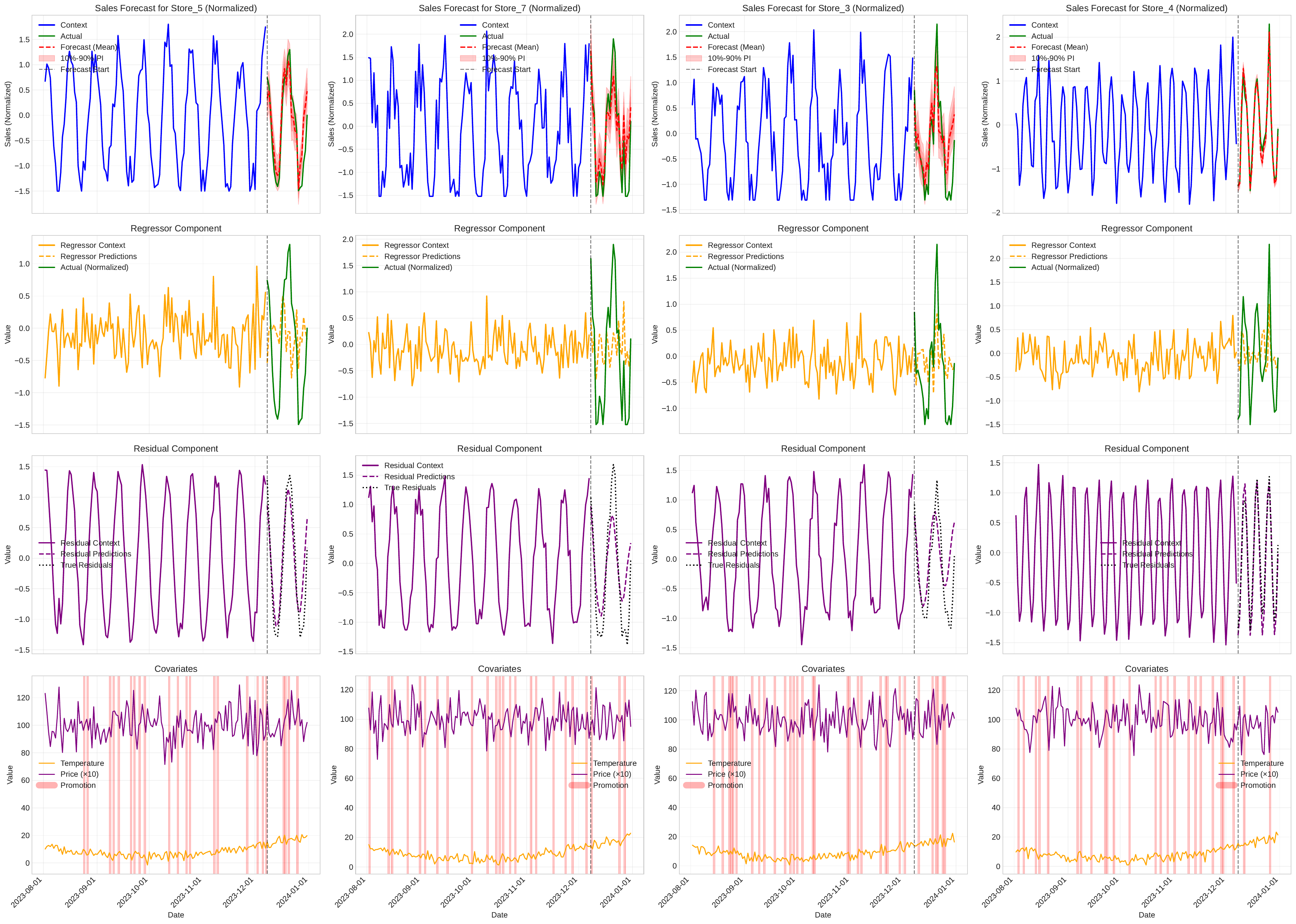}
  \caption{\small Decomposition of Hopformer’s prediction pipeline on the \textsc{Sale1} dataset. Each column corresponds to a store.  
  \textbf{Row 1}: ground truth and final forecast; \textbf{Row 2}: cross‑sectional stage (covariate aggregation);  
  \textbf{Row 3}: residual transformer output; \textbf{Row 4}: covariate trajectories.  
  Removing covariate effects (Row 1 → 3) yields a smoother, quasi‑periodic residual series, illustrating the division of labour between the two stages.}
  \label{fig:vis_sale1}
  \vspace{-0.15in}
\end{figure}

Figure \ref{fig:vis_sale1} decomposes Hopformer’s inference pipeline on the Sale1 benchmark. A clear pattern emerges: once the covariate signal (row 3) including promotional, temperature, and price effect is removed from input time series (row 2), the remaining series (row 3) become markedly smoother and more periodic, with promotional spikes eliminated. This confirms that the expert pool has successfully captured most exogenous variation, leaving the residual module to be a simpler and precisely structured pattern.

This qualitative evidence complements our quantitative results: Hopformer isolates high‑variance exogenous effects in the first stage and hands a simpler forecasting task to the residual transformer, leading to the accuracy gains reported in Method Section.

\section{Ablation Study: Fine Tuning Time Series Foundational Models}
\label{sec:abl_ft}

Through this ablation study, we provide insight into how finetuning could improve the forecasting performance of two state-of-the-art foundational models. Table \ref{tab:ablation_ft} compares the effectiveness of zero-shot forecasting with that of full-shot forecasting and of forecasting with a model that was finetuned using LoRA. All experiments use a context length of 512, a forecast horizon of 24, 100 Monte Carlo samples per forecast, and 20 rolling windows. Note that we performed zero shot forecasting on a computer different from the one used for zero-shot forecasting in the ablation study in Section \ref{sec:exp}. For LoRA finetuning, we update only the query, key, value, and output projection matrices with rank=8, scaling factor $\alpha=16$, and a 5 \% dropout rate, in 100 gradient steps; full-shot models likewise train for 100 steps on all parameters.

Although the results varied across models and datasets, in most cases, the LoRA-finetuned model matches or even slightly outperforms the full-shot variant. For example, on the EPF dataset, full-shot Chronos attains an MASE of 0.674 and an MAPE of 1.252, while LoRA finetuning nearly matches in performance with MASE = 0.720 and MAPE = 1.265. An instance in which we see LoRA surpassing full-shot finetuning is Moirai on the EPF dataset: LoRA has MASE 0.709 and MAPE 0.108, which outperforms full-shot finetuning (0.849/0.127). These results demonstrate that a lightweight, 100-step LoRA update can recover—and in several cases exceed—the accuracy of full-shot training, though its benefit varies by dataset.

\begin{table*}[ht]
\centering
\vspace{-0.05in}
\caption{\small Forecasting performance (MASE and MAPE; lower is better) of Chronos and Moirai on the four datasets. Boldface highlights the best-performing metsosohod for each variant.}
\label{tab:ablation_ft}
\setlength{\tabcolsep}{2pt}
\begin{tabular}{
  >{\centering\arraybackslash}p{0.15cm}  
  >{\centering\arraybackslash}p{1.2cm}  
  | *{3}{>{\centering\arraybackslash}p{1.2cm}}  
    *{3}{>{\centering\arraybackslash}p{1.2cm}}  
}
\toprule
\multicolumn{2}{c}{\textbf{Models}} &
  \multicolumn{3}{c}{\textbf{Chronos}} &
  \multicolumn{3}{c}{\textbf{Moirai}} \\
\cmidrule(lr){3-5} \cmidrule(lr){6-8}
\multicolumn{2}{c}{\textbf{Variants}} &
  0-shot & Full & LoRA &
  0-shot & Full & LoRA   \\
\midrule

\multirow{2}{*}{\rot{{EPF}}}
  & MASE & \textbf{0.662} & 0.674 & 0.720 & 0.809 & 0.849 & \textbf{0.709} \\
  & MAPE & \textbf{1.180} & 1.252 & 1.265 & 0.119 & 0.127 & \textbf{0.108} \\
\midrule

\multirow{2}{*}{\rot{{Sale1}}}
  & MASE & 1.095 & \textbf{0.927} & 0.971 & 0.621 & 0.527 & \textbf{0.500} \\
  & MAPE & 0.951 & \textbf{0.679} & 0.707 & 0.823 & 0.677 & \textbf{0.580} \\
\midrule

\multirow{2}{*}{\rot{{Sale2}}}
  & MASE & 0.542 & 0.307 & \textbf{0.301} & 0.566 & \textbf{0.545} & 0.557 \\
  & MAPE & 0.518 & 0.315 & \textbf{0.309} & 2.440 & \textbf{2.222} & 4.452 \\
\midrule

\multirow{2}{*}{\rot{{Elec.}}}
  & MASE & 0.817 & 0.770 & \textbf{0.756} & 0.984 & \textbf{0.757} & 0.870 \\
  & MAPE & 0.083 & 0.079 & \textbf{0.078} & \textbf{0.402} & 0.407 & 0.442 \\
\bottomrule
\end{tabular}

\end{table*}


\section{Additional Model Comparison: HopFormer vs. ChronosX vs. TimeMixer vs TimeXer}

\begin{table*}[ht]
\centering
\vspace{-0.05in}
\caption{\small Forecasting performance (MASE and MAPE; lower is better) of Hopformer, ChronosX, TimeMixer, and TimeXer. Boldface highlights the two lowest metric values in each row.}
\label{tab:hop_chronosx_tm}
\setlength{\tabcolsep}{2pt}
\begin{tabular}{
  >{\centering\arraybackslash}p{0.18cm}
  >{\centering\arraybackslash}p{1.3cm}
  | *{3}{>{\centering\arraybackslash}p{1.0cm}}
    *{3}{>{\centering\arraybackslash}p{1.0cm}}
    *{2}{>{\centering\arraybackslash}p{1.0cm}}
    *{3}{>{\centering\arraybackslash}p{1.0cm}}
}
\toprule
\multicolumn{2}{c}{\textbf{Models}} &
  \multicolumn{3}{c}{\textbf{Hopformer}} &
  \multicolumn{3}{c}{\textbf{ChronosX}} &
  \multicolumn{2}{c}{\textbf{TimeMixer}} &
  \multicolumn{3}{c}{\textbf{TimeXer}} \\
\cmidrule(lr){3-5} \cmidrule(lr){6-8} \cmidrule(lr){9-10} \cmidrule(lr){11-13}
\multicolumn{2}{c}{\textbf{Variants}} &
  0-shot & Full & LoRA &
  0-shot & Full & LoRA &
  Full & LoRA &
  0-shot & Full & LoRA \\
\midrule

\multirow{2}{*}{\rot{Sale1}}
  & MASE & 0.946 & 0.761 & 0.819 & 1.321 & \textbf{0.496} & -- & \textbf{0.301} & 0.925 & 0.787 & 0.675 & 0.705 \\
  & MAPE & 0.915 & 0.631 & 0.686 & \textbf{0.489} & \textbf{0.271} & -- & 1.323 & 2.342 & 0.889 & 0.780 & 0.851 \\
\midrule

\multirow{2}{*}{\rot{Sale2}}
  & MASE & 0.340 & \textbf{0.270} & \textbf{0.264} & 0.924 & 0.496 & -- & 0.425 & 1.381 & 0.430 & 0.385 & 0.385 \\
  & MAPE & 0.423 & 0.306 & \textbf{0.301} & 0.383 & \textbf{0.290} & -- & 2.516 & 4.327 & 0.458 & 0.449 & 0.411 \\
\midrule

\multirow{2}{*}{\rot{Elec.}}
  & MASE & 0.765 & 0.737 & 0.730 & 0.883 & \textbf{0.194} & -- & \textbf{0.273} & 1.023 & 0.381 & 0.292 & -- \\
  & MAPE & 0.078 & \textbf{0.075} & \textbf{0.075} & 0.358 & 0.090 & -- & \textbf{0.030} & 0.138 & 0.188 & 0.151 & -- \\
\midrule

\multirow{2}{*}{\rot{\scriptsize Illness}}
  & MASE & 0.381 & \textbf{0.330} & \textbf{0.329} & 3.687 & 3.803 & -- & 2.065 & 5.069 & 2.823 & 1.832 & 2.202 \\
  & MAPE & 0.133 & 0.115 & 0.115 & \textbf{0.113} & \textbf{0.114} & -- & 0.246 & 0.481 & 0.174 & 0.118 & 0.141 \\
\midrule

\multirow{2}{*}{\rot{EPF}}
  & MASE & 0.654 & 0.650 & \textbf{0.642} & 2.131 & 1.556 & -- & \textbf{0.377} & 1.044 & 1.812 & 1.659 & -- \\
  & MAPE & 1.114 & \textbf{1.112} & \textbf{1.109} & 1.486 & 1.128 & -- & 2.102 & 6.853 & 1.151 & 1.244 & -- \\
\midrule

\multirow{2}{*}{\rot{M5}}
  & MASE & 1.006 & \textbf{0.984} & 0.989 & 1.039 & -- & -- & \textbf{0.733} & 1.097 & -- & -- & -- \\
  & MAPE & 0.703 & \textbf{0.670} & \textbf{0.589} & 0.700 & -- & -- & 2.885 & 5.825 & -- & -- & -- \\
\bottomrule
\end{tabular}
\end{table*}

\begin{table*}[ht]
\centering
\vspace{-0.05in}
\caption{\small Forecasting performance of Hopformer, Cross-Sectional methods, Chronos, ChronosX, TimeMixer, TimeXer, and baseline deep-learning/statistical models. Each cell reports MASE or MAPE (lower is better).}
\label{tab:combined_main_results}
\resizebox{\linewidth}{!}{%
\scriptsize
\setlength{\tabcolsep}{1.2pt}
\begin{tabular}{
  >{\centering\arraybackslash}p{0.25cm}
  >{\centering\arraybackslash}p{0.70cm}
  *{3}{>{\centering\arraybackslash}p{0.58cm}@{\hspace{2pt}}}
  *{4}{>{\centering\arraybackslash}p{0.58cm}@{\hspace{2pt}}}
  *{3}{>{\centering\arraybackslash}p{0.58cm}@{\hspace{2pt}}}
  *{3}{>{\centering\arraybackslash}p{0.58cm}@{\hspace{2pt}}}
  *{2}{>{\centering\arraybackslash}p{0.58cm}@{\hspace{2pt}}}
  *{3}{>{\centering\arraybackslash}p{0.58cm}@{\hspace{2pt}}}
  *{4}{>{\centering\arraybackslash}p{0.58cm}}
}
\toprule
\multicolumn{2}{c}{Models} &
\multicolumn{3}{c}{\textbf{Hopformer}} &
\multicolumn{4}{c}{\textbf{Cross-Sectional}} &
\multicolumn{3}{c}{\textbf{Chronos}} &
\multicolumn{3}{c}{\textbf{ChronosX}} &
\multicolumn{2}{c}{\textbf{TimeMixer}} &
\multicolumn{3}{c}{\textbf{TimeXer}} &
\multicolumn{4}{c}{\textbf{DL and Stats}} \\
\cmidrule(lr){3-5}
\cmidrule(lr){6-9}
\cmidrule(lr){10-12}
\cmidrule(lr){13-15}
\cmidrule(lr){16-17}
\cmidrule(lr){18-20}
\cmidrule(lr){21-24}
\multicolumn{2}{c}{Variants} &
0-shot & Full & LoRA &
SPA & Lasso & Best & Equal &
0-shot & Full & LoRA &
0-shot & Full & LoRA &
Full & LoRA &
0-shot & Full & LoRA &
PTST & TFT & Arima & Ets \\
\midrule

\multirow{2}{*}{\rot{\tiny Sale1}}
 & MASE & 0.946 & 0.761 & 0.819 & 1.592 & 1.598 & 1.646 & 1.610 & 1.095 & 0.927 & 0.971 & 1.321 & 0.496 & 0.490 & 0.301 & 0.925 & 0.787 & 0.675 & 0.705 & 0.911 & 0.883 & 1.191 & 1.136 \\
 & MAPE & 0.915 & 0.631 & 0.686 & 1.482 & 1.483 & 1.510 & 1.523 & 0.951 & 0.679 & 0.707 & 0.489 & 0.271 & 0.272 & 1.323 & 2.342 & 0.889 & 0.780 & 0.851 & 0.881 & 0.813 & 1.029 & 1.380 \\
\midrule

\multirow{2}{*}{\rot{\tiny Sale2}}
 & MASE & 0.340 & 0.270 & 0.264 & 0.538 & 0.539 & 0.539 & 0.561 & 0.542 & 0.307 & 0.301 & 0.924 & 0.496 & 0.486 & 0.425 & 1.381 & 0.430 & 0.385 & 0.385 & 0.447 & 0.428 & 0.552 & 0.849 \\
 & MAPE & 0.423 & 0.306 & 0.301 & 0.729 & 0.740 & 0.736 & 0.839 & 0.518 & 0.315 & 0.309 & 0.383 & 0.290 & 0.278 & 2.516 & 4.327 & 0.458 & 0.449 & 0.411 & 0.486 & 0.485 & 635 & 0.912 \\
\midrule

\multirow{2}{*}{\rot{\tiny Elec.}}
 & MASE & 0.765 & 0.737 & 0.730 & 2.612 & 2.648 & 2.633 & 2.652 & 0.817 & 0.770 & 0.756 & 0.883 & 0.194 & 0.152 & 0.273 & 1.023 & 0.381 & 0.292 & 0.269 & 1.205 & 1.391 & 0.940 & 1.449 \\
 & MAPE & 0.078 & 0.075 & 0.075 & 0.263 & 0.266 & 0.257 & 0.259 & 0.083 & 0.079 & 0.078 & 0.358 & 0.090 & 0.070 & 0.030 & 0.138 & 0.188 & 0.151 & 0.131 & 0.143 & 0.164 & 0.106 & 0.171 \\
\midrule

\multirow{2}{*}{\rot{\tiny Illness}}
 & MASE & 0.381 & 0.330 & 0.329 & 0.369 & 0.379 & 0.494 & 0.454 & 0.357 & 0.398 & 0.399 & 3.687 & 3.803 & 3.940 & 2.065 & 5.069 & 2.823 & 1.832 & 2.202 & 0.423 & 0.494 & 0.515 & 0.509 \\
 & MAPE & 0.133 & 0.115 & 0.115 & 0.125 & 0.130 & 0.167 & 0.159 & 0.123 & 0.135 & 0.134 & 0.113 & 0.114 & 0.118 & 0.246 & 0.481 & 0.174 & 0.118 & 0.141 & 0.146 & 0.167 & 0.190 & 0.188 \\
\midrule

\multirow{2}{*}{\rot{\tiny EPF}}
 & MASE & 0.654 & 0.650 & 0.642 & 1.279 & 1.760 & 0.813 & 17.17 & 0.662 & 0.674 & 0.720 & 2.131 & 1.556 & 1.447 & 0.377 & 1.044 & 1.812 & 1.659 & 1.608 & 1.466 & 0.862 & 0.895 & 1.091 \\
 & MAPE & 1.114 & 1.112 & 1.109 & 2.554 & 3.027 & 1.804 & 11.67 & 1.180 & 1.252 & 1.265 & 1.486 & 1.128 & 0.895 & 2.102 & 6.853 & 1.151 & 1.244 & 1.296 & 3.252 & 1.689 & 1.383 & 1.124 \\
\midrule

\multirow{2}{*}{\rot{\tiny M5}}
 & MASE & 1.006 & 0.984 & 0.989 & 1.189 & 1.188 & 2.189 & 2.194 & 1.006 & 0.987 & 0.987 & 1.039 & -- & -- & 0.733 & 1.097 & -- & -- & -- & 1.022 & 0.980 & 1.185 & 1.238 \\
 & MAPE & 0.703 & 0.670 & 0.589 & 0.591 & 0.596 & 1.438 & 1.883 & 0.704 & 0.683 & 0.679 & 0.700 & -- & -- & 2.885 & 5.825 & -- & -- & -- & 0.667 & 0.671 & 0.594 & 0.592 \\
\bottomrule
\end{tabular}}
\end{table*}

\section{Alabation study on SPA}
\label{sec:aba-on-spa}

\begin{figure}[ht]
  \centering
  \includegraphics[width=0.30\linewidth]{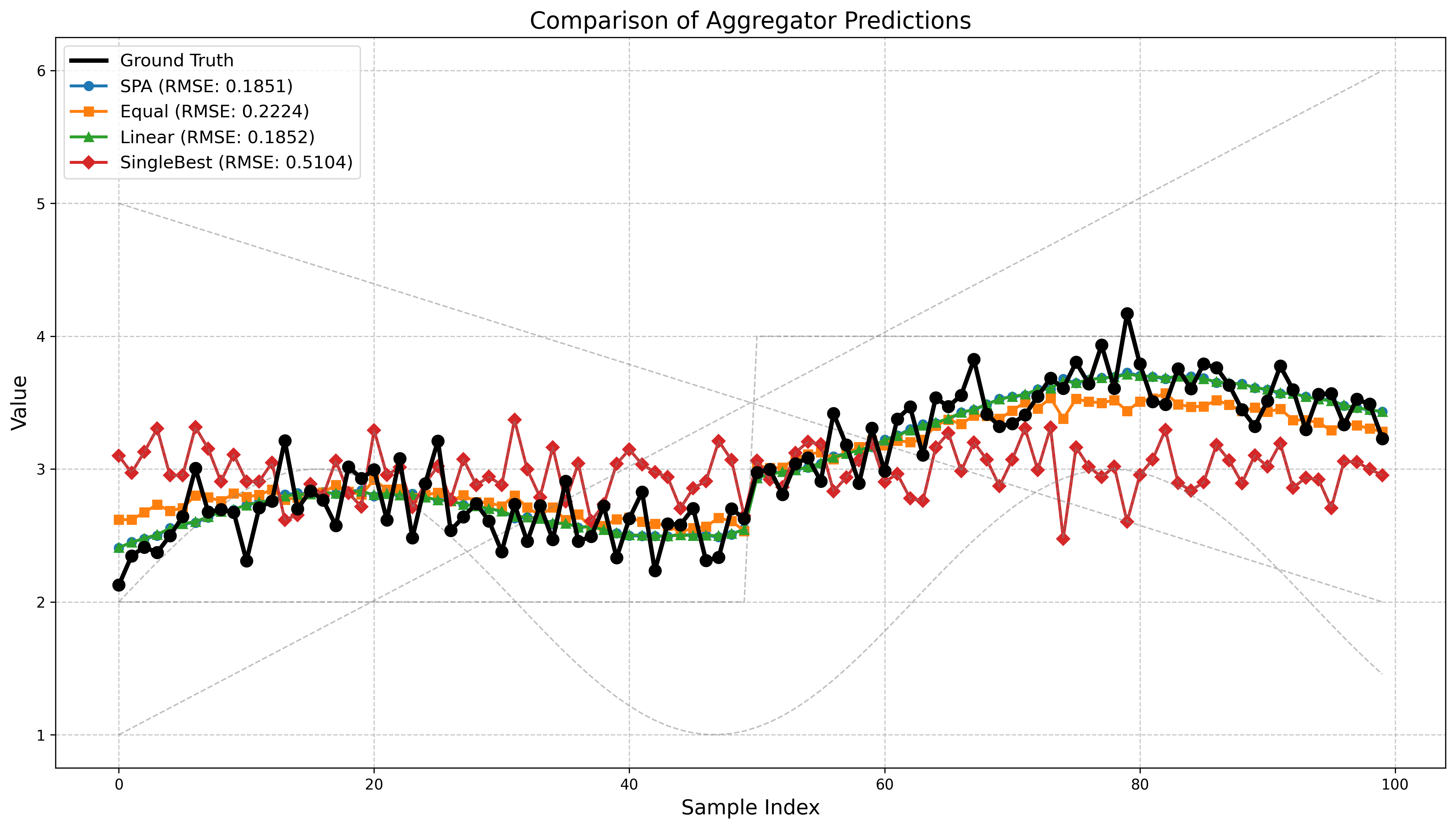}\hfill
  \includegraphics[width=0.30\linewidth]{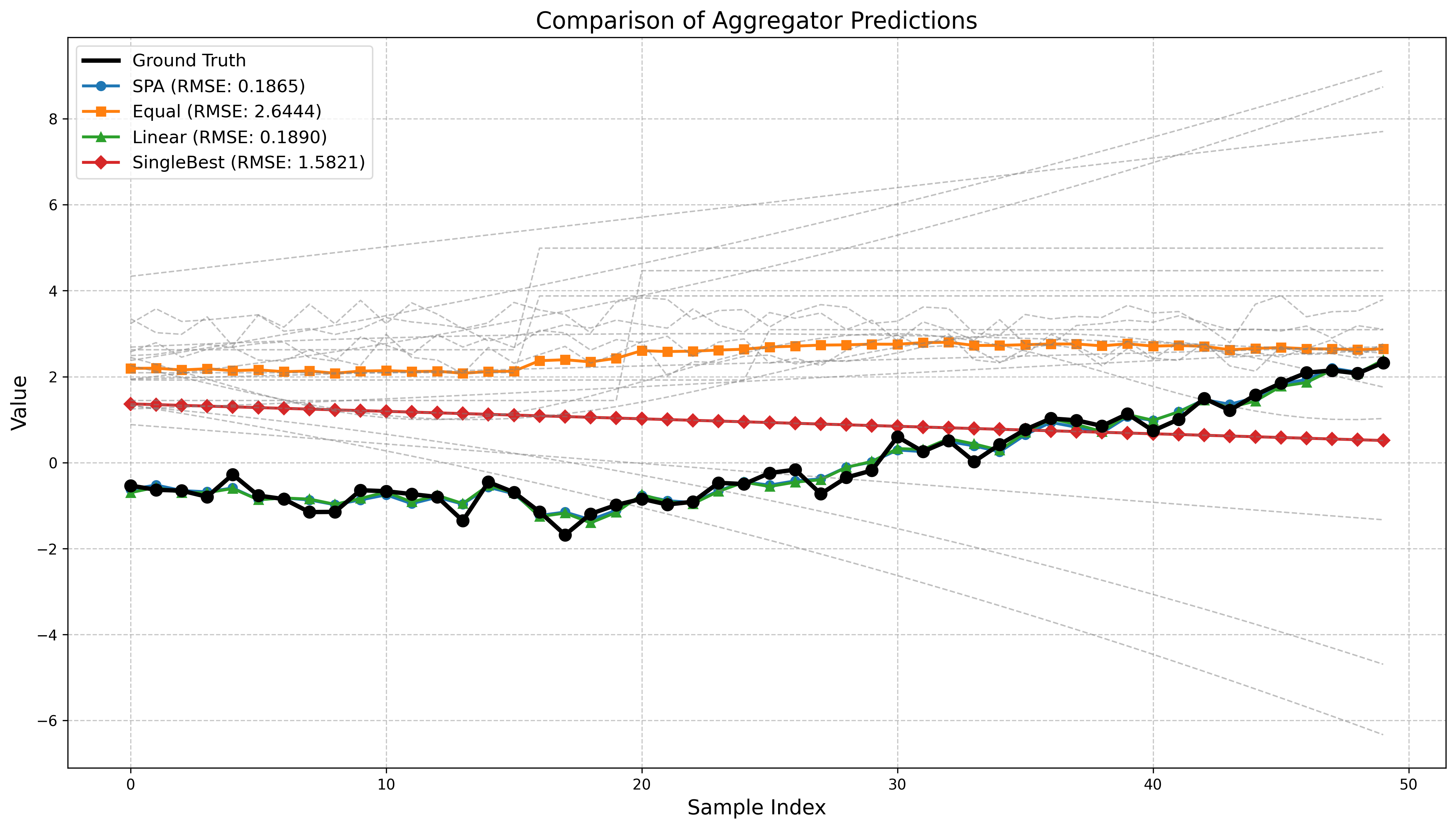}\hfill
  \includegraphics[width=0.30\linewidth]{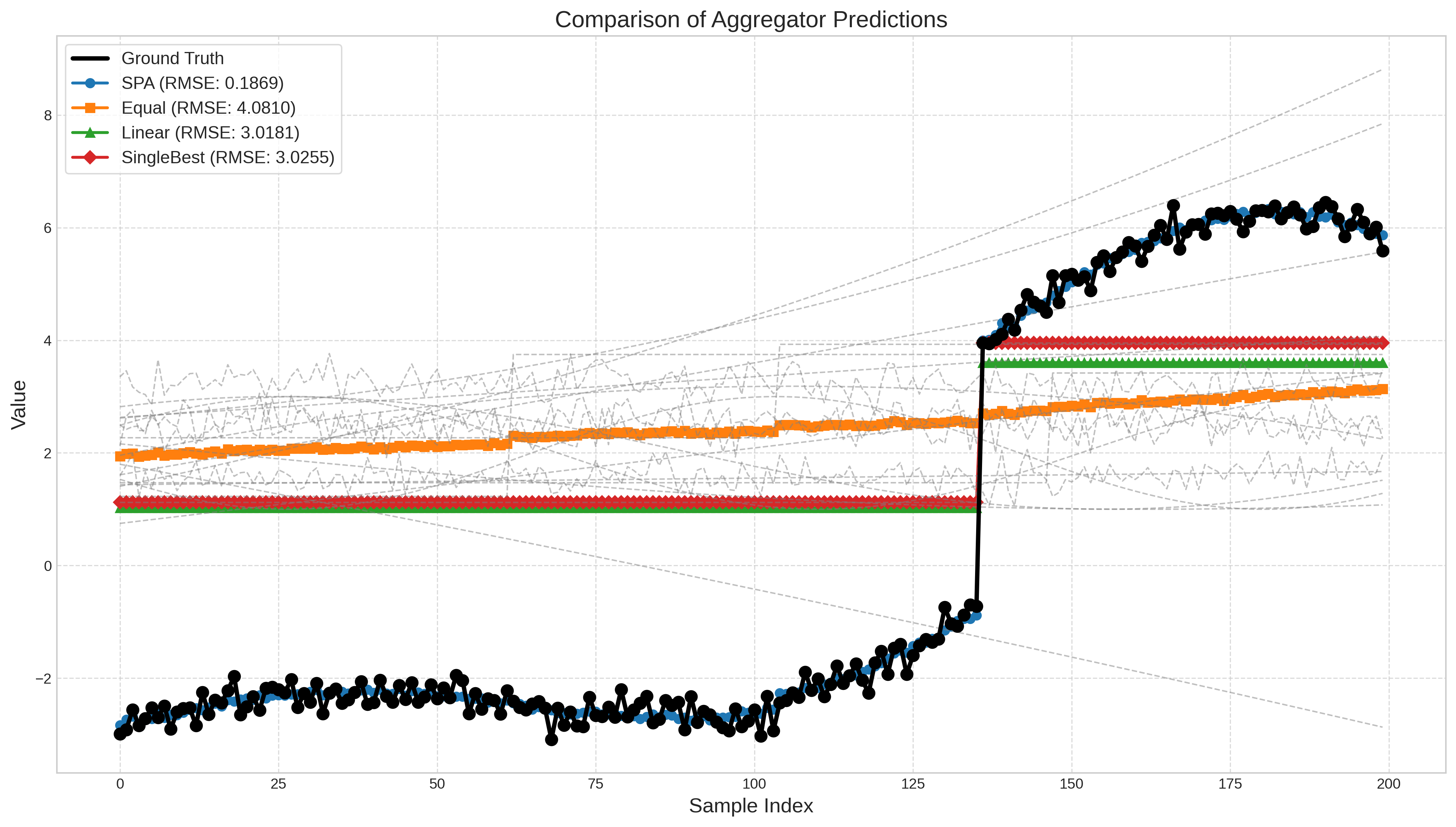}
  
  \caption{\small
  Simulated grocery‑sales example (grey curves are individual regressors encoding trend, seasonality, and promotion effects). 
           Left: Linear mixture of 3 regressors over 100 time steps with added noise—SPA matches ordinary least‑squares (LR) aggregation.
           Middle: Non‑linear combination of 20 regressors over 50 time steps—SPA outperforms LR by capturing interaction effects.  
           Right: Same non‑linear setting over 200 time steps—SPA continues to beat LR, demonstrating robustness as series length grows.
           }
  \label{fig:cs_aggregation_comparison}
\end{figure}

\section{Forecasting Visualizations from Table \ref{tab:ablation_hopfomer}.}
\label{sec:more_vis}


\begin{figure}[ht]
  \centering
  \includegraphics[width=0.8\linewidth]{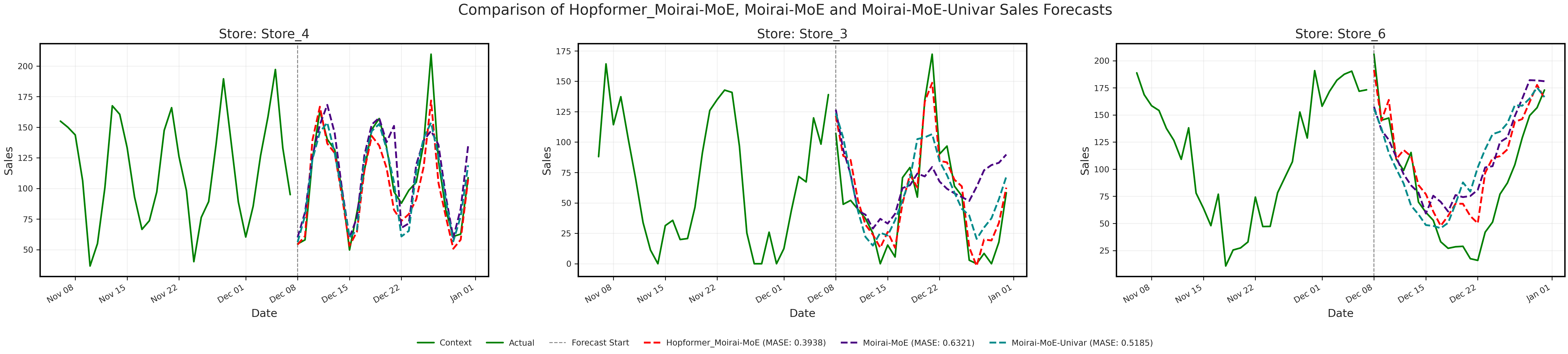}
  \vspace{0.3cm}
  
  \includegraphics[width=0.8\linewidth]{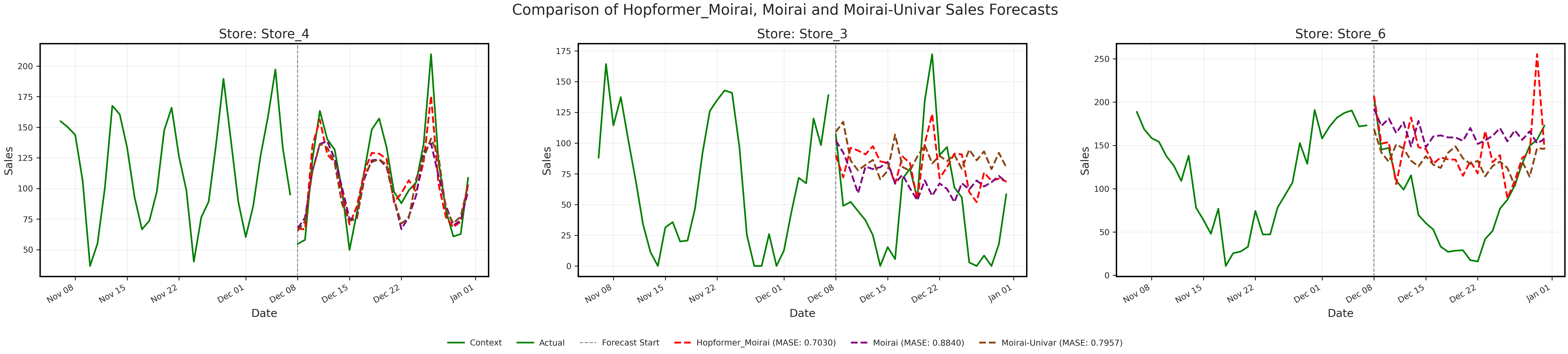}
  \vspace{0.3cm}
  
  \includegraphics[width=0.8\linewidth]{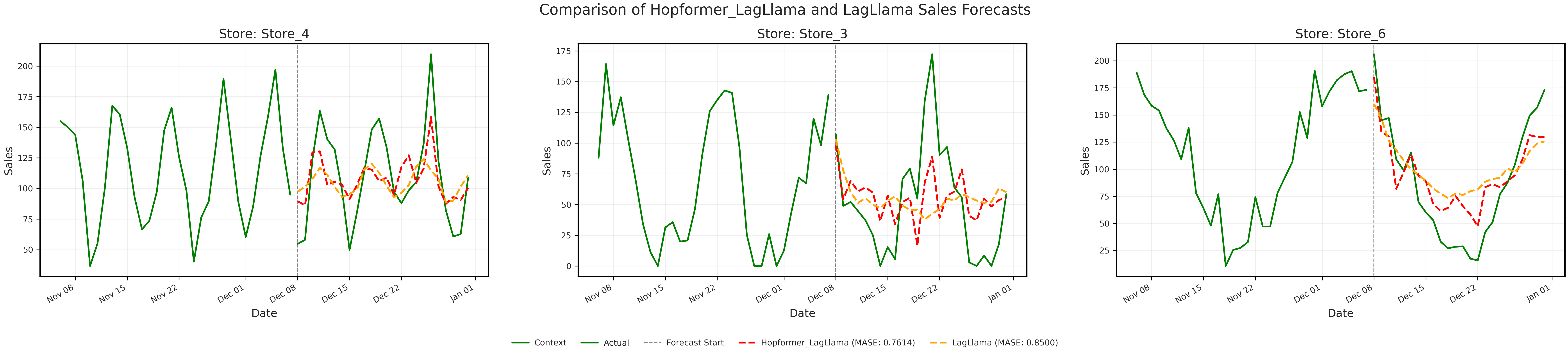}

  \caption{\small Forecasting results on the \texttt{Sales1} dataset. top: Moirai-MoE; middle: Moirai; bottom: Lag-Llama.}
  \label{fig:more_forecast_sales1}
\end{figure}

\begin{figure}[ht]
  \centering
  \includegraphics[width=0.8\linewidth]{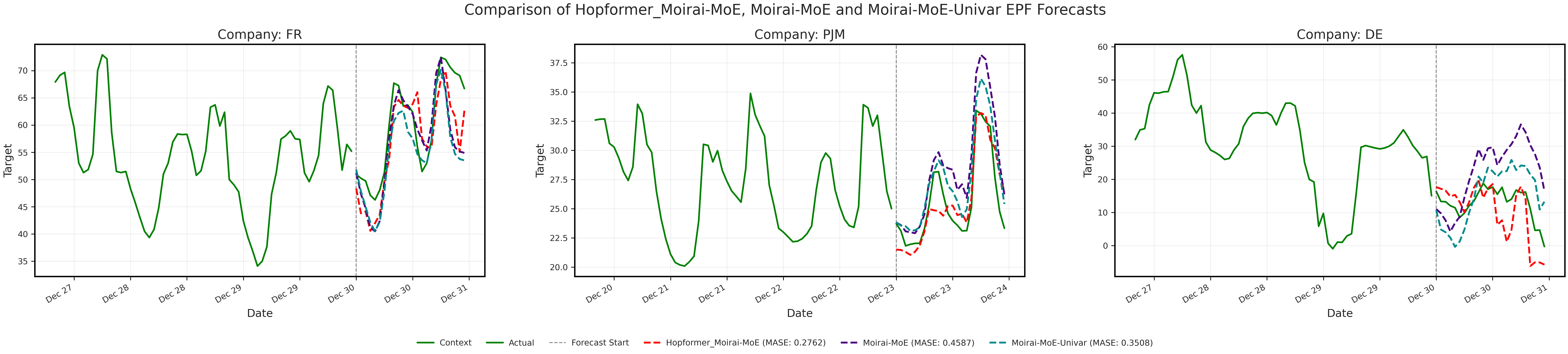}
  \vspace{0.3cm}
  
  \includegraphics[width=0.8\linewidth]{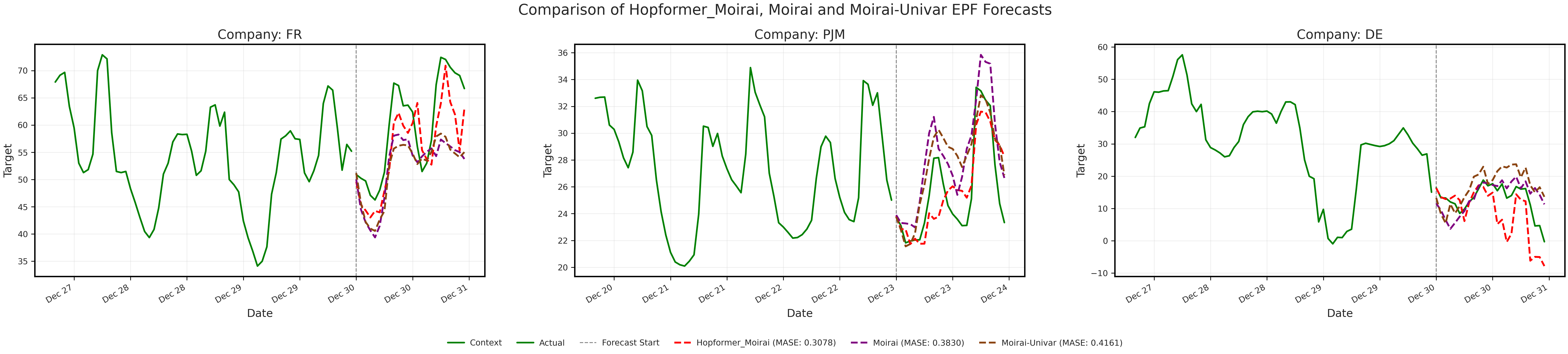}
  \vspace{0.3cm}
  
  \includegraphics[width=0.8\linewidth]{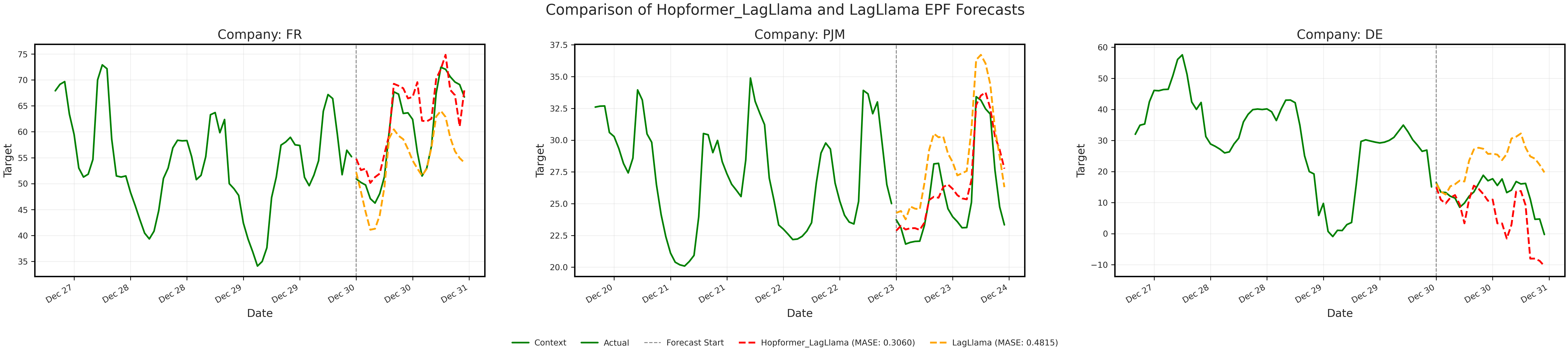}

  \caption{\small Forecasting results on the \texttt{EPF} dataset. top: Moirai-MoE; middle: Moirai; bottom: Lag-Llama.}
  \label{fig:more_forecast_epf}
\end{figure}

\section{Code usage}
\label{sec:code_usage}
\vspace{-0.1in}

\begin{listing}[ht]
\begin{lstlisting}[language=Python]
from autogluon.timeseries import TimeSeriesDataFrame
from hopformer import Predictor          # Hopformer wrapper

# ---------- load & split data ----------
df = TimeSeriesDataFrame.from_path("store_sales_data.csv")
train, test = df.train_test_split(prediction_length=24)

# ---------- fit Hopformer ----------
model = Predictor(
    prediction_length       = 24,
    target                  = "target",
    known_covariates_real   = ["promotion", "temperature", "price"],
    regressor_types         = ["LR", "XGB", "GBM", "CAT"],
    aggregation_strategy    = ("SPA", {}),
    bolt_model_path         = "bolt_small",
)
model.fit(train, time_limit=120, fine_tune=True, use_lora=True)

# ---------- forecast ----------
context   = test.slice_by_timestep(-512-24, -24)   # context window
konwn_cov = test.slice_by_timestep(-24, None)      # future covariates
forecast  = model.predict(data=context, known_covariates=konwn_cov)
forecast.to_csv("forecasts.csv")
\end{lstlisting}
\caption{Minimal hopformer usage.  The interface mirrors \texttt{autogluon.timeseries.Predictor}, enabling effortless integration into existing pipelines.}
\label{lst:hopformer}
\end{listing}

Here, we desribe the simple usage of Hopformer library \footnote{Data and code are available at \url{https://www.dropbox.com/scl/fo/q4t08x79w1jxq2tnkz15d/%
ACXuC5nO6yS17cH696oaP0g?rlkey=e9ogypv287u8232l06dzn0u28\&dl=0}.}. The \texttt{Predictor} wrapper exposes Hopformer through the same API as \texttt{autogluon.timeseries.Predictor}, so existing AutoGluon \citep{shchur2023autogluon} scripts require only a one–line replacement.  Listing~\ref{lst:hopformer} trains a Hopformer model on the \textsc{Sale1} dataset and produces
24‑step forecasts.  Any AutoGluon regressor (e.g.\ XGBoost \citep{chen2016xgboost}, LightGBM \citep{ke2017lightgbm}, CatBoost \citep{prokhorenkova2018catboost}) can be dropped into the expert pool by editing the \texttt{regressor\_types} list, and fine‑tuning can be toggled with a single Boolean flag.

\section{Implementation Details}
\label{sec:pseudocode}

We describe the detailed implementation of Hopformer, which operates through two main phases during training and prediction:

\textbf{Fitting Phase:} The algorithm first learns to model covariate effects using an ensemble of cross-sectional regressors, then fine-tunes a pre-trained foundation model on the resulting residuals. This sequential approach ensures that the foundation model focuses exclusively on temporal patterns that cannot be explained by covariates.

\textbf{Prediction Phase:} For forecasting, the algorithm applies the learned decomposition in reverse: it first extracts residuals from the context data, predicts future residuals using the foundation model, then reconstructs the final forecasts by adding back the predicted covariate effects.

This design enables the framework to leverage both the covariate modeling capabilities of traditional ML and the temporal pattern recognition abilities of foundation models, while avoiding the interference that would occur if both components were learned simultaneously.

\begin{algorithm}
\caption{Hopformer Fitting}
\label{alg:training}
\begin{algorithmic}[1]
\REQUIRE Training data $\mathcal{D} = \{(y_{i,t}, \mathbf{x}_{i,t}, \mathbf{s}_i) \mid i \in \mathcal{I}, t \in \mathcal{T}_i\}$ where $y_{i,t}$ is target, $\mathbf{x}_{i,t}$ are covariates, $\mathbf{s}_i$ are static features
\REQUIRE Prediction horizon $H$, context length $L$, model path $\text{path}_{\text{foundation}}$, Lora configuration $\boldsymbol{\Theta}_{\text{lora}}$
\ENSURE Fitted model $\boldsymbol{\Theta} = \{\theta_{\text{scaler}}, \theta_{\text{reg}}, \theta_{\text{foundation}}\}$

\STATE \textbf{Data Preprocessing}
\STATE $\theta_{\text{scaler}} \leftarrow \text{LocalStandardScaler}()$
\STATE $\mathcal{D}_{\text{scaled}} \leftarrow \theta_{\text{scaler}}.\text{fit\_transform}(\mathcal{D})$ \COMMENT{Normalize targets per time series}

\STATE \textbf{Phase 1: Covariate Effect Modeling}
    \STATE $\theta_{\text{reg}} \leftarrow \text{CrossSectionalRegressor}(\text{models}=\mathcal{M}, \text{hyperparams}=\mathcal{H}_{\text{reg}})$
    \STATE $\mathcal{R} \leftarrow \theta_{\text{reg}}.\text{fit\_transform}(\mathcal{D}_{\text{scaled}}, L)$ \COMMENT{Get residuals with context length}

\STATE \textbf{Phase 2: Foundation Model Fine-tuning}
    \STATE $\text{hyperparams} \leftarrow \{\text{model\_path}: \text{path}_{\text{foundation}}, \text{lora\_config}: \boldsymbol{\Theta}_{\text{lora}},
    \text{horizon}: H, \text{context}: L\}$
    \STATE $\theta_{\text{foundation}} \leftarrow \text{FoundationModel}.\text{fit}(\mathcal{R}, \text{hyperparams})$

\RETURN $\boldsymbol{\Theta} = \{\theta_{\text{scaler}}, \theta_{\text{reg}}, \theta_{\text{foundation}}\}$
\end{algorithmic}
\end{algorithm}

\begin{algorithm}
\caption{Residual Chronos Prediction}
\label{alg:prediction}
\begin{algorithmic}[1]
\REQUIRE Context data $\mathcal{D}_{\text{context}} = \{(y_{i,t}, \mathbf{x}_{i,t}) \mid i \in \mathcal{I}, t \in \mathcal{T}_{\text{context}}\}$
\REQUIRE Future covariates $\mathbf{X}_{\text{future}} = \{\mathbf{x}_{i,t} \mid i \in \mathcal{I}, t \in \mathcal{T}_{\text{future}}\}$
\REQUIRE Static features $\mathbf{S} = \{\mathbf{s}_i \mid i \in \mathcal{I}\}$
\REQUIRE Context length $L$
\REQUIRE Fitted model $\boldsymbol{\Theta} = \{\theta_{\text{scaler}}, \theta_{\text{reg}}, \theta_{\text{foundation}}\}$
\ENSURE Forecasts $\hat{\mathbf{Y}} = \{\hat{y}_{i,t} \mid i \in \mathcal{I}, t \in \mathcal{T}_{\text{future}}\}$
\STATE \textbf{Data Preprocessing}
\STATE $\mathcal{D}_{\text{scaled}} \leftarrow \theta_{\text{scaler}}.\text{transform}(\mathcal{D}_{\text{context}})$ \COMMENT{Apply learned scaling}

\STATE \textbf{Phase 1: Extract Residuals and Predict}
\STATE $\mathcal{R}_{\text{context}} \leftarrow \theta_{\text{reg}}.\text{transform}(\mathcal{D}_{\text{scaled}}, L)$ \COMMENT{Remove covariate effects with context length}
\STATE $\hat{\mathcal{R}}_{\text{future}} \leftarrow \theta_{\text{foundation}}.\text{predict}(\mathcal{R}_{\text{context}}, \mathbf{X}_{\text{future}}, L)$ \COMMENT{Predict residuals with context length}

\STATE \textbf{Phase 2: Reconstruct Full Predictions}
\STATE $\hat{\mathbf{Y}}_{\text{scaled}} \leftarrow \theta_{\text{reg}}.\text{inverse\_transform}(\hat{\mathcal{R}}_{\text{future}}, \mathbf{X}_{\text{future}}, \mathbf{S}, \mathcal{D}_{\text{scaled}})$

\STATE \textbf{Inverse Scaling}
\STATE $\hat{\mathbf{Y}} \leftarrow \theta_{\text{scaler}}.\text{inverse\_transform}(\hat{\mathbf{Y}}_{\text{scaled}})$ \COMMENT{Return to original scale}

\RETURN $\hat{\mathbf{Y}}$ \COMMENT{Final forecasts with quantile predictions}
\end{algorithmic}
\end{algorithm}

\section*{Notation}
\begin{itemize}
\item $\mathcal{I}$: Set of time series identifiers
\item $\mathcal{T}_i$: Time indices for series $i$
\item $H$: Prediction horizon length
\item $L$: Context length for foundation model
\item $\mathcal{Q}$: Set of quantile levels $\{0.1, 0.2, \ldots, 0.9\}$
\item $\mathcal{M}$: Set of regression models $\{\text{XGBoost}, \text{RandomForest}, \text{CatBoost}, \ldots\}$
\item $\boldsymbol{\Theta}_{\text{lora}}$: LoRA configuration $\{\text{rank } r, \text{alpha}, \text{dropout}, \text{target\_modules}\}$
\end{itemize}

\section*{Model Components}
\begin{itemize}
\item \textbf{LocalStandardScaler} ($\theta_{\text{scaler}}$): Per-series z-score normalization
\item \textbf{CrossSectionalRegressor} ($\theta_{\text{reg}}$): Ensemble of ML models for covariate effects
\item \textbf{FoundationModel} ($\theta_{\text{foundation}}$): Pre-trained transformer (Chronos/Moirai/LagLlama)
\end{itemize}



\clearpage
\bibliographystyle{abbrvnat}

\bibliography{references}

\end{document}